\documentclass[11pt]{article}

\usepackage[utf8x]{inputenc}
\usepackage{scalerel}
 \usepackage{xcolor}

\usepackage[margin=1in]{geometry}

\usepackage{url}            
\usepackage{cite}
\usepackage{hyperref}

\definecolor{darkpastelgreen}{rgb}{0.01, 0.75, 0.24}
	\definecolor{cadmiumgreen}{rgb}{0.0, 0.42, 0.24}
\definecolor{armygreen}{rgb}{0.29, 0.33, 0.13}
\hypersetup{colorlinks,linkcolor={blue},citecolor={darkpastelgreen},urlcolor={red}}

\usepackage{booktabs}       
\usepackage{amsfonts}       
\usepackage{nicefrac}       
\usepackage{microtype}      
\usepackage{url}

\usepackage{amsmath}
\usepackage{color}
\usepackage{amssymb}
\usepackage{amsthm}
\usepackage{pifont}
\usepackage[ruled,vlined]{algorithm2e}

\usepackage{paralist}
\usepackage{dsfont}
\usepackage{natbib}
\usepackage{csquotes}
\usepackage{comment}
\usepackage{mathtools}

\usepackage{microtype}
\usepackage{graphicx}
\usepackage{booktabs} 
\usepackage{dirtytalk}
\usepackage{subfiles}

\usepackage{caption}
\usepackage{subcaption}

\usepackage{xspace}
\usepackage{forloop}
\usepackage{multirow}
\usepackage{algorithm,algorithmic,refcount}

\usepackage{graphicx}
\usepackage[shortlabels]{enumitem}
\usepackage{bbm}
\usepackage{float}

\title{Exploiting Shared Representations for Personalized Federated Learning}

\author{Liam Collins\thanks{Department of Electrical and Computer Engineering, 
The University of Texas at Austin, Austin, TX,  USA. \qquad\qquad\{liamc@utexas.edu, mokhtari@austin.utexas.edu, sanjay.shakkottai@utexas.edu\}.}, \quad Hamed Hassani\thanks{Department of Electrical and Systems Engineering, University of Pennsylvania, Philadelphia, PA, USA. \qquad\qquad \{hassani@seas.upenn.edu\}.} ,\quad  Aryan Mokhtari$^*$ ,\quad Sanjay Shakkottai$^*$}



\date{}

\setlength{\parindent}{0em}
\setlength{\parskip}{1em}

\begin{document}

\maketitle
\newcommand{\ones}{\mathbf{1}}
\newcommand{\integers}{{\mbox{\bf Z}}}
\newcommand{\symm}{{\mbox{\bf S}}}  

\newcommand{\nullspace}{{\mathcal N}}
\newcommand{\range}{{\mathcal R}}
\newcommand{\Rank}{\mathop{\bf Rank}}
\newcommand{\Tr}{\mathop{\bf Tr}}
\newcommand{\diag}{\mathop{\bf diag}}
\newcommand{\card}{\mathop{\bf card}}
\newcommand{\rank}{\mathop{\bf rank}}
\newcommand{\conv}{\mathop{\bf conv}}
\newcommand{\prox}{\mathbf{prox}}

\newcommand{\ind}{\mathds{1}}
\newcommand{\E}{\mathbb{E}}
\newcommand{\Prob}{\mathbb{P}}
\newcommand{\bigO}{\mathcal{O}}
\newcommand{\B}{\mathcal{B}}
\newcommand{\s}{\mathcal{S}}
\newcommand{\Ev}{\mathcal{E}}
\newcommand{\R}{\mathbb{R}}
\newcommand{\Co}{{\mathop {\bf Co}}} 
\newcommand{\dist}{\mathop{\bf dist{}}}
\newcommand{\argmin}{\mathop{\rm argmin}}
\newcommand{\argmax}{\mathop{\rm argmax}}
\newcommand{\epi}{\mathop{\bf epi}} 
\newcommand{\Vol}{\mathop{\bf vol}}
\newcommand{\dom}{\mathop{\bf dom}} 
\newcommand{\intr}{\mathop{\bf int}}
\newcommand{\sign}{\mathop{\bf sign}}
\def\polylog{\operatorname{polylog}}
\def\dist{\operatorname{dist}}

\newcommand{\cf}{{\it cf.}}
\newcommand{\eg}{{\it e.g.}}
\newcommand{\ie}{{\it i.e.}}
\newcommand{\etc}{{\it etc.}}

\newtheorem{innercustomthm}{Theorem}
\newenvironment{customthm}[1]
  {\renewcommand\theinnercustomthm{#1}\innercustomthm}
  {\endinnercustomthm}

\newtheorem{remark}{Remark}
\newtheorem{definition}{Definition}
\newtheorem{corollary}{Corollary}
\newtheorem{proposition}{Proposition}
\newtheorem{lemma}{Lemma}
\newtheorem{fact}{Fact}
\newtheorem{assumption}{Assumption}
\newtheorem{claim}{Claim}

\makeatletter
\newcommand*\rel@kern[1]{\kern#1\dimexpr\macc@kerna}
\newcommand*\widebar[1]{%
  \begingroup
  \def\mathaccent##1##2{%
    \rel@kern{0.8}%
    \overline{\rel@kern{-0.8}\macc@nucleus\rel@kern{0.2}}%
    \rel@kern{-0.2}%
  }%
  \macc@depth\@ne
  \let\math@bgroup\@empty \let\math@egroup\macc@set@skewchar
  \mathsurround\z@ \frozen@everymath{\mathgroup\macc@group\relax}%
  \macc@set@skewchar\relax
  \let\mathaccentV\macc@nested@a
  \macc@nested@a\relax111{#1}%
  \endgroup
}
\makeatother

\newcommand{\numberthis}{\addtocounter{equation}{1}\tag{\theequation}}

\newcommand{\simiid}{\overset{\text{i.i.d.}}{\sim}}

\begin{abstract}
Deep neural networks have shown the ability to extract universal feature representations from data such as images and text that have been useful for a variety of learning tasks. However, the fruits of representation learning have yet to be fully-realized in federated settings. Although data in federated settings is often non-i.i.d. across clients, the success of centralized deep learning suggests that data often shares a global {\em feature representation}, while the statistical heterogeneity across clients or tasks is concentrated in the {\em labels}. Based on this intuition, we propose a novel federated learning framework and algorithm for learning a shared data representation across clients and unique local heads for each client. Our algorithm harnesses the distributed computational power across clients to perform many local-updates with respect to the low-dimensional local parameters for every update of the representation. We prove that this method obtains linear convergence to the ground-truth representation with near-optimal sample complexity in a linear setting, demonstrating that it can efficiently reduce the problem dimension for each client. This result is of interest beyond federated learning to a broad class of problems in which we aim to learn a shared low-dimensional representation among data distributions, for example in meta-learning and multi-task learning.
Further,  extensive experimental results show the empirical improvement of our method over alternative personalized federated learning approaches in federated environments with  heterogeneous data. 
\end{abstract}
\newpage

\section{Introduction}
\label{sec:intro}



Many of the most heralded successes of modern machine learning have come in {\em centralized} settings, wherein a single model is trained on a large amount of centrally-stored data. The growing number of data-gathering devices, however, calls for a distributed architecture to train models. Federated learning aims at addressing this issue by providing a platform in which a group of clients collaborate to learn effective models for each client by leveraging the local computational power, memory, and data of all clients \citep{mcmahan2017communication}. The task of coordinating between the clients is fulfilled by a central server that combines the models received from the clients at each round and broadcasts the updated information to them. Importantly, the server and clients are restricted to methods that satisfy communication and privacy constraints, preventing them from directly applying centralized techniques. 

However, one of the most important challenges in federated learning is the issue of \textit{data heterogeneity}, where the underlying data distribution of client tasks could be substantially different from each other. In such settings, if the server and clients learn a single shared model (e.g., by minimizing average loss), the resulting model could perform poorly for many of the clients in the network (and also not generalize well across diverse data \citep{jiang2019improving}). 
In fact, for some clients, it might be better to simply use their own local data (even if it is small) to train a local model; see Figure~\ref{fig:fedrep1}. Finally, the (federated) trained model may not generalize well to unseen clients that have not participated in the training process. These issues raise the question: 
\begin{quote}
\vspace{-0mm}
  ``\textit{How can we exploit the data and computational power of all clients in data heterogeneous settings to learn a personalized model for each client?}''
  \vspace{-0mm}
\end{quote}
We address this question by taking  advantage of the common 
representation among clients. Specifically, we view the data heterogeneous federated learning problem as $n$ parallel learning tasks that they possibly have some common structure, and {\em our goal is to learn and exploit this common representation to improve the quality of each client's model}. This approach draws inspiration from centralized learning, where we have  witnessed success in training multiple tasks or learning multiple classes simultaneously by leveraging a common  (low-dimensional) representation (e.g. in image classification, next-word prediction) \citep{bengio2013representation, lecun2015deep}. 

\noindent \textbf{Main Contributions.} We introduce a novel federated learning framework and an associated algorithm for data heterogeneous settings. We summarize our main contributions  below.


\begin{figure}[t]
\begin{center}
\centerline{\includegraphics[width=0.55\columnwidth]{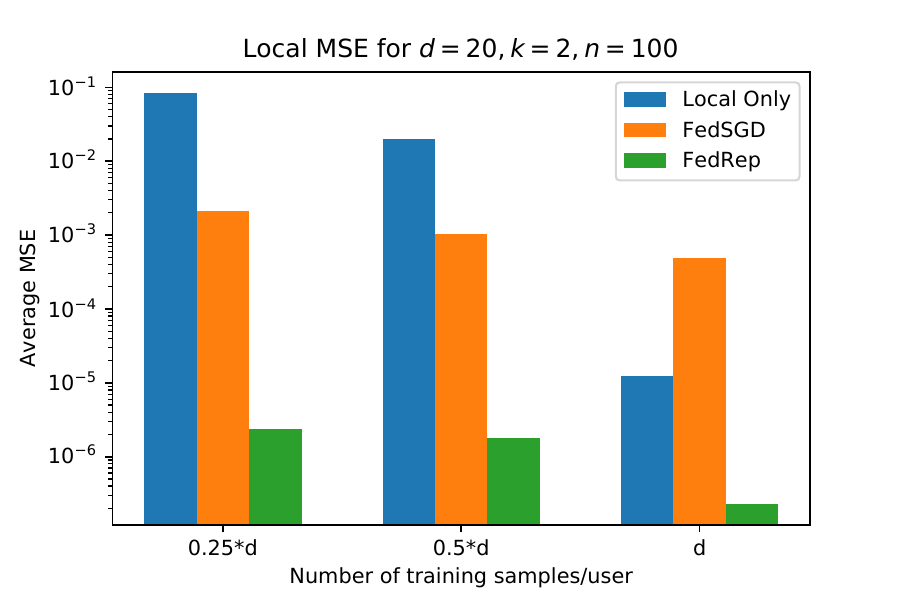}}
\vspace{-2mm}
\caption{Local only training suffers in small-training data regimes, whereas training a single global model with FedSGD cannot overcome client heterogeneity even when the number of training samples is large. FedRep exploits a common representation of the clients to achieve small error in all cases.}
\label{fig:fedrep1}
\end{center}
\vskip -0.3in
\end{figure}

\begin{itemize}
    \item[(i)]  \textbf{FedRep Algorithm.} {Federated Representation Learning (FedRep)} leverages all of the data stored across clients to learn a global low-dimensional representation using gradient-based updates. Further, it enables each client to compute a personalized, low-dimensional classifier, which we term as the client's head, that accounts for the unique labeling of  each client's local data.
    
    
    
    
    
\item[(ii)] \textbf{Optimization for linear representation learning.} 
    We show that FedRep converges to the ground-truth representation at an {\em exponentially fast rate} in the case that each client aims to solve a linear regression problem with a two-layer linear neural network. In this special case, we reduce FedRep to alternating minimization (for the heads)-descent (for the representation). 
    Our analysis shows  that this simple algorithm requires only $\mathcal{O}( ( \nicefrac{d}{n}+ \log  (n))\log(\nicefrac{1}{\epsilon}))$ samples per client to reach an $\epsilon$-accurate representation, where $n$ is the number of clients and $d$ is the dimension of the data. This result is of interest beyond federated learning since it shows that alternating minimization-descent efficiently solves  the  linear multi-task representation learning problem considered in \cite{maurer2016benefit,tripuraneni2020provable,du2020fewshot}.

    \item[(iii)] \textbf{Empirical Results.} Through a combination of synthetic and real datasets (CIFAR10, CIFAR100, FEMNIST, Sent140) we show the benefits of FedRep in: (a) leveraging many local updates, (b) robustness to different levels of heterogeneity, and (c) generalization to new clients. 
    Our experiments indicate that FedRep outperforms several important baselines in heterogeneous settings that share a global representation.



\end{itemize}

\noindent \textbf{Benefits of FedRep.} FedRep has numerous advantages over standard federated learning  (in which a single model is learned):

{\em (I) Provable gains of cooperation.} From our sample complexity bounds, it follows that with FedRep, the sample complexity per client scales as $\Theta(  \nicefrac{d}{n}+ \log(n))$. On the other hand, local learning (without any collaboration) has a sample complexity that scales as $\Theta(d).$ Thus, if $1 \ll n \ll e^{\Theta(d)}$ (see Section~\ref{sec:fed-analysis} for details), we expect benefits of collaboration through federation. When $d$ is large (as is typical in practice), $e^{\Theta(d)}$ is exponentially larger, and federation helps each client. { \em To the best of our knowledge, this is the first sample-complexity-based result for personalized federated learning that demonstrates the benefit of cooperation.}

{\em (II) Generalization to new clients.} For a new client, since a ready-made representation is available, the client only needs to learn a \textcolor{black}{head} with a low-dimensional representation of dimension $k$. Thus, its sample complexity scales only as $\Theta(k)$  instead of $\Theta(d)$ if no representation is learned.

{\em (III) More local updates.} By reducing the problem dimension, each client can make many local updates at each communication round, which is beneficial in learning its own individual head. This is unlike standard federated learning where multiple local updates in a heterogeneous setting moves each client {\em away} from the best averaged representation, and thus {\em hurts} performance.

\subsection{Related Work.}  \label{sec:rw}
\textbf{Personalized Federated Learning.}
A variety of recent works have studied personalization in federated learning using, for example, local fine-tuning \citep{wang2019federated, yu2020salvaging}, meta-learning \citep{ chen2018federated,khodak2019adaptive, jiang2019improving,fallah2020personalized}, additive mixtures of local and global models \citep{hanzely2020federated, deng2020adaptive, mansour2020three}, and multi-task learning \citep{smith2017federated}. In all of these methods, each client's subproblem is still full-dimensional - there is no notion of learning a dimensionality-reduced set of local parameters. 
More recently, \citet{liang2020think} also proposed a representation learning method for federated learning, but their method attempts to learn many local representations and a single global head as opposed to a single global representation and many local heads.
Earlier, \citet{arivazhagan2019federated} presented an algorithm to learn local heads and a global network body, but their local procedure jointly updates the head and body (using the same number of updates), and they did not provide any theoretical justification for their proposed method. Meanwhile, another line of work has studied federated learning in heterogeneous settings  \citep{karimireddy2020scaffold, wang2020tackling, pathak2020fedsplit, haddadpour2020federated, reddi2020adaptive,reisizadeh2020straggler,mitra2021achieving}, and the optimization-based insights from these works may be used to supplement our formulation and algorithm.


\textbf{Linear representation learning.} 
The idea to learn a shared representation of tasks is a classical approach in multi-task learning \citep{baxter2000model, bengio2013representation, lecun2015deep, ando2005framework, rish2008closed, pontil2013excess, balcan2015efficient,  denevi2018learning,  bullins2019generalize, tripuraneni2020theory, kong2020meta}. In particular, we aim to learn a low-dimensional subspace in which the ground-truth regressors for a collection of linear regression tasks lie. 
This problem is most similar to the linear representation learning problem considered by \cite{maurer2016benefit, tripuraneni2020provable, du2020fewshot}. 
All three of these works show statistical rates of convergence of solutions to the ERM objective to the ground-truth representation, with \cite{tripuraneni2020provable} and \cite{du2020fewshot} improving the $\mathcal{O}({\nicefrac{d}{n}})$ rate from \cite{maurer2016benefit} (in the realizable case) to $\mathcal{O}({\nicefrac{d}{mn}})$. \cite{du2020fewshot} also provide similar complexity-based  results for learning nonlinear representations with access to an ERM oracle, but their results in the linear case require $m=\Omega(d)$ samples per task, mitigating the benefit of cooperation.  
\cite{tripuraneni2020provable}  further present and analyze a Method-of-Moments-based algorithm to solve the ERM problem, which achieves sample complexity per task with efficient dimension-dependence (${\Theta}(\nicefrac{d}{n})$) but requires $m = {\Omega}(\nicefrac{1}{n\epsilon^2})$ samples per task to find an $\epsilon$-accurate representation. In contrast, we show that alternating minimization-descent requires only $m = \Omega((\nicefrac{d}{n} + \log(n))\log(\nicefrac{1}{\epsilon}))$ samples per client to obtain a representation with  $\epsilon$-accuracy.



\section{Problem Formulation}
The generic form of federated learning with $n$ clients is
\begin{equation}
    \min_{(q_1,\dots,q_n) \in \mathcal{Q}_n} \frac{1}{n}\sum_{i=1}^n  f_i(q_i) \label{obj1},
\end{equation}
where $f_i$ and $q_i$ are the error function and learning model for the $i$-th client, respectively, and $\mathcal{Q}_n$ is the space of feasible sets of $n$ models. 
We consider a supervised setting in which the data for the $i$-th client is generated by a distribution $(\mathbf{x}_i,y_i) \sim \mathcal{D}_i$. The learning model $q_i : \mathbb{R}^d \rightarrow \mathcal{Y}$ maps inputs $\mathbf{x}_i \in \mathbb{R}^d$ to predicted labels $q_i(\mathbf{x}_i)  \in \mathcal{Y}$, which we would like to resemble the true labels $y_i$.
 The error $f_i$ is in the form of an expected risk over $\mathcal{D}_i$, namely
    $f_i(q_i) \coloneqq \mathbb{E}_{(\mathbf{x}_i,y_i)\sim \mathcal{D}_i} [\ell(q_i(\mathbf{x}_i),y_i)]$,
where $\ell:\mathcal{Y}\times\mathcal{Y}\rightarrow\mathbb{R}$ is a loss function that penalizes the distance of $q_i(\mathbf{x}_i)$ from~$y_i$. 

In order to minimize $f_i$, the $i$-th client accesses a dataset of $M_i$ labeled samples $\{ (\mathbf{x}_i^j, y_i^j)\}_{j=1}^{M_i}$ from $\mathcal{D}_i$ for training. 
Federated learning addresses settings in which the $M_i$'s are typically small relative to the problem dimension while the number of clients $n$ is large. Thus, clients may not be able to obtain solutions $q_i$ with small expected risk by training completely locally on {\em only} their $M_i$ local samples. Instead, federated learning enables the clients to cooperate,
by exchanging messages with a central server, in order to  learn models using the cumulative data of all the clients.  



Standard approaches to federated learning aim at learning a {\em single} shared model $q=q_1=\dots=q_n$ that performs well on average across the clients \citep{mcmahan2017communication, li2018federated}. In this way, the clients aim to solve a special version of Problem~\eqref{obj1}, which is to minimize $(1/n) \sum_i f_i(q)$ over the choice of the shared model $q$.  However, this approach may yield a solution that performs poorly in heterogeneous settings where the data distributions $\mathcal{D}_i$ vary across the clients. Indeed, in the presence of data heterogeneity, the error functions $f_i$ will have different forms and their minimizers are not the same. Hence, learning a shared model $q$ may not provide good solution to  Problem~\eqref{obj1}. This necessities the search for more personalized solutions $\{q_i\}$ that can be learned in a federated manner using the clients' data. 

\noindent \textbf{Learning a Common Representation.} We are motivated by insights from centralized machine learning that suggest that heterogeneous data distributed across tasks may share a common representation despite having different labels \citep{bengio2013representation, lecun2015deep}; e.g., shared features across many types of images, or across word-prediction tasks.  
Using this common (low-dimensional) representation, the labels for each client can be simply learned using a linear classifier or a shallow neural network.

Formally, we consider a setting consisting of a global representation $\phi: \mathbb{R}^d \rightarrow \mathbb{R}^k$, which maps data points to a lower space of size $k$, and client-specific heads $h_i : \mathbb{R}^k \rightarrow \mathcal{Y}$.  The model for the $i$-th client is the composition of the client's local parameters and the representation: $q_i(\mathbf{x}) = (h_i \circ \phi)(\mathbf{x})$. 
Critically, $k \ll d$, meaning that the \textcolor{black}{number of parameters} that must be learned locally by each client is small. Thus, we can assume that any client's optimal classifier for any {\em fixed representation} is \textcolor{black}{easy} to compute, which motivates the following re-written global objective:
\begin{align}
    \min_{\phi \in \Phi} \ \frac{1}{n} \sum_{i=1}^n  \min_{h_i \in \mathcal{H}}\ f_i(h_i \circ \phi),  \label{obj}
\end{align}
where $\Phi$ is the class of feasible representations and $\mathcal{H}$ is the class of feasible heads. In our proposed scheme, clients cooperate to learn the global model using all clients' data, while they use their local information to learn their personalized head. We discuss this in detail in Section~\ref{sec:alg}.


\begin{figure}[t]
\begin{center}
\centerline{\includegraphics[width=0.5\columnwidth]{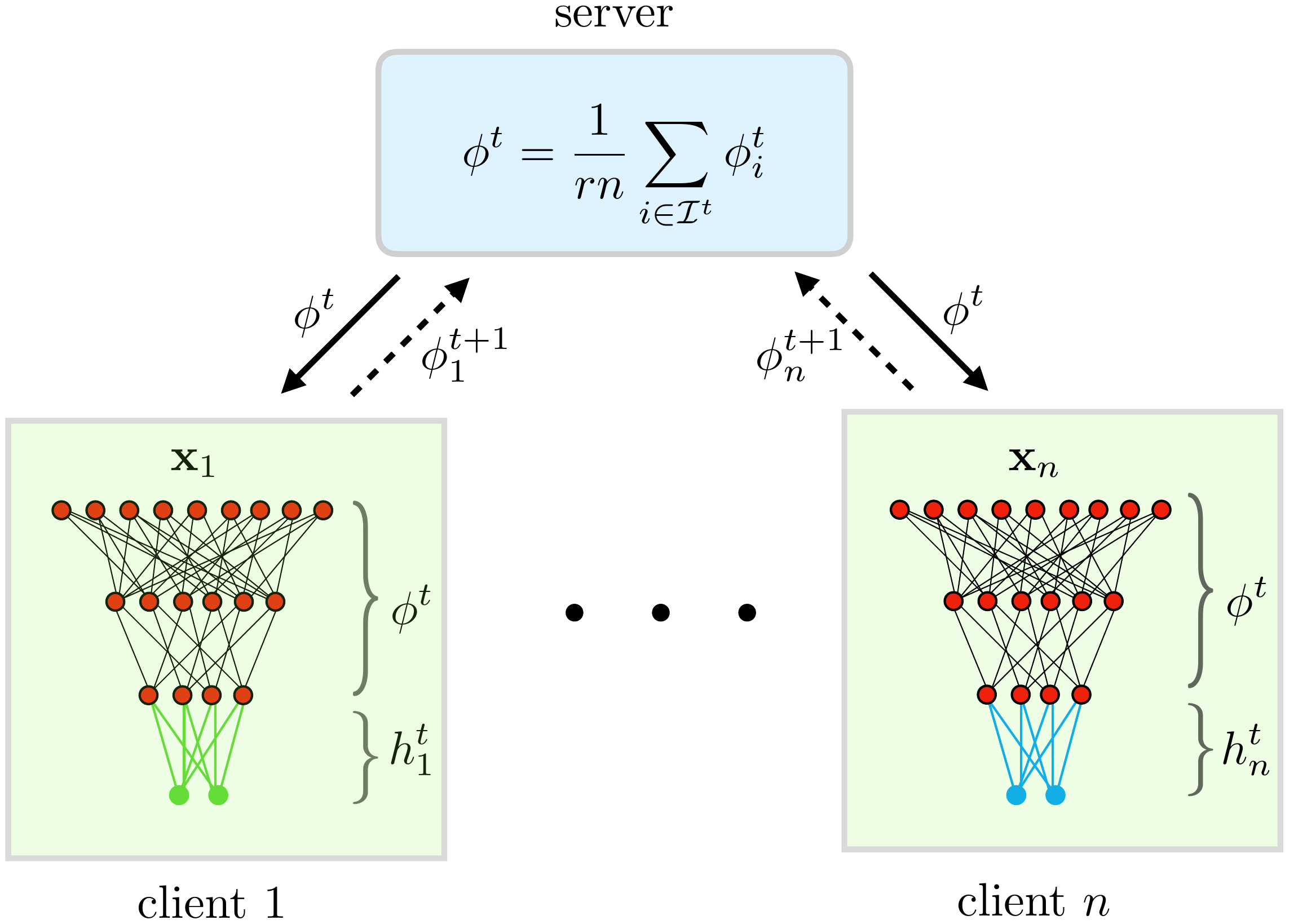}}
\vspace{-2mm}
\caption{Federated representation learning structure where clients and the server aim at learning a global representation $\phi$ together, while each client $i$ learns its unique head $h_i$ locally. }
\label{fig:fedrep2}
\end{center}
\vskip -0.3in
\end{figure}


\subsection{Comparison with Standard Federated Learning} \label{sec:pf_example}

{To formally demonstrate the advantage of our formulation over the standard (single-model) federated learning formulation in heterogeneous settings with a shared representation, we study a linear representation setting with  quadratic loss. As we will see below,  standard federated learning {\em cannot recover the underlying representation in the face of heterogeneity}, while our formulation does indeed recover it.}


 Consider a setting in which the functions $f_i$ are quadratic losses, the representation $\phi$ is a projection onto a $k$-dimensional subspace of $\mathbb{R}^d$ given by matrix $\mathbf{B} \in \mathbb{R}^{d \times k}$, and the $i$-th client's local head $h_i$ is a vector $\mathbf{w}_i \in \mathbb{R}^k$. In this setting, we model the local data of clients $\{\mathcal{D}_i\}_i$ such that
$ y_i =  {\mathbf{w}_i^\ast}^\top {\mathbf{B}^\ast}^\top\mathbf{x}_i$
for some ground-truth representation $\mathbf{B}^\ast \in \mathbb{R}^{d\times k}$ and local heads $\mathbf{w}_i^\ast \in \mathbb{R}^k$.
 This setting will be described in detail in Section~\ref{sec:linear}. In particular, one can show that the expected error over the data distribution $\mathcal{D}_i$ has the following form:
$
    f_i(\mathbf{w}_i \circ \mathbf{B}) \coloneqq \frac{1}{2}\|\mathbf{B} \mathbf{w}_i - \mathbf{B}^\ast \mathbf{w}^\ast_i\|_2^2
$. Consequently, Problem~\eqref{obj} becomes
\begin{align}
\min_{\mathbf{B} \in \mathbb{R}^{d \times k}, \mathbf{w}_i,\dots,\mathbf{w}_n \in\mathbb{R}^k} \frac{1}{2n}\sum_{i=1}^n\|\mathbf{B} \mathbf{w}_i - \mathbf{B}^\ast \mathbf{w}^\ast_i\|_2^2\label{us}.
\end{align}
In contrast, standard federated learning methods, which aim to learn a shared model $(\mathbf{B}, \mathbf{w})$ for all the clients, solve
\begin{align}
\min_{\mathbf{B} \in \mathbb{R}^{d \times k}, \mathbf{w} \in\mathbb{R}^k}\frac{1}{2n}\sum_{i=1}^n\|\mathbf{B} \mathbf{w} - \mathbf{B}^\ast \mathbf{w}^\ast_i\|_2^2. \label{them}
\end{align}
Let $(\hat{\mathbf{B}}, \{\hat{{\mathbf{w}}_i}\}_i) 
$  denote a global minimizer of  
\eqref{us}.  We thus have
$\hat{\mathbf{B}} \hat{\mathbf{w}}_i = \mathbf{B}^\ast \mathbf{w}^\ast_i$ for all $i\in [n]$. Also, it is not hard to see that $({\mathbf{B}}^\diamond, \mathbf{w}^\diamond) 
$ is a global minimizer of \eqref{them} if and only if ${\mathbf{B}}^\diamond \mathbf{w}^\diamond = {\mathbf{B}}^\ast (\frac{1}{n}\sum_{i=1}^n \mathbf{w}^\ast_i)
$. Thus, our formulation finds an exact solution with zero global error, whereas standard federated learning has global error of 
$
 \frac{1}{2n}\sum_{i=1}^n \| \frac{1}{n} \mathbf{B}^\ast\sum_{i'=1}^n (\mathbf{w}^\ast_{i'} - \mathbf{w}^\ast_i)\|_2^2
$, which grows with the heterogeneity of the $\mathbf{w}_i^\ast$. Moreover, since solving our formulation provides $n$ matrix equations, we can fully recover the column space of $\mathbf{B}^\ast$ as long as $\mathbf{w}_i^*$'s span $\mathbb{R}^k$. In contrast, solving \eqref{them} yields only one matrix equation, so there is no hope to recover the column space of $\mathbf{B}^\ast$ for any $k>1$.

\section{FedRep Algorithm} \label{sec:alg}
FedRep solves Problem \eqref{obj} by distributing the computation across clients. The server and clients aim to learn the parameters of the global representation together, while the $i$-th client aims to learn its unique local head 
locally (see Figure~\ref{fig:fedrep2}).  To do so, FedRep alternates between client updates and a server update on each communication round. 

\textbf{Client Update.} On each round, a constant fraction $r\in(0,1]$ of the clients are selected to execute a client update. In the client update, client $i$ makes $\tau_h$ local gradient-based updates to solve for its optimal head given the current global representation $\phi^t$ communicated by the server. Namely, for $s = 1,\dots,\tau_h$, client $i$ updates its head as follows:
$$h_i^{t,s} = \texttt{GRD}( f_i(h_i^{t,s-1}, \phi^t), h_i^{t,s-1},\alpha),$$ where $\texttt{GRD}(f, h, \alpha)$ is generic notation for an update of the variable $h$ using a gradient of function $f$ with respect to $h$ and the step size $\alpha$. For example, $\texttt{GRD}( f_i(h_i^{t,s-1},\phi^t), h_i^{t,s-1},\alpha)$ can be a step of gradient descent, stochastic gradient descent (SGD), SGD with momentum, etc. Typically, we will choose $\tau_h$ to be large, since more local epochs for the head means that we come closer to solving the inner minimization in \eqref{obj}, which means that the updates for the representation are more accurate. 

Next, the client executes $\tau_\phi$ local updates for its representation, starting from the global representation $\phi^{t-1}$: $$\phi_i^{t,s} = \texttt{GRD}( f_i(h_i^{t,\tau_h}, \phi_i^{t,s-1}), \phi_i^{t,s-1},\alpha),$$ for $s=1,\dots,\tau_\phi$. 

\textbf{Server Update.} Once the local updates with respect to the head and representation finish, the client participates in the server update by sending its locally-updated representation $\phi_{i}^{t,\tau_\phi}$
to the server. The server then averages the local updates to compute the next representation $\phi^{t}$. The entire procedure is outlined in Algorithm \ref{alg:general}.

\begin{algorithm}[tb] 
\caption{\texttt{FedRep}}
\begin{algorithmic} \label{alg:general}
\STATE \textbf{Parameters:} Participation rate $r$, step size $\alpha$; number of local updates for the head $\tau_h$ and for the representation $\tau_\phi$; number of communication rounds $T$.
\STATE Initialize $\phi^0, h_1^0, \dots, h_n^0$
 \FOR{$t=1 ,2, \dots, T$}
  \STATE Server receives a batch of clients $\mathcal{I}^t$ of size $rn$\;
  \STATE Server sends current representation $\phi^t$ to these clients\;
  \FOR{\textbf{each} client $i$ in $\mathcal{I}^t$}
  \STATE Client $i$ initializes $h_i^{t,0} \leftarrow h_i^{t-1,\tau_h}$\;
  \STATE Client $i$ makes $\tau_h$ updates to its head:\;
  \FOR{$s = 1$ {\bfseries to} $\tau_h$}
\STATE $h_i^{t,s} \leftarrow  \texttt{GRD}( f_i(h_i^{t,s-1},\phi^{t-1}), h_i^{t,s-1},\alpha)$ \;
  \ENDFOR
\STATE Client $i$ initializes $\phi_i^{t,0} \leftarrow \phi^{t-1}$\;
  \STATE Client $i$ makes $\tau_\phi$ updates to its representation:\;
\FOR{$s = 1$ {\bfseries to} $\tau_\phi$}
\STATE $\phi_i^{t,s} \leftarrow  \texttt{GRD}( f_i(h_i^{t,\tau_h},\phi_i^{t,s-1}), \phi_i^{t,s-1},\alpha)$ \;
  \ENDFOR
    \STATE Client $i$ sends updated representation $\phi_i^{t,\tau_\phi}$ to server \;
  \ENDFOR
  \FOR{\textbf{each} client $i$ not in $\mathcal{I}^t$,} 
  \STATE Set $h_i^{t,\tau_h} \leftarrow h_i^{t-1,\tau_h}$\;
  \ENDFOR
  \STATE Server computes the new representation as \\ $\quad \quad \phi^{t}=\frac{1}{rn} \sum_{i\in \mathcal{I}^t} \phi_{i}^{t,\tau_\phi}$ \;
 \ENDFOR
 \end{algorithmic}
\end{algorithm}

\section{Low-Dimensional Linear Representation} \label{sec:linear}

In this section, we analyze an instance of Problem~\eqref{obj} with quadratic loss functions and linear models, as discussed in Section \ref{sec:pf_example}. Here, each client's problem is to solve a linear regression with a two-layer linear neural network. 
In particular, each client $i$ attempts to find a shared global projection onto a low-dimension subspace $\mathbf{B} \in \mathbb{R}^{d\times k}$ and a unique regressor $\mathbf{w}_i \in \mathbb{R}^{k}$ that together accurately map its samples $\mathbf{x}_i \in \mathbb{R}^d$ to labels $y_i \in \mathbb{R}$. The matrix $\mathbf{B}$ corresponds to the representation $\phi$, and $\mathbf{w}_i$ corresponds to local head $h_i$ for the $i$-th client. We thus have $(q_{h_i} \circ q_\phi)(\mathbf{x}_i) =  \mathbf{w}_i^\top  \mathbf{B}^\top \mathbf{x}_i$. Hence, the loss function for client $i$ is given by:
\begin{equation} \label{obj:client_exp}
     f_i(\mathbf{w}_i, \mathbf{B}) \coloneqq  \tfrac{1}{2}\mathbb{E}_{(\mathbf{x}_i,y_i) \sim \mathcal{D}_i}\left[({y}_i - \mathbf{w}_i^\top  \mathbf{B}^\top \mathbf{x}_i)^2 \right]
\end{equation}
meaning that the global objective is:
\begin{equation} \label{obj:glob_linear}
    \min_{\substack{\mathbf{B}\in \mathbb{R}^{d \times k}\\ \mathbf{W} \in \mathbb{R}^{n\times k}}}\! F(\mathbf{B},\mathbf{W}) \!\coloneqq\!   \frac{1}{2n}\sum_{i=1}^n  \mathbb{E}_{(\mathbf{x}_i,y_i)}\left[({y}_i - \mathbf{w}_i^\top  \mathbf{B}^\top \mathbf{x}_i)^2 \right]\!, 
\end{equation}
where $\mathbf{W} = [\mathbf{w}_1^\top,\dots,\mathbf{w}_n^\top] \in  \mathbb{R}^{n\times k}$ is the concatenation of client-specific heads. To evaluate the ability of FedRep to learn an accurate representation,
we model the local datasets $\{\mathcal{D}_i\}_i$ such that, for $i=1\dots,n$
$$
y_i =  {\mathbf{w}_i^\ast}^\top {\mathbf{B}^\ast}^\top\mathbf{x}_i,
$$
for some ground-truth representation $\mathbf{B}^\ast \in \mathbb{R}^{d\times k}$ and local heads $\mathbf{w}_i^\ast \in \mathbb{R}^k$--i.e. a standard regression setting. In other words, all of the clients' optimal solutions live in the same $k$-dimensional subspace of $\mathbb{R}^d$, where $k$ is assumed to be small.  
Moreover, we make the following standard assumption on the samples $\mathbf{x}_i$.
\begin{assumption}[Sub-gaussian design] \label{assump:subgauss} The samples $\mathbf{x}_{i} \in \mathbb{R}^d$ are i.i.d. with mean $\mathbf{0}$, covariance $\mathbf{I}_d$, and are $\mathbf{I}_d$-sub-gaussian, i.e. $\mathbb{E}[e^{ {\mathbf{v}^\top \mathbf{x}_{i}}}] \leq e^{\| \mathbf{v}\|_2^2/2}$ for all $\mathbf{v}\in \mathbb{R}^d$.
\end{assumption}


\subsection{FedRep} \label{sec:linear_fedrep}

We next discuss how FedRep tries to recover the optimal representation in this setting. First, the server and clients execute the Method of Moments to learn an initial representation. Then, client and server updates are executed in an alternating fashion as follows.

\textbf{Client Update.} As in Algorithm \ref{alg:general}, $rn$ clients are selected on round $t$ to update their current local head $\mathbf{w}_i^t$ and the global representation $\mathbf{B}^t$. Each selected client $i$ samples a fresh batch $\{\mathbf{x}_i^{t,j}, y_i^{t,j}\}_{j=1}^m$ of $m$ samples according to its local data distribution $\mathcal{D}_i$ to use for updating both its head and representation on each round $t$ that it is selected. That is, within the round,  client $i$ considers the batch loss 
\begin{equation} \label{obj:client}
      \hat{f}^t_i(\mathbf{w}_i^t, \mathbf{B}^t) \coloneqq  \frac{1}{2m}\sum_{j=1}^m ({y}_i^{t,j} - \mathbf{w}_i^{t^\top}  \mathbf{B}^{t^\top} \mathbf{x}_i^{t,j})^2. 
\end{equation}
Since $\hat{f}^t_i$ is strongly convex with respect to $\mathbf{w}_i^t$, the client can find an update for a local head that is $\epsilon$-close to the global minimizer of \eqref{obj:client} after at most $\log (1/\epsilon)$ local gradient updates. Alternatively, since the function is also quadratic, the client can solve for the optimal $\mathbf{w}$ directly in only $\mathcal{O}(mk^2 + k^3)$ operations. Thus, since FedRep calls for many local updates for the head, to simplify the analysis we assume each selected client obtains $\mathbf{w}_i^{t+1} = \argmin_\mathbf{w} \hat{f}^t_i(\mathbf{w}, \mathbf{B}^t)$ during each round of local updates.  

\textbf{Server Update.} After updating its head, client $i$ updates the global representation with one step of gradient descent using the same $m$ samples and sends the update to the server, as outlined in Algorithm \ref{alg:2}. Note that in practice, each client may execute multiple gradient-based updates before sending its updated representation back to the server, but here we consider the case that they make one step of gradient descent for simplicity. Once the server receives the representations, it averages them and orthogonalizes the resulting matrix to compute the new representation. 



\subsection{Analysis}
\label{sec:fed-analysis}

As mentioned earlier, in FedRep, each client $i$ perform an alternating minimization-descent method to solve its nonconvex objective in \eqref{obj:client}. This means the global loss over all clients at round $t$ is given by 
\begin{equation} \label{obj:global}
     \frac{1}{n}\sum_{i=1}^n \hat{f}^t_i(\mathbf{w}_i^t, \mathbf{B}^t) \coloneqq \frac{1}{2mn}\sum_{i=1}^n \sum_{j=1}^m ({y}_i^{t,j} - \mathbf{w}_i^{t^\top}  \mathbf{B}^{t^\top} \mathbf{x}_i^{t,j})^2. 
\end{equation}
This objective has many global minima, including all pairs of matrices $(\mathbf{Q}^{-1}\mathbf{W}^*, \mathbf{B}^* \mathbf{Q}^\top)$ where $\mathbf{Q}\in \mathbb{R}^{k \times k}$ is invertible, eliminating the possibility of exactly recovering the ground-truth factors $(\mathbf{W}^*, \mathbf{B}^*)$. Instead, the ultimate goal of the server is to
recover the ground-truth {\em representation}, i.e.,  the column space of $\mathbf{B}^*$.
To evaluate how closely the column space is recovered, we define the distance between subspaces as follows.


\begin{algorithm}[tb] 
\caption{\texttt{FedRep} for linear regression}
\begin{algorithmic} \label{alg:2}
\STATE \textbf{Input:} Step size $\eta$; number of rounds $T$, participation rate $r$.\;
 \STATE \textbf{Initialization:}  Each client $i\!\in\![n]$ sends $\mathbf{Z}_i\! \coloneqq\!\frac{1}{m}\sum_{j=1}^m (y_i^{0,j})^2 \mathbf{x}_i^{0,j} (\mathbf{x}_i^{0,j})^\top$ to server, server computes \;
   \vspace{-2mm}
 \STATE $$\quad \quad \quad \mathbf{U} \mathbf{D}\mathbf{U}^\top\!\leftarrow\! \text{rank-}k \text{ SVD}(\tfrac{1}{n}\textstyle{\sum_{i=1}^n\mathbf{Z}_i)}$$
   \vspace{-6mm}
 \STATE Server initializes $\mathbf{B}^0 \leftarrow \mathbf{U}$
 \FOR{$t=1 ,2, \dots, T$}
 \STATE Server receives a subset $\mathcal{I}^t$ of clients of size $rn$ \;
  \STATE Server sends current representation $\mathbf{B}^t$ to these clients\;
  \FOR{$i\in\mathcal{I}^t $} 
  \STATE \textbf{Client update:} \;
  \STATE Client $i$ samples a fresh batch of $m$ samples
  \STATE Client $i$ updates $\mathbf{w}_i$: \;
  \STATE $\quad\quad \mathbf{w}_i^{t+1} \gets \argmin_{\mathbf{w}} \hat{f}_i^t(\mathbf{w}, \mathbf{B}^t)$
  \STATE \;
  \vspace{-3mm}
  \STATE Client $i$ updates representation: \; 
  \STATE $
      \quad \quad \mathbf{B}_i^{t+1} \gets  \mathbf{B}^t - \eta \nabla_{\mathbf{B}}\hat{f}_i^{t}(\mathbf{w}_i^{t+1}, \mathbf{B}^t) $ \;
      \STATE \;
      \vspace{-3mm}
      \STATE Client $i$ sends $\mathbf{B}_i^{t+1}$ to the server \;
  \ENDFOR
  \STATE \textbf{Server update:} $\mathbf{\bar{B}}^{t+1} \gets \frac{1}{rn}\sum_{i \in \mathcal{I}^t} \mathbf{B}^{t+1}_i; \quad \mathbf{{B}}^{t+1}, \mathbf{{R}}^{t+1} \gets \text{QR}(\mathbf{\bar{B}}^{t+1})$ \;
 \ENDFOR
 \end{algorithmic}
\end{algorithm}

\begin{definition} The principal angle distance between the column spaces of $\mathbf{B}_1, \mathbf{B}_2 \in \mathbb{R}^{d \times k}$ is given by
\begin{equation}
\dist(\mathbf{B}_1, \mathbf{B}_2) \coloneqq \|\mathbf{\hat{B}}_{1,\perp}^\top \mathbf{\hat{B}}_{2} \|_2, 
\end{equation}
where $\mathbf{\hat{B}}_{1,\perp}$ and  $\mathbf{\hat{B}}_{2}$ are orthonormal matrices satisfying 
$\text{\em span}(\mathbf{\hat{B}}_{1,\perp}) = \text{\em span}(\mathbf{{B}}_{1})^\perp$ and $\text{\em span}(\mathbf{\hat{B}}_{2}) = \text{\em span}(\mathbf{{B}}_{2}).$
\end{definition}

The principal angle distance is a typical metric for measuring the distance between subspaces (e.g. \cite{Jain_2013}). Next, we make two standard assumptions. 


\begin{assumption}[Client diversity] \label{assump:tasks}
Let $\bar{\sigma}_{\min, \ast} \coloneqq \min_{\mathcal{I} \in [n], |\mathcal{I}| = rn} \sigma_{\min}(\frac{1}{\sqrt{rn}} \mathbf{W}^\ast_{\mathcal{I}})$, i.e. $\bar{\sigma}_{\min, \ast}$ is the minimum singular value of any matrix that can be obtained by taking $rn$ rows of $\frac{1}{\sqrt{rn}}\mathbf{W}^\ast$.  Then $\bar{\sigma}_{\min,\ast}>0$. 
\end{assumption}
Assumption \ref{assump:tasks} states that if we select any $rn$ clients, their optimal heads span $\mathbb{R}^k$. Indeed, this assumption is weak as we expect the number of participating clients $rn$ to be substantially larger than $k$. Note that if we do not have client solutions that span $\mathbb{R}^k$,
recovering $\mathbf{B}^*$ would be impossible because the samples $(\mathbf{x}_i^j, y_i^j)$ may never contain any information about one or more features of $\mathbf{B}^*$.
\begin{assumption}[Client normalization] \label{assump:norm}
The ground-truth client-specific parameters satisfy $\|\mathbf{w}_{i}^{\ast}\|_2 = \sqrt{k}$ for all $i \in [n]$, and $\mathbf{B}^*$ has orthonormal columns.
\end{assumption}
Assumption 2 ensures that the ground-truth matrix $\mathbf{W}^* {\mathbf{B}^*}^\top$ is row-wise {\em incoherent}, i.e. its row norms have similar magnitudes. We define this formally in Appendix \ref{app:proof_main}.
Incoherence of the ground-truth matrices is a key property required for efficient matrix completion and other sensing problems with sparse measurements \citep{chi2019nonconvex}. Since our measurement matrices are row-wise sparse, we require the row-wise incoherence of the ground truth. Note that Assumption \ref{assump:norm} can be relaxed to allow $\|\mathbf{w}_i^{\ast}\|_2 \leq O(\sqrt{k})$, as the exact normalization is only for simplicity of analysis.






Our main result shows that the iterates $\{\mathbf{B}^t\}_t$ generated by FedRep in this setting linearly converge to the optimal representation $\mathbf{B}^\ast$ in principal angle distance.

\begin{customthm}{1} \label{thrm:linear} 
Define $E_0 \coloneqq 1- \dist^2(\mathbf{{B}}^0,\mathbf{{B}}^\ast)$ and $\bar{\sigma}_{\max, \ast} \coloneqq \max_{\mathcal{I} \in [n], |\mathcal{I}| = rn} \sigma_{\max} (\frac{1}{\sqrt{rn}}\mathbf{W}^\ast_{\mathcal{I}})$ and $\bar{\sigma}_{\min, \ast} \coloneqq \min_{\mathcal{I} \in [n], |\mathcal{I}| = rn} \sigma_{\min}(\frac{1}{\sqrt{rn}} \mathbf{W}^\ast_{\mathcal{I}})$, i.e. the maximum and minimum singular values of any matrix that can be obtained by taking $rn$ rows of $\frac{1}{\sqrt{rn}}\mathbf{W}^\ast$. Let $\kappa\!\coloneqq \! \bar{\sigma}_{\max, \ast}/\bar{\sigma}_{\min, \ast}$.
Suppose that $m \geq c( \kappa^4 k^2 d/(E_0^2 rn) + \kappa^4 k^3\log(rn)/E_0^2)$ for some absolute constant $c$. 
Then for any $t$ and any $\eta \leq 1/(4  \bar{\sigma}_{\max,\ast}^2)$, we have 
\begin{align}
    \dist(\mathbf{{B}}^T, \mathbf{{B}}^\ast)
    &\leq \left(1 -  \eta E_0\bar{\sigma}_{\min,\ast}^2 /2\right)^{T/2}\; \dist(\mathbf{{B}}^0,\mathbf{{B}}^\ast), \label{result}
\end{align}
with probability at least $1 - Te^{-100\min(k^2\log(rn),d)}$.
\end{customthm}

From Assumption \ref{assump:tasks}, we have that $\bar{\sigma}^2_{\min, *} > 0$, so the RHS of \eqref{result} strictly decreases with $T$ for appropriate step size. 
Considering the complexity of $m$ and the fact that the algorithm converges exponentially fast, the total number of samples required per client to reach an $\epsilon$-accurate solution in principal angle distance is $\Theta\left(m\log \left(\nicefrac{1}{\epsilon}\right)\right)$, which is
\begin{align}
\Theta\left(\left[ \kappa^4 k^2 \left(\nicefrac{d}{rn}+ k\log  (rn) \right)\right]\log \left(\nicefrac{1}{\epsilon}\right) \right).\label{complex}\end{align}
Next, a few remarks about this sample complexity follow. 

\textbf{When and whom does federation help?} Observe that for a single client with no collaboration, the sample complexity scales as $\Theta(d).$ With FedRep, however, the sample complexity scales as $\Theta(  \nicefrac{d}{n} + \log(n) )$, treating $k, \kappa$ and $r$ as constants. Thus, so long as $ \nicefrac{d}{n}+ \log(n) \ll d,$ federation helps. This holds in several settings, for instance when  $1 \ll n \ll e^{\Theta(d)}.$ In practical scenarios, $d$ (the data dimension) is large, and thus $e^{\Theta(d)}$ is exponentially larger; thus collaboration helps {\em each individual client.} 
Furthermore, new clients who enter the system later have a representation available for free, so these new clients' sample complexity is only $\Theta(k)$ because they each  only need to solve a $k$-dimensional linear regression problem \citep{hsu2012random}. Thus, both the overall system benefits (a representation has been learned, which is useful for the new client because it now only needs to learn a head), and each individual client that took part in the federated training also benefits.

\textbf{Connection to matrix sensing.} 
The problem in  \eqref{obj:glob_linear} is an instance of matrix sensing; {see the proof in Appendix \ref{app:proof_main} for more details}. Considering this connection, our theoretical results also contribute to the theoretical study of matrix sensing. Although matrix sensing is a well-studied problem, our setting presents two new analytical challenges: (i) due to row-wise sparsity in the measurements, the sensing operator does not satisfy the commonly-used Restricted Isometry Property (RIP) within an efficient number of samples, i.e., it does not efficiently concentrate to an identity operation on all rank-$k$ matrices, and (ii) FedRep executes a novel non-symmetric procedure. We further discuss these challenges in Appendix~\ref{app:challenges}. To the best of our knowledge, Theorem~1 provides the first convergence result for an alternating minimization-descent procedure to solve a matrix sensing problem. It is also the first result to show sample-efficient linear convergence of any solution to a matrix sensing with rank-one, row-wise sparse measurements.
The state-of-the-art result for the closest matrix sensing setting to ours is given by \citet{zhong2015efficient} for rank-1, independent Gaussian measurements, which our result matches up to an $\mathcal{O}(\kappa^2)$ factor. However, our setting is more challenging as we have rank-1 {\em and} row-wise sparse measurements, and dependence on $\kappa^4$ has been previously observed in settings with sparse measurements, e.g. matrix completion \citep{Jain_2013}. 


\textbf{Representation learning, dimensionality reduction and new users.}  
Theorem \ref{thrm:linear} 
concerns a linear representation learning setting that is of interest beyond federated learning to representation learning problems more broadly, such as in meta-learning and multi-task learning. 
This   setting has garnered significant attention recently in large part due to empirical evidence that representation learning can explain the success of meta-learning methods on few-shot learning tasks \citep{raghu2019rapid}. As shown by \cite{maurer2016benefit}, \cite{du2020fewshot} and \cite{tripuraneni2020provable},  learning an accurate $k$-dimensional representation during training (or meta-training) reduces the sample complexity of solving a new task from $\Theta(d)$ to $\Theta(k)$ in the linear case, enabling strong few-shot performance if $k$ is small. Theorem \ref{thrm:linear} shows that FedRep learns an accurate $k$-dimensional representation during training in the linear case, so these prior results imply that FedRep also needs only $\Theta(k)$ samples to learn the model for the new client.
Further, Theorem \ref{thrm:linear} shows that alternating minimization-descent (FedRep in the linear case) efficiently learns the representation compared to the methods studied in prior works (see Section \ref{sec:rw} for a detailed comparison).

\textbf{Remark on initialization.} Theorem \ref{thrm:linear} requires that the initial principal angle distance $\text{dist}(\mathbf{{B}}^0, \mathbf{{B}}^\ast)$ is bounded away from 1 by a constant. This can be efficiently achieved by the Method of Moments without increasing the sample complexity for each client up to log factors \citep{tripuraneni2020provable}. In turn, each user must send the server a polynomial of their data, namely $\sum_{j=1}^m (y_{i}^j)^2 \mathbf{x}_{i}^j (\mathbf{x}_i^j)^\top$ at the start of the learning procedure, which does not compromise privacy. We discuss the details of this in Appendix \ref{app:proof_main}.



\section{Experiments} \label{sec:experiments}
We focus on three points in our experiments: (i) the effect of many local updates for the local head in FedRep (ii) the quality of the global representation learned by FedRep and (iii) the applicability of FedRep to a wide range of datasets. Full experimental details are provided in Appendix \ref{app:experiments}.

\subsection{Synthetic Data}

We start by experimenting with an instance of the multi-linear regression problem analyzed in Section \ref{sec:linear}. Consistent 
with this formulation, we generate synthetic samples $\mathbf{x}_i^j \sim \mathcal{N}(0, \mathbf{I}_d)$ and labels $y_{i}^j \sim \mathcal{N}(\mathbf{w}_i^{\ast^\top}{\mathbf{B}}^{\ast^\top}\mathbf{x}_i^j, 10^{-3})$ (here we include an additive Gaussian noise). The ground-truth heads $\mathbf{w}_i^\ast \in \mathbb{R}^k$ for clients $i \in [n]$ and the ground-truth representation ${\mathbf{B}}^\ast \in \mathbb{R}^{d \times k}$ are generated randomly by sampling and normalizing Gaussian matrices. 


\begin{figure}[t]
\begin{center}
\centerline{\includegraphics[width=0.7\columnwidth]{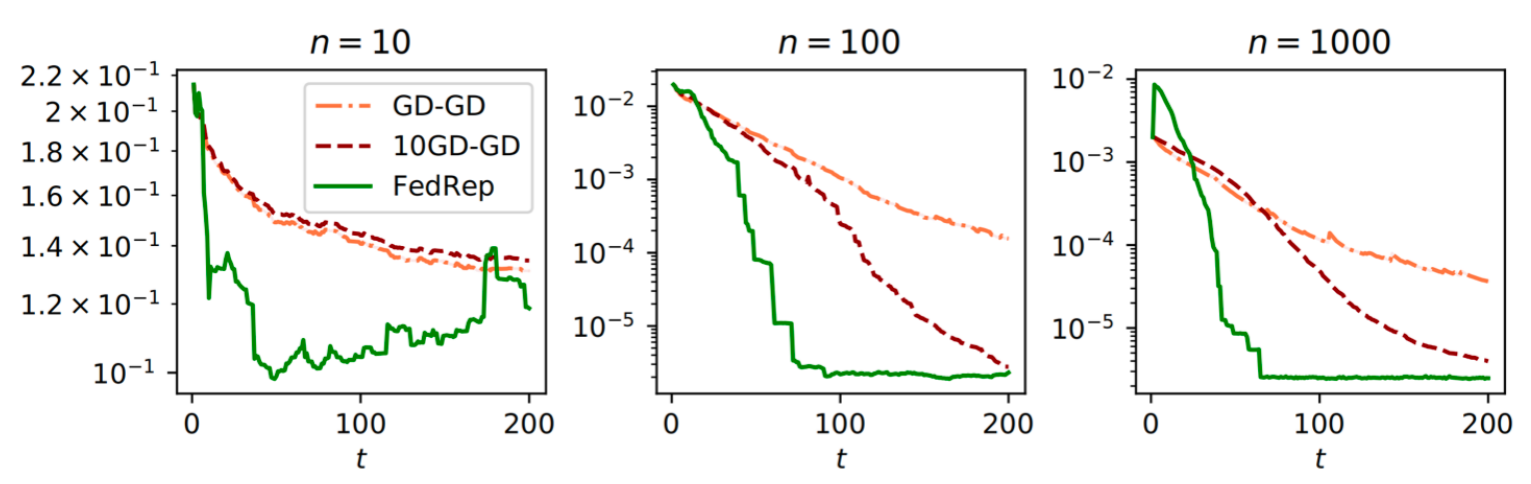}}
\vspace{-2mm}
\caption{Comparison of (principal angle) distances between the ground-truth and estimated representations by FedRep and alternating gradient descent algorithms for different numbers of clients $n$. In all plots, $d=10$, $k=2$, $m=5$, and $r=0.1$.}
\label{fig:synth11}
\end{center}
\vskip -0.3in
\end{figure}

\textbf{Benefit of finding the optimal head.}  We first demonstrate that the convergence of FedRep improves with larger number of clients $n$, making it highly applicable to federated settings. Further, we give evidence showing that this improvement is augmented by the minimization step in FedRep, since methods that replace the minimization step in FedRep with 1 and 10 steps of gradient descent (GD-GD and 10GD-GD, respectively) do not scale properly with $n$. 
In Figure \ref{fig:synth11}, we plot convergence trajectories for FedRep, GD-GD, and 10GD-GD for four different values of $n$ and fixed $m, d, k$ and $r$.  As we observe in Figure \ref{fig:synth11}, by increasing the number of nodes $n$, clients converge to the true representation faster. Also, running more local updates for finding the local head accelerates the convergence speed of FedRep. In particular, FedRep which exactly finds the optimal local head at each round has the fastest rate compared to GD-GD and 10GD-GD that only run 1 and 10 local updates, respectively, to learn the head.




\textbf{Generalization to new clients.} 
Next, we evaluate the effectiveness of the representation learned by FedRep in reducing the sample complexity for a new client which has not participated in training. We compare against FedSGD, which executes distributed SGD to learn a single model $(\mathbf{B},\mathbf{w})$. We first train FedRep and FedSGD on a fixed set of $n=100$ clients as in Figure~\ref{fig:fedrep1},  where $(d,k)\!=\!(20,2)$. 
The new client has access to $m_{\text{new}}$ labeled local samples. It will use the representation $\mathbf{{B}}^*\in \mathbb{R}^{d \times k}$ learned from the training clients, and learns a personalized head using this representation and its local training samples. For both FedRep and FedSGD, we solve for the optimal head given these samples and the representation learned during training. We compare the MSE of the resulting model on the new client's test data to that of a model trained by only using the $m_{\text{new}}$ labeled samples from the new client (Local Only) in Figure \ref{fig:bar_new}. The large error for FedSGD demonstrates that it does not learn the ground-truth representation. Meanwhile, the representation learned by FedRep allows an accurate model to be found for the new client as long as $m_{\text{new}}\geq k$, which drastically improves over the complexity for Local Only ($m_{\text{new}}\!=\!\Omega(d)$).



\begin{figure}[t]
\begin{center}
\centerline{\includegraphics[width=0.53\columnwidth]{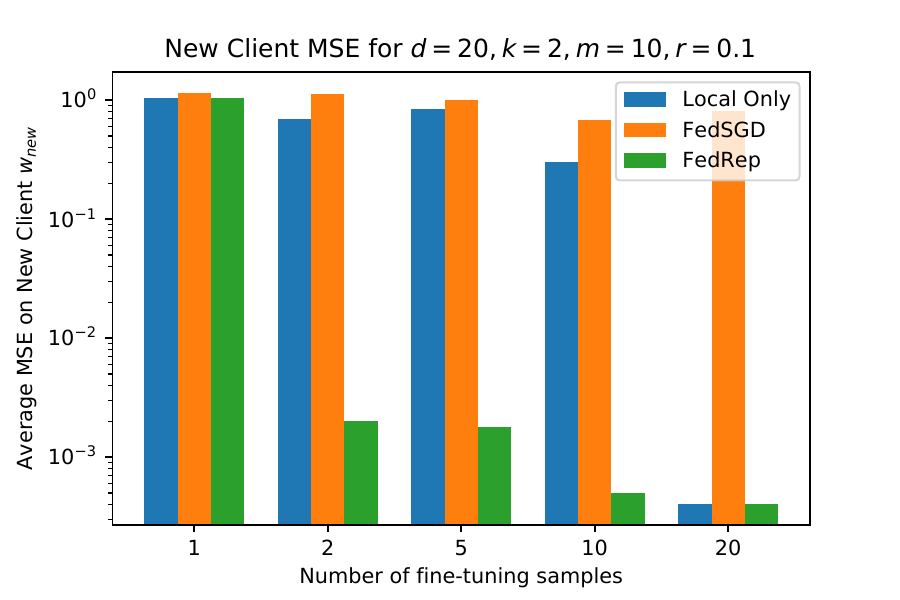}}
\vspace{-2mm}
\caption{MSE on new clients sharing the representation after fine-tuning using various numbers of samples from the new client.}
\label{fig:bar_new}
\end{center}
\end{figure}

\subsection{Real Data}
We next investigate whether these insights apply to nonlinear models and real datasets.

\textbf{Datasets and Models.} We use four real datasets: CIFAR10 and CIFAR100 \citep{krizhevsky2009learning}, FEMNIST \citep{caldas2018leaf,cohen2017emnist} and Sent140 \citep{caldas2018leaf}. The first three are image datasets and the last is a text dataset for which the goal is to classify the sentiment of a tweet as positive or negative. 
 We control the heterogeneity of CIFAR10 and CIFAR100 by assigning different numbers $S$ of classes per client, from among 10 and 100 total classes, respectively.  Each client is assigned the same number of training samples, namely $50000/n$. For FEMNIST, we restrict the dataset to 10 handwritten letters and assign samples to clients according to a log-normal distribution as in \cite{li2019feddane}. We consider a partition of $n\!=\!150$ clients with an average of 148 samples/client. For Sent140, we use the natural assignment of tweets to their author, and use $n\!=\!183$ clients with an average of 72 samples per client. We use 5-layer CNNs for the CIFAR datasets, a 2-layer MLP for FEMNIST, and an RNN for Sent$140$ \textcolor{black}{(details provided in Appendix \ref{app:experiments})}.

\textbf{Baselines.} We compare against a variety of personalized federated learning techniques as well as methods for learning a single global model and their fine-tuned analogues. Among the personalized methods, FedPer \citep{arivazhagan2019federated} is most similar to ours, as it also learns a global representation and personalized heads, but makes simultaneous local updates for both sets of parameters, therefore makes the same number of local updates for the head and the representation on each local round. Fed-MTL \citep{smith2017federated} learns local models and a regularizer to encode relationships among the clients, PerFedAvg \citep{fallah2020personalized} leverages meta-learning  to learn a single model that performs well after adaptation on each task, and LG-FedAvg \citep{liang2020think} learns local representations and a global head. APFL \citep{deng2020adaptive} interpolates between local and global models, and L2GD \citep{hanzely2020federated} and Ditto \citep{li2020ditto} learn local models that are encouraged to be close together by global regularization. For global FL methods, we consider FedAvg \citep{mcmahan2017communication}, SCAFFOLD \citep{karimireddy2020scaffold}, and FedProx \citep{li2018federated}. To obtain fine-tuning results, we first train the global model for the full training period, then each client then fine-tunes only the head on its local training data for 10 epochs of SGD before computing the final test accuracy. 

\begin{table*}[ht]
\caption{Average test accuracies on various partitions of CIFAR10, CIFAR100, Sent140 and FEMNIST with participation rate $r\!=\!0.1$.
}
\vskip -1in
\begin{center}
\footnotesize
\begin{tabular}{l|ccc|cc|c|c}
\toprule\label{bigtable1}
& \multicolumn{3}{c|}{{CIFAR10}}  &  \multicolumn{2}{c|}{{CIFAR100}}       & Sent140 & FEMNIST  \\
\midrule
(\# clients $n$, \# classes per client $S$) & $(100,2)\!$ & $\!(100,5)$ & $(1000,2)$ & $(100,5)$  & $(100,20)$ & $(183,2)$ & $(150,3)$ \\
\midrule
Local Only & $\mathbf{89.79}$ & 70.68 & 78.30 & 75.29 & 41.29 & 69.88 &  60.86 \\
\midrule
FedAvg \citep{mcmahan2017communication} &  42.65 & 51.78 & 44.31 & 23.94 & 31.97 & 52.75 & 51.64 \\ 
FedAvg+FT & 87.65 & 73.68  & 82.04 & $\mathbf{79.34}$ & 55.44  &71.92  & 72.41\\ 
FedProx \citep{li2018federated} & 39.92 & 50.99 & 21.93  & 20.17 & 28.52 & 52.33 &  18.89 \\
FedProx+FT & 85.81 & 72.75 & 75.41 & 78.52 & 55.09 & 71.21 & 53.54 \\
SCAFFOLD \citep{karimireddy2020scaffold} & $37.72$ & $47.33$ & $33.79$ &  20.32 & 22.52 & $51.31$ & 17.65 \\
SCAFFOLD+FT  & $86.35$ &  $68.23$ &$78.24$ & 78.88 & 44.34 & 71.49  & 52.11  \\
\midrule 
Fed-MTL \citep{smith2017federated} & 80.46 & 58.31  & 76.53  & 71.47 & 41.25 & 71.20 & 54.11 \\
PerFedAvg \citep{fallah2020personalized} & 82.27 & 67.20 & 67.36 & 72.05 &  52.49  & 68.45 & 71.51 \\
LG-Fed \citep{liang2020think} & 84.14  & 63.02 &  77.48 & 72.44 &38.76 & 70.37 & 62.08 \\
L2GD \citep{hanzely2020federated} & 81.04 & 59.98 & 71.96 & 72.13 & 42.84 & 70.67 & 66.18  \\
APFL  \citep{deng2020adaptive} & 83.77 & 72.29 & 82.39 & 78.20 & 55.44 & 69.87 & 70.74 \\
Ditto \citep{li2020ditto} & 85.39 & 70.34 & 80.36 & 78.91 &  $\mathbf{56.34}$ & 71.04 & 68.28 \\
FedPer \citep{arivazhagan2019federated} & 87.13 & 73.84 & 81.73 & 76.00 & 55.68 & 72.12 & 76.91 \\
\midrule
FedRep (Ours) & $\mathbf{87.70}$ & $\mathbf{75.68}$ & $\mathbf{83.27}$ & 79.15 & 56.10 &  $\mathbf{72.41}$ & $\mathbf{78.56}$ \\ 
\bottomrule
\end{tabular}
\end{center}
\vskip -0.1in
\end{table*}

\textbf{Implementation.}
In each experiment we sample a ratio $r\!=\!0.1$ of all the clients on every round. We initialize all models randomly and train for $T\!=\!100$ communication rounds for the CIFAR datasets, $T\!=\!50$ for Sent140, and $T\!=\!200$ for FEMNIST. In each case, for each local updates FedRep executes ten local epochs of SGD with momentum to train the local head, followed by one epoch for the representation in the case of CIFAR10 with $n\!=\!100$ and 5 epochs in all other cases. All other methods use the same number of local epochs as FedRep does for updating the representation. Accuracies are computed by taking the average local accuracies for all users over the final 10 rounds of communication, except for the fine-tuning methods. These accuracies are computed after locally training the head of the fully-trained global model for ten epochs for each client.

\begin{figure}[t]
\begin{center}
\centerline{\includegraphics[width=0.56\columnwidth]{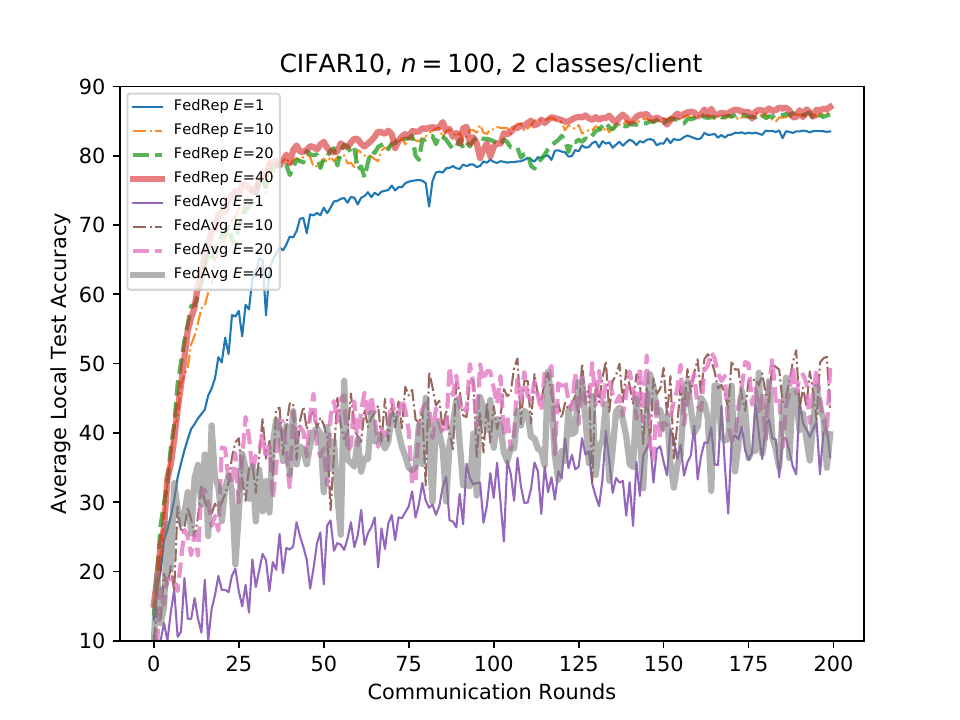}}
\caption{CIFAR10 local test errors for different numbers of local epochs $E$ for  FedAvg and for the heads in FedRep.}
\label{fig:cifar}
\end{center}
\vspace{-2mm}
\end{figure}

\textbf{Benefit of more local updates.} As mentioned in Section \ref{sec:intro}, a key advantage of our formulation is that it enables clients to run many local updates without causing divergence from the global optimal solution. We demonstrate an example of this in Figure \ref{fig:cifar}. Here, there are $n\!=\!100$ clients where each has $S\!=\!2$ classes of images. For FedAvg, we observe running more local updates does not necessarily improve the performance. In contrast, FedRep's performance is monotonically non-decreasing with the number of local epochs for the heads, i.e., {\em FedRep  is never hurt by more local computation on the heads.}  

\textbf{Robustness to varying levels of heterogeneity, number of clients and number of samples per client.} We show the average local test errors for all of the algorithms for a variety of settings in Table \ref{bigtable1}. Recall that for the CIFAR datasets, the number of training samples per client is equal to $50000/n$, so the columns with 100 clients have 500 training samples per client, and the column with 1000 clients has only 50 training samples per client. In all cases, FedRep is either the top-performing method or is very close to the top-performing method. Surprisingly, the fine-tuning methods perform very well, especially FedAvg+FT. The superior performance of FedAvg relative SCAFFOLD is likely because all settings involve partial client participation. 




\begin{figure}[t]
\begin{center}
\vspace{-2mm}
\centerline{\includegraphics[width=0.64\columnwidth]{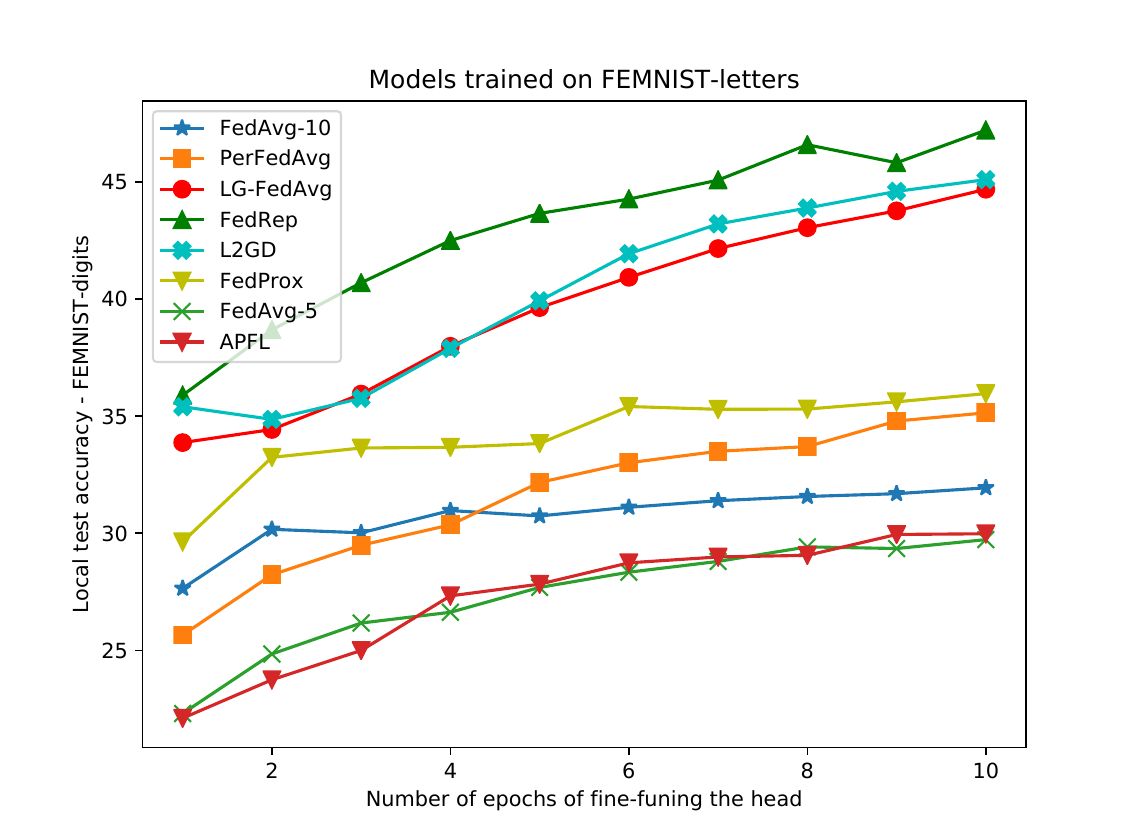}}
\vspace{-2mm}
\caption{Test accuracy on handwritten digits from FEMNIST after fine-tuning the head of models trained on FEMNIST-letters.}
\label{fig:new_clients}
\end{center}
\vspace{-6mm}
\end{figure}

\textbf{Generalization to new clients.} 
We also evaluate the strength of the representation learned by FedRep in terms of 
adaptation for new users. To do so, we first train FedRep, FedAvg, PerFedAvg, LG-FedAvg, APFL, L2GD and FedProx
in the usual setting on the partition of FEMNIST containing images of 10 handwritten letters (FEMNIST-letters). Then, we encounter clients with data from a different partition of the FEMNIST dataset, containing images of handwritten digits. We assume we have access to a dataset of 500 samples at this new client to fine tune the head. Using these, with each of the algorithms, we fine tune the head over multiple epochs while keeping the representation fixed. 
In Figure \ref{fig:new_clients}, we repeatedly sweep over the same 500 samples over multiple epochs to further refine the head, and plot the corresponding local test accuracy. As is apparent, FedRep has significantly better performance than these baselines.

\section{Discussion}

We introduce a novel representation learning framework and algorithm for federated learning, and we provide both theoretical and empirical justification for its utility in federated settings. In particular, our proposed framework exploits the structure of federating learning by (i) leveraging all clients' data to learn a global representation that enhances each client's model and can generalize to new users and (ii) leveraging the computational power of clients to run multiple local updates for learning their local heads. Our analysis further  shows that alternating minimization-descent efficiently learns linear representations, and is therefore relevant beyond federated learning. Future work remains to analyze the representation learning capabilities of FedRep in non-linear settings.

\section{Acknowledgements}
The research of Liam Collins is supported through ARO Grant W911NF-11-1-0265 and NSF Grant 2019844. The research of Sanjay Shakkottai is supported by ONR Grant N00014-19-1-2566 and NSF Grant 2019844. The research of Aryan Mokhtari is supported in part by NSF Grant 2007668, ARO Grant W911NF2110226, and the Machine Learning Laboratory at UT Austin. The research of Hamed Hassani is supported by NSF Grants 1837253, 1943064, 1934876, AFOSR Grant FA9550-20-1-0111, and DCIST-CRA.









\newpage

\appendix

\section{Additional Experimental Results}
\label{app:experiments}

\subsection{Synthetic Data: Further comparison with GD-GD}

\begin{figure}[H]
\vskip 0.2in
\begin{center}
\centerline{\includegraphics[width=1.1\columnwidth]{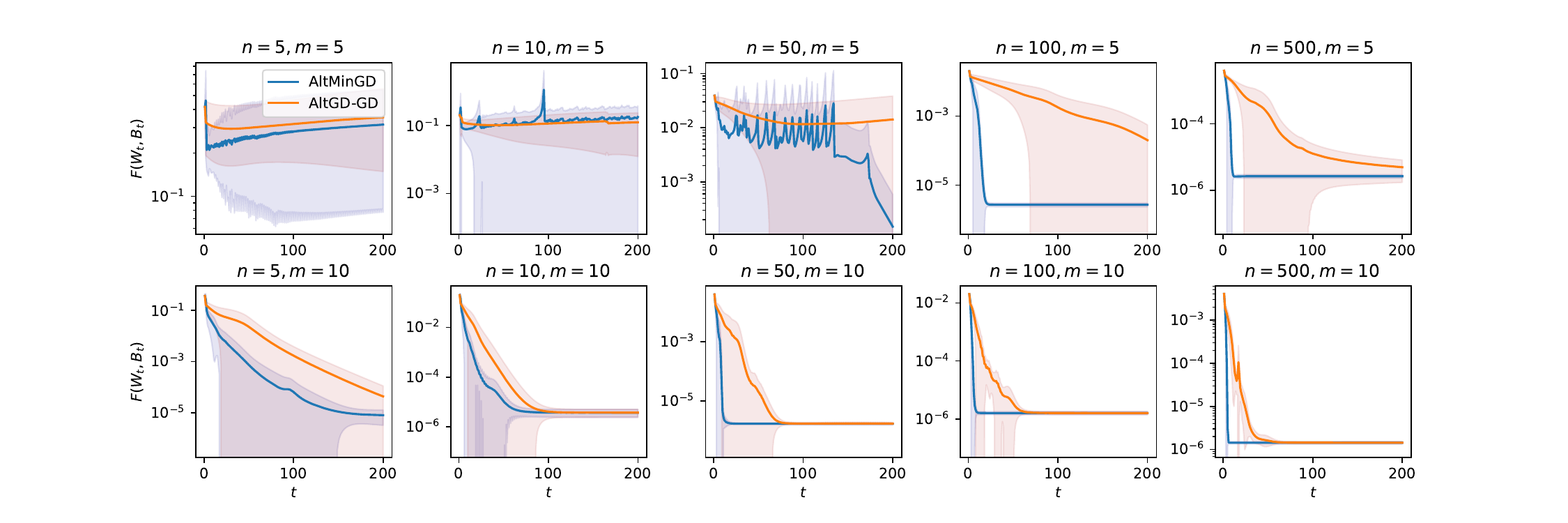}}
\caption{Function values for FedRep and GD-GD. The value of $m$ is fixed in each row and $n$ is fixed in each column. Here $r\!=\!1$ (full participation) and the average trajectories over 10 trials are plotted along with 95\% confidence intervals. Principal angle distances are not plotted as the results are very similar. {\em We see that the relative improvement of FedRep over GD-GD increases with $n$, highlighting the advantage of FedRep in settings with many clients.}}
\label{fig:synthetic1}
\end{center}
\vskip -0.2in
\end{figure}

\textbf{Further experimental details.} In the synthetic data experiments, the ground-truth matrices $\mathbf{W}^*$ and $\mathbf{B}^*$ were generated by first sampling each element as an i.i.d. standard normal variable, then taking the QR factorization of the resulting matrix, and scaling it by $\sqrt{k}$ in the case of $\mathbf{W}^\ast$. The clients each trained on the same $m$ samples throughout the entire training process. Test samples were generated identically as the training samples but without noise.
Both the iterates of FedRep and GD-GD were initialized with the SVD of the result of 10 rounds of projected gradient descent on the unfactorized matrix sensing objective as in Algorithm 1 in \cite{tu2016low}. We would like to note that FedRep exhibited the same convergence trajectories regardless of whether its iterates were initialized with random Gaussian samples or with the projected gradient descent procedure, whereas GD-GD was highly sensitive to its initialization, often not converging when initialized randomly.

\subsection{Real Data: Further experimental details}

\textbf{Datasets.} The CIFAR10 and CIFAR100 datasets \citep{krizhevsky2009learning} were generated by randomly splitting the training data into $Sn$ shards with $50,000/(Sn)$ images of a single class in each shard, as in \cite{mcmahan2017communication}. The full Federated-EMNIST (FEMNIST) dataset contains 62 classes of handwritten letters, but in Table \ref{bigtable1} we use a subset with only 10 classes of handwritten letters. In particular, we followed the same dataset generation procedure as in \cite{li2019feddane}, but used 150 clients instead of 200. When testing on new clients as in Figure \ref{fig:new_clients}, we use samples from 10 classes of handwritten digits from FEMNIST, i.e., the MNIST dataset. In this phase there are 100 new clients, each with 500 samples from 5 different classes for fine-tuning. The fine-tuned models are then evaluated on 100 testing samples from these same 5 classes. For Sent140, we randomly sample 183 clients (Twitter users) that each have at least 50 samples (tweets). Each tweet is either positive sentiment or negative sentiment. Statistics of both the FEMNIST and Sent140 datasets we use are given in Table \ref{table:data}. For both FEMNIST and Sent140 we use the LEAF framework \citep{caldas2018leaf}.

\textbf{Hyperparameters.} As in \cite{liang2020think}, all methods use SGD with momentum with parameter equal to 0.5.
In Table \ref{bigtable1}, for CIFAR10, CIFAR100, and FEMNIST the local sample batch size is 10 and for Sent140 it is 4. The participation rate $r$ is always 0.1, besides in the fine-tuning phases in Figure \ref{fig:new_clients}, in which all clients are sampled in each round. For each dataset learning rates were tuned in $\{0.001, 0.01, 0.1 \}$.
We observed that the optimal learning rates for FedAvg were also typically the optimal base learning rates for the other methods, so we used the same base learning rates for all methods for each dataset, which was $0.01$ in all cases, unless stated otherwise. Note that the batch size and learning rate for CIFAR10 used in Table \ref{bigtable1} differs from the standard setting of a batch size of 50 and learning rate of 0.1 \citep{mcmahan2017communication}, but we observed improved performance for all methods by using $(10,0.01)$ instead. In particular, the simulation in Figure \ref{fig:cifar}, the standard setting of $(50,0.1)$ is used, but the accuracies are worse than those reported in Table \ref{bigtable1} for both FedAvg and FedRep. Additionally, in Table \ref{bigtable1}, for CIFAR10 with $(n,S)=(100,2)$ and $(n,S)=(100,5)$, we executed $1$ local epoch of SGD with momentum for the representation for FedRep and 1 local epoch for all other methods. For all other datasets we executed 5 local epochs for the representation for FedRep and for the local updates for all other methods.

\textbf{Evaluation.} As mentioned in the main body, in Table \ref{bigtable1}, we initialize all methods randomly and train for $T=100$ communication rounds for the CIFAR datasets, $T=200$ for FEMNIST, and $T=50$ for Sent140. The accuracy shown is the average local test accuracy over all users over the final ten communication rounds, besides for the fine-tuning results, in which case we report the average local test accuracies of the locally fine-tuned models over all users, after the global model has been fully trained. We repeat the entire training and evaluation process five times for each model and dataset and report the averages in Table \ref{bigtable1}.


\textbf{Implementations.} Our code is available at \url{https://github.com/lgcollins/FedRep}.
We adapt the Pytorch codebase from \cite{liang2020think}, and used the implementations of FedAvg, Fed-MTL and LG-FedAvg from this repository. For consistency we use this same codebase to implement FedRep, FedPer, SCAFFOLD, FedProx, APFL, Ditto, L2GD, and Per-FedAvg. As in the experiments in \cite{liang2020think}, we used a 5-layer CNN with two convolutional layers for CIFAR10 and CIFAR100 followed by three fully-connected layers. For FEMNIST, we use an MLP with two hidden layers, and for Sent140 we use a pre-trained 300-dimensional GloVe embedding\footnote{Pennington,  J.,  Socher,  R.,  and Manning,  C. D.   Glove: Global vectors for word representation. In {\em Proceedings of the 2014 conference on empirical methods in natural language processing (EMNLP)}, pp. 1532–1543, 2014.} and train RNN with an LSTM module followed by two fully-connected decoding layers.

For FedRep, we treated the head as the weights and biases of the final fully-connected layer in each of the models. For LG-FedAvg, we treated the first two convolutional layers of the model for CIFAR10 and CIFAR100 as the local representation, and the fully-connected layers as the global parameters, and the input layer and hidden layers as the global parameters. For FEMNIST, we set all parameters besides those in the output layer as the local representation parameters. For Sent140, we set the RNN module to be the local representation and the decoder to be the global parameters. Unlike in the paper introducing LG-FedAvg \citep{liang2020think}, we did not initialize the models for all methods with the solution of many rounds of FedAvg (instead, we initialized randomly) and we computed the local test accuracy as the average local test accuracy over the final ten communication rounds, rather than the average of the maximum local test accuracy for each client over the entire training procedure.

For L2GD we executed multiple epochs of local SGD (discussed above) instead of one step of GD in the local update in order for reasonable comparison with the other methods. We also set $p=0.9$, thus the local parameters are trained on {10}\% of the communication rounds.
We tuned $\alpha$ in $\{0.05, 0.1, 0.25, 0.5,0.75\}$ and 
we tuned  $\lambda$ over $\{ 1, 0.5\}$. We used $(\alpha,\lambda) = (0.25,1)$ in all cases besides the $(n,S)=(100,5)$ case for CIFAR100, for which we used $\alpha=0.1$. Also, for FEMNIST we improved performance by using a learning  rate of 0.001 instead of 0.01. For APFL, we used a fixed $\alpha$ that we tuned in $\{0.1, 0.25,0.5,0.75\}$, and chose $\alpha=0.25$ for all cases besides the most heterogeneous CIFAR versions, namely $(n,S)=(100,2)$ for CIFAR10 and $(n,S)=(100,25)$ for CIFAR100.
For Ditto we tuned $\lambda$ among $\{0.25,0.5,0.75,1\}$, and used $\lambda=0.75$ for all cases besides CIFAR100, for which we used $\lambda=1$.
For PerFedAvg, we used an inner learning rate of $10^{-4}$ and 8 samples as the support set and 2 samples as the target set in each local meta-gradient update. We used the Hessian-free version.
For FedProx we tuned $\mu$ among $\{ 0.05, 0.1, 0.25, 0.5\}$, and used $\mu = 0.1$ for CIFAR and $\mu=0.25$ for FEMNIST and Sent140. For SCAFFOLD we used a global learning rate of 1 in all cases besides FEMNIST, for which  0.5 was superior. 







\begin{table}[H]
\caption{Dataset statistics.}
\vspace{1mm}
\label{table:data}
\begin{center}
\begin{small}
\begin{sc}
\begin{tabular}{lccc}
\toprule
Dataset & Number of users ($n$) & Avg samples/user & Min samples/user \\
\midrule
FEMNIST      & 150  & 148  & 50 \\
Sent140     & 183  &  72  & 50 \\
\bottomrule
\end{tabular}
\end{sc}
\end{small}
\end{center}
\end{table}

\section{Proof of Main Result} \label{app:proof_main}

\subsection{Preliminaries.}

We start by defining some notions used throughout the proof.
\begin{definition}
For a random vector $\mathbf{x} \in \mathbb{R}^d$ and a fixed matrix $\mathbf{A} \in \mathbb{R}^{d_1 \times d_2}$, the vector $\mathbf{A}^\top \mathbf{x}$  is called {\em $\|\mathbf{A}\|_2$-sub-gaussian} if $\mathbf{y}^\top \mathbf{A}^\top \mathbf{x}$ is sub-gaussian with sub-gaussian norm $\mathcal{O}(\|\mathbf{A}\|_2  \|\mathbf{y}\|_2)$ for all $\mathbf{y} \in \mathbb{R}^{d_2}$, i.e. $\mathbb{E}[\exp({\mathbf{y}^\top \mathbf{A}^\top \mathbf{x}})] \leq \exp\left({\|\mathbf{y}\|_2^2 \|\mathbf{A}\|_2^2/2}\right)$.
\end{definition}

\begin{definition} \label{def:inco}
A rank-$k$ matrix $
\mathbf{M} \in \mathbb{R}^{d_1 \times d_2}$ is {\em $\mu$-row-wise incoherent} if 
  $ \max_{i \in [d_1]} \|\mathbf{m}_i\|_2 \leq (\nicefrac{\mu \sqrt{d_2}}{\sqrt{d_1}}) \|\mathbf{M}\|_F$, 
where $\mathbf{m}_i \in \mathbb{R}^{d_2}$ is the $i$-th row of $\mathbf{M}$.
\end{definition}

Note that Assumption \ref{assump:norm} implies that $\mathbf{W}^\ast$ is row-wise incoherent with parameter 1.

We use hats to denote orthonormal matrices (a matrix is called orthonormal if its set of columns is an orthonormal set). By Assumption \ref{assump:norm}, the ground truth representation $\mathbf{B}^\ast$ is orthonormal, so from now on we will write it as $\mathbf{\hat{B}}^\ast$. Likewise, we will denote the iterates $\mathbf{{B}}^t$ as $\mathbf{\hat{B}}^t$.

For a matrix $\mathbf{W} \in \mathbb{R}^{n \times k}$ and a random set of indices $\mathcal{I} \in [n]$ of cardinality $rn$, define $\mathbf{W}_\mathcal{I} \in \mathbb{R}^{rn \times k}$ as the matrix formed by taking the rows of $\mathbf{W}$ indexed by $\mathcal{I}$. Define $\bar{\sigma}_{\max, \ast} \coloneqq \max_{\mathcal{I} \in [n], |\mathcal{I}| = rn} \sigma_{\max} (\frac{1}{\sqrt{rn}}\mathbf{W}^\ast_{\mathcal{I}})$ and $\bar{\sigma}_{\min, \ast} \coloneqq \min_{\mathcal{I} \in [n], |\mathcal{I}| = rn} \sigma_{\min}(\frac{1}{\sqrt{rn}} \mathbf{W}^\ast_{\mathcal{I}})$, i.e. the maximum and minimum singular values of any matrix that can be obtained by taking $rn$ rows of $\frac{1}{\sqrt{rn}}\mathbf{W}^\ast$. Note that by Assumption \ref{assump:norm}, each row of $\mathbf{W}^\ast$ has norm $\sqrt{k}$, so $\frac{1}{\sqrt{rn}}$ acts as a normalizing factor such that $\|\frac{1}{\sqrt{rn}}\mathbf{W}^\ast_{\mathcal{I}}\|_F = \sqrt{k}$.  In addition, define $\kappa = \bar{\sigma}_{\max, \ast}/\bar{\sigma}_{\min, \ast}$.

Let $i$ now be an index over $[rn]$, and let $i'$ be an index over $[n]$.
For random batches of samples $\{ \{ (\mathbf{x}_{i}^{j},y_i^j ) \}_{j=1}^m \}_{i=1}^{rn}$,
define the random linear operator $\mathcal{A}: \mathbb{R}^{rn \times d} \rightarrow \mathbb{R}^{rnm}$ as 
$
    \mathcal{A}(\mathbf{M}) = [\langle \mathbf{A}_{i,j}, \mathbf{M} \rangle]_{1\leq i \leq rn, 1\leq j \leq m} \in \mathbb{R}^{rnm}.
$
Here, $\mathbf{A}_{i,j} \coloneqq \mathbf{e}_i  ({\mathbf{x}_i^{j}})^\top $, where $\mathbf{e}_i$ is the $i$-th standard vector in $\mathbb{R}^{rn}$, and $\mathbf{M} \in \mathbb{R}^{rn \times d}$. 
Then, the loss function in \eqref{obj:glob_linear} is equivalent to 
\begin{align}
    \min_{\mathbf{B} \in \mathbb{R}^{d \times k}, \mathbf{W} \in \mathbb{R}^{n \times k}} \{ &
    F(\mathbf{B},\mathbf{W}) \coloneqq \frac{1}{2rnm} \mathbb{E}_{\mathcal{A}, \mathcal{I}} \left[\| \mathbf{Y} - \mathcal{A}(\mathbf{W}_\mathcal{I}\mathbf{B}^\top)\|_2^2 \right] \}, \label{linop_app} 
\end{align}
where $\mathbf{Y}  =\mathcal{A}(\mathbf{W}^\ast_\mathcal{I} \mathbf{\hat{B}}^{\ast^\top}) \in \mathbb{R}^{rnm}$ is a concatenated vector of labels. It is now easily seen that the problem of recovering $\mathbf{W}^\ast \mathbf{\hat{B}}^{\ast^\top}$ from finitely-many measurements $\mathcal{A}(\mathbf{W}^\ast_\mathcal{I} \mathbf{\hat{B}}^{\ast^\top})$ is an instance of matrix sensing. 
Moreover, the updates of FedRep satisfy the following recursion:
\begin{align}
    {\mathbf{W}}^{t+1}_{\mathcal{I}^t} &= 
    \argmin_{{\mathbf{W}}_{\mathcal{I}^t} \in \mathbb{R}^{rn\times k}} \frac{1}{2rnm}\| \mathcal{A}^t({\mathbf{W}}_{\mathcal{I}^t}^\ast\mathbf{\hat{B}}^{\ast^\top} -  {\mathbf{W}}_{\mathcal{I}^t}\mathbf{\hat{B}}^{t^\top})\|_2^2 \label{up_ww} \\
  \mathbf{\bar{B}}^{t+1} &= \mathbf{\hat{B}}^t - \!\frac{\eta}{rnm} 
  \left((\mathcal{A}^t)^\dagger \mathcal{A}^t({\mathbf{W}}^{t+1}_{\mathcal{I}^t}\mathbf{\hat{B}}^{t^\top}- {\mathbf{W}}_{\mathcal{I}^t}^\ast\mathbf{\hat{B}}^{\ast^\top})\right)^\top {\mathbf{W}}^{t+1}_{\mathcal{I}^t} \label{up_b} \\
  \mathbf{\hat{B}}^{t+1}, \mathbf{{R}}^{t+1} &= \text{QR}(\mathbf{\bar{B}}^{t})
\end{align}
where $\mathcal{A}^t$ is an instance of $\mathcal{A}$,  $(\mathcal{A}^t)^\dagger$ is the adjoint operator of $\mathcal{A}^t$, i.e. $(\mathcal{A}^t)^\dagger (\mathbf{M}) = \sum_{i=1}^{rn}\sum_{j=1}^m (\langle \mathbf{A}^{t,j}_{i},  {\mathbf{M}} \rangle) \mathbf{A}_{i}^{{t,j}}$, and QR$(\cdot)$ is the QR factorization.
Note that for the purposes of analysis, it does not matter how $\mathbf{w}_{i'}^{t+1}$ is computed for all $i' \notin \mathcal{I}^t$, as these vectors do not affect the computation of $\mathbf{B}^{t+1}$. Moreover, our analysis does not rely on any particular properties of the batches $\mathcal{I}^1,\dots, \mathcal{I}^T$ other than the fact that they have cardinality $rn$, so without loss of generality we assume $\mathcal{I}^t = [rn]$ for all $t=1,...T$ and drop the subscripts $\mathcal{I}^t$ on $\mathbf{W}^t$. Further, since our analysis focuses on a particular iteration $t$, we will drop the superscript $t$ on $\mathcal{A}^t$ and each $\mathbf{A}_{i}^{t,j}$ and $(\mathbf{x}_i^{t,j}, y_i^{t,j})$ for ease of notation (while noting that each iteration requires a new batch of i.i.d. data).



\subsection{Auxilliary Lemmas}

We start by computing the update for $\mathbf{W}$.

\begin{lemma} \label{lem:defu}
In the linear version of FedRep, update for $\mathbf{W}$ is:
\begin{align}
    \mathbf{W}^{t+1} = \mathbf{{W}}^\ast  \mathbf{\hat{B}}^{\ast^\top} \mathbf{\hat{B}}^t - \mathbf{F}
\end{align}
where $\mathbf{F}$ is defined in equation \eqref{deff} below.
\end{lemma}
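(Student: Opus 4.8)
The plan is to exploit the fact that $\mathbf{W}^{t+1}$ is the solution of a least-squares problem that decouples across the $rn$ clients, since the measurement matrices $\mathbf{A}_i^j = \mathbf{e}_i (\mathbf{x}_i^j)^\top$ are row-wise supported. Concretely, $\|\mathcal{A}^t(\mathbf{W}(\hat{\mathbf{B}}^t)^\top - \mathbf{W}^\ast(\hat{\mathbf{B}}^\ast)^\top)\|_F^2 = \sum_{i=1}^{rn}\sum_{j=1}^m \big((\mathbf{x}_i^j)^\top \hat{\mathbf{B}}^t \mathbf{w}_i - (\mathbf{x}_i^j)^\top \hat{\mathbf{B}}^\ast \mathbf{w}_i^\ast\big)^2$, so minimizing over $\mathbf{w}_i$ is an ordinary linear regression of the scalars $(\mathbf{x}_i^j)^\top \hat{\mathbf{B}}^\ast \mathbf{w}_i^\ast$ on the feature vectors $(\hat{\mathbf{B}}^t)^\top \mathbf{x}_i^j \in \mathbb{R}^k$. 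First I would write the per-client normal equations: with $\mathbf{X}_i \in \mathbb{R}^{m\times d}$ the matrix whose rows are $(\mathbf{x}_i^j)^\top$, the minimizer is $\mathbf{w}_i^{t+1} = \big((\hat{\mathbf{B}}^t)^\top \mathbf{X}_i^\top \mathbf{X}_i \hat{\mathbf{B}}^t\big)^{-1} (\hat{\mathbf{B}}^t)^\top \mathbf{X}_i^\top \mathbf{X}_i \hat{\mathbf{B}}^\ast \mathbf{w}_i^\ast$, which is well-defined on the high-probability event that each local Gram matrix $(\hat{\mathbf{B}}^t)^\top \mathbf{X}_i^\top \mathbf{X}_i \hat{\mathbf{B}}^t$ is invertible (guaranteed by the sub-gaussian concentration under Assumption \ref{assump:subgauss} once $m = \Omega(k + \log(rn))$, which is subsumed by the sample-complexity hypothesis of Theorem \ref{thrm:linear}).

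Next I would massage this expression into the claimed form $\mathbf{w}_i^{t+1} = (\hat{\mathbf{B}}^t)^\top \hat{\mathbf{B}}^\ast \mathbf{w}_i^\ast - \mathbf{f}_i$ by adding and subtracting the ``population'' term. Writing $\frac{1}{m}\mathbf{X}_i^\top \mathbf{X}_i = \mathbf{I}_d + \mathbf{E}_i$ where $\mathbf{E}_i$ is the (small) sample-covariance deviation, and using $(\hat{\mathbf{B}}^t)^\top \hat{\mathbf{B}}^t = \mathbf{I}_k$, a direct manipulation gives
\begin{align}
\mathbf{w}_i^{t+1} &= \big(\mathbf{I}_k + (\hat{\mathbf{B}}^t)^\top \mathbf{E}_i \hat{\mathbf{B}}^t\big)^{-1}\big((\hat{\mathbf{B}}^t)^\top \hat{\mathbf{B}}^\ast + (\hat{\mathbf{B}}^t)^\top \mathbf{E}_i \hat{\mathbf{B}}^\ast\big)\mathbf{w}_i^\ast, \nonumber
\end{align}
so that $\mathbf{f}_i = (\hat{\mathbf{B}}^t)^\top \hat{\mathbf{B}}^\ast \mathbf{w}_i^\ast - \mathbf{w}_i^{t+1}$ is an explicit function of $\mathbf{E}_i$, $\hat{\mathbf{B}}^t$, $\hat{\mathbf{B}}^\ast$, $\mathbf{w}_i^\ast$; stacking the $\mathbf{f}_i^\top$ as rows defines the matrix $\mathbf{F}$ to be referenced in equation \eqref{deff}. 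I would present this $\mathbf{F}$ in whichever equivalent operator form is cleanest for the subsequent descent analysis — likely as $\mathbf{F} = \mathbf{W}^\ast(\hat{\mathbf{B}}^\ast)^\top \hat{\mathbf{B}}^t - \mathbf{W}^{t+1}$ with the per-row formula recorded as a display so later lemmas can bound $\|\mathbf{F}\|$ via bounds on $\|\mathbf{E}_i\|$.

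The computation itself is essentially routine linear algebra; the only genuine subtlety, and the step I would treat most carefully, is the invertibility of the local Gram matrices $(\hat{\mathbf{B}}^t)^\top \mathbf{X}_i^\top \mathbf{X}_i \hat{\mathbf{B}}^t$ and making sure the argmin is unique so that ``$\mathbf{W}^{t+1} = \ldots$'' is an honest equality rather than merely ``a minimizer equals.'' This I would discharge by conditioning on the concentration event $\|\frac{1}{m}(\hat{\mathbf{B}}^t)^\top \mathbf{X}_i^\top \mathbf{X}_i \hat{\mathbf{B}}^t - \mathbf{I}_k\|_2 \le \frac{1}{2}$ for all $i \in [rn]$ simultaneously — a standard sub-gaussian covariance estimate plus a union bound, valid with probability $\ge 1 - e^{-100k^2}$ under the stated sample complexity — which also ensures the Neumann-series expansion of $(\mathbf{I}_k + (\hat{\mathbf{B}}^t)^\top \mathbf{E}_i \hat{\mathbf{B}}^t)^{-1}$ converges and keeps $\|\mathbf{F}\|$ controllably small for the rest of the proof. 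One mild care point: $\hat{\mathbf{B}}^t$ depends on data from previous rounds, but since each round draws a fresh independent batch, $\mathbf{X}_i$ is independent of $\hat{\mathbf{B}}^t$, so the concentration argument applies conditionally on $\hat{\mathbf{B}}^t$ without circularity.
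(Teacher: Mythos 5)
Your proposal is correct, and it reaches exactly the same object the paper calls $\mathbf{F}$: in the paper's notation, your per-client residual is $\mathbf{f}_i = (\mathbf{G}^i)^{-1}(\mathbf{G}^i\mathbf{D}^i - \mathbf{C}^i)\mathbf{w}_i^\ast$ with $\mathbf{G}^i = \hat{\mathbf{B}}^{t\top}\mathbf{\Pi}^i\hat{\mathbf{B}}^t$, $\mathbf{C}^i = \hat{\mathbf{B}}^{t\top}\mathbf{\Pi}^i\hat{\mathbf{B}}^\ast$, $\mathbf{D}^i = \hat{\mathbf{B}}^{t\top}\hat{\mathbf{B}}^\ast$, and your $\mathbf{E}_i$ is $\mathbf{\Pi}^i - \mathbf{I}_d$. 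The difference is purely organizational. The paper follows the template of Jain et al.\ (2013), Lemma~4.5: it writes the stationarity condition $\nabla_{\mathbf{w}_p}\tilde F = 0$ for each \emph{column} $p$ of $\mathbf{W}$, stacks these into an $rnk\times rnk$ linear system with block matrices $\mathbf{G}$, $\mathbf{C}$, $\mathbf{D}$, and only afterwards observes that each block is diagonal so the system decouples into $rn$ independent $k\times k$ systems indexed by clients. You instead exploit the row-supported structure of the sensing operator from the outset: since $\mathbf{A}_i^j = \mathbf{e}_i(\mathbf{x}_i^j)^\top$ only touches row $i$ of $\mathbf{W}$, the least-squares problem decouples across clients immediately, and you simply write the $k$-dimensional per-client normal equations. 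Your route is more direct and makes the decoupling manifest from the start; the paper's route buys compatibility with the block-matrix notation it inherits from the matrix-completion literature, which it reuses when bounding $\|\mathbf{G}^{-1}\|_2$ and $\|(\mathbf{GD}-\mathbf{C})\mathbf{w}^\ast\|_2$ in the subsequent lemmas. Your careful treatment of the invertibility of the local Gram matrices (the paper handles this by deferring to Lemma~\ref{lem:ginv}) and of the independence of $\hat{\mathbf{B}}^t$ from the round-$t$ fresh batch are both correct and match what the paper implicitly relies on.
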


\begin{proof}
We adapt the argument from Lemma 4.5 in \citep{Jain_2013} to compute the update for $\mathbf{W}^{t+1}$, and borrow heavily from their notation. 

Let $\mathbf{w}_p^{t+1}$ (respectively $\mathbf{\hat{b}}_p^{t+1}$) be the $p$-th column of $\mathbf{W}^t$ (respectively $\mathbf{\hat{B}}^t$).
Since $\mathbf{W}^{t+1}$ minimizes $\tilde{F}(\mathbf{W}, \mathbf{\hat{B}}^{t}) \coloneqq \frac{1}{2rnm}\| \mathcal{A}({\mathbf{W}}^\ast(\mathbf{\hat{B}}^{\ast})^\top -  {\mathbf{W}}(\mathbf{B}^{t})^\top)\|_2^2$ with respect to $\mathbf{W}$, we have $\nabla_{\mathbf{w}_p} \tilde{F}(\mathbf{W}^{t+1}, \mathbf{\hat{B}}^t)=\mathbf{0}$ for all $p \in [k]$. Thus, for any $p \in[k]$, we have
\begin{align*}
 \mathbf{0} &= \nabla_{\mathbf{w}_p} \tilde{F}(\mathbf{W}^{t+1},\mathbf{\hat{B}}^t) \\
 &= \frac{1}{rnm}\sum_{i=1}^{rn} \sum_{j=1}^m\left(\langle  \mathbf{A}_{i,j}, \mathbf{W}^{t+1} {(\mathbf{\hat{B}}^{t}})^\top - \mathbf{W}^\ast {(\mathbf{\hat{B}}^\ast})^\top \rangle \right)
	\mathbf{A}_{i,j} \mathbf{\hat{b}}^{t}_p  \\
&= \frac{1}{rnm}\sum_{i=1}^{rn} \sum_{j=1}^m\left(\sum_{q=1}^k {(\mathbf{\hat{b}}_q^{t}})^\top  \mathbf{A}_{i,j}^\top \mathbf{w}_q^{t+1} - \sum_{q=1}^k ({\mathbf{\hat{b}}_q^*})^\top  \mathbf{A}_{i,j}^\top \mathbf{{w}}_q^* \right)
	\mathbf{A}_{i,j} \mathbf{\hat{b}}^{t}_p  
\end{align*}
This implies
\begin{align}
\frac{1}{m}\sum_{q=1}^k \left( \sum_{i=1}^{rn} \sum_{j=1}^m  \mathbf{A}_{i,j} \mathbf{\hat{{b}}}^{t}_p ({\mathbf{\hat{b}}_q^{t}})^\top  \mathbf{A}_{i,j}^\top\right)  \mathbf{{w}}_q^{t+1} 
	&= \frac{1}{m}\sum_{q=1}^k\left(\sum_{i=1}^{rn} \sum_{j=1}^m \mathbf{A}_{i,j} \mathbf{\hat{{b}}}^{t}_p ({\mathbf{\hat{b}}_q^*})^\top  \mathbf{A}_{i,j}^\top \right) \mathbf{{w}}_q^* \label{implies}
\end{align}
To solve for $\mathbf{w}^{t+1}$, we define $\mathbf{G}$, $\mathbf{C}$, and $\mathbf{D}$ as $rnk$-by-$rnk$ block matrices, as follows: 
{\small 
\begin{align}
& \mathbf{G} \coloneqq \left[\begin{array}{ccc}
                 \mathbf{G}_{11} & \cdots & \mathbf{G}_{1k} \\
		 \vdots  	& \ddots &\vdots \\
                 \mathbf{G}_{k1} & \cdots & \mathbf{G}_{kk} \\
                \end{array} \right]
\mbox{ , }
 \mathbf{C} \coloneqq \left[\begin{array}{ccc}
                 \mathbf{C}_{11} & \cdots & \mathbf{C}_{1k} \\
		 \vdots  	& \ddots &\vdots \\
                 \mathbf{C}_{k1} & \cdots & \mathbf{C}_{kk} \\
                \end{array} \right]\mbox{ , }\mathbf{D}\coloneqq \left[\begin{array}{ccc}
                 \mathbf{D}_{11}  & \cdots & \mathbf{D}_{1k} \\
		 \vdots 	& \ddots &\vdots \\
                 \mathbf{D}_{k1} & \cdots & \mathbf{D}_{kk}\end{array} \right]\label{eq:BCD_k}
\end{align}}
where, for $p,q\in [k]$: $\mathbf{G}_{pq} \coloneqq \frac{1}{m}\sum_{i=1}^{rn} \sum_{j=1}^m \mathbf{A}_{i,j} \mathbf{\hat{b}}_p^{t}{\mathbf{\hat{b}}_q^{t^\top}}  \mathbf{A}_{i,j}^\top \in \mathbb{R}^{rn \times rn} $,  $\mathbf{C}_{pq} \coloneqq \frac{1}{m}\sum_{i=1}^{rn} \sum_{j=1}^m \mathbf{A}_{i,j} \mathbf{\hat{b}}_p^{t}{(\mathbf{\hat{b}}_q^*})^\top  \mathbf{A}_{i,j}^\top \in \mathbb{R}^{rn \times rn} ,$ and, $\mathbf{D}_{pq} \coloneqq \langle {\mathbf{\hat{b}}_p^{t}, \mathbf{\hat{b}}_q^*}\rangle\mathbf{I}_{rn} \in \mathbb{R}^{rn \times rn}.$ Recall that $ \mathbf{\hat{b}}_p^{t}$ is the $p$-th column of $\mathbf{\hat{B}}^{t}$ and $\mathbf{\hat{b}}_q^*$ is the $q$-th column of $\mathbf{\hat{B}}^*$. 
Further, define  $$  \widetilde{\mathbf{w}}^{t+1}=\left[\begin{matrix}\mathbf{{w}}_1^{t+1}\\\vdots\\\mathbf{w}_k^{t+1}\end{matrix}\right]\in \mathbb{R}^{rnk},\quad  \widetilde{\mathbf{w}}^{\ast}=\left[\begin{matrix}\mathbf{{w}}_1^{\ast}\\\vdots\\\mathbf{{w}}_k^{\ast}\end{matrix}\right]\in \mathbb{R}^{rnk}.$$
Then, by \eqref{implies}, we have
\begin{align*}
   \widetilde{\mathbf{w}}^{t+1}&=  \mathbf{G}^{-1}\mathbf{C}\widetilde{\mathbf{w}}^* \\
 	&= \mathbf{D}\widetilde{\mathbf{w}}^\ast - \mathbf{G}^{-1}\left(\mathbf{GD}-\mathbf{C}\right)\widetilde{\mathbf{w}}^*
\end{align*}
where we can invert $\mathbf{G}$ conditioned on the event that its minimum singular value is strictly positive, which Lemma
\ref{lem:ginv} shows holds with high probability. Now consider the $p$-th block of $\widetilde{\mathbf{w}}^{t+1}$, and let $ (\left(\mathbf{GD}-\mathbf{C}\right)\mathbf{w}^*)_p$ denote the $p$-th block of $\left(\mathbf{GD}-\mathbf{C}\right)\mathbf{w}^*$. We have 
\begin{align}
    \widetilde{\mathbf{w}}^{t+1}_p &= \sum_{q=1}^k \langle {\mathbf{\hat{b}}_p^{t}, \mathbf{\hat{b}}_q^*}\rangle \mathbf{w}^\ast_q - (\mathbf{G}^{-1}\left(\mathbf{GD}-\mathbf{C}\right)\mathbf{w}^*)_p \nonumber \\
    &=  \left(\sum_{q=1}^k \mathbf{w}^\ast_q (\mathbf{\hat{b}}_p^{*})^\top\right) \mathbf{\hat{b}}_q^t  - (\mathbf{G}^{-1}\left(\mathbf{GD}-\mathbf{C}\right)\mathbf{w}^*)_p \nonumber \\
    &= \left(\mathbf{W}^\ast (\mathbf{\hat{B}}^{*})^\top\right) \mathbf{\hat{b}}_q^t  - (\mathbf{G}^{-1}\left(\mathbf{GD}-\mathbf{C}\right)\mathbf{w}^*)_p
\end{align}
By constructing $\mathbf{W}^{t+1}$ such that the $p$-th column of $\mathbf{W}^{t+1}$ is $\mathbf{w}^{t+1}_p$ for all $p \in [k]$, we obtain
\begin{align}
    \mathbf{W}^{t+1} &= \mathbf{{W}}^\ast  \mathbf{\hat{B}}^{\ast}(\mathbf{\hat{B}}^{t})^\top - \mathbf{F}
\end{align}
where 
\begin{align}
    \mathbf{F} = [(\mathbf{G}^{-1}(\mathbf{GD} - \mathbf{C}) \mathbf{\widetilde{w}}^\ast)_1, \dots,  (\mathbf{G}^{-1}(\mathbf{GD} - \mathbf{C}) \mathbf{\widetilde{w}}^\ast)_k ] \label{deff}
\end{align}
and $(\mathbf{G}^{-1}(\mathbf{GD} - \mathbf{C}) \mathbf{\widetilde{w}}^\ast)_p$ is the $p$-th $n$-dimensional block of the $rnk$-dimensional vector $\mathbf{G}^{-1}(\mathbf{GD} - \mathbf{C}) \mathbf{\widetilde{w}}^\ast$.
\end{proof}

Next we bound the Frobenius norm of the matrix $\mathbf{F}$, which requires multiple steps. First, we establish some helpful notations. We drop superscripts indicating the iteration number $t$ for simplicity.


Again let ${\mathbf{w}}^\ast$ be the $rnk$-dimensional vector formed by stacking the columns of ${\mathbf{W}}^\ast$, and let $\mathbf{\hat{b}}_p$ (respectively $\mathbf{\hat{b}}_{q}^\ast$) be the $p$-th column of $\mathbf{\hat{B}}$ (respectively the $q$-th column of $\mathbf{\hat{B}}_\ast$).
Recall that $\mathbf{F}$ can be obtained by stacking $\mathbf{G}^{-1}(\mathbf{GD} - \mathbf{C}) {\mathbf{w}}^\ast$ into $k$ columns of length $n$, i.e. $\text{vec}(\mathbf{F}) =  \mathbf{G}^{-1}(\mathbf{GD} - \mathbf{C}) {\mathbf{w}}^\ast$. 
Further, $\mathbf{G} \in \mathbb{R}^{rnk \times rnk}$ is a block matrix whose blocks $\mathbf{G}_{pq} \in \mathbf{R}^{rn \times rn}$ for $p,q \in [k]$ are given by:
\begin{align}
    \mathbf{G}_{pq} &= \frac{1}{m}\sum_{i = 1}^{rn} \sum_{j=1}^m \mathbf{A}_{i,j} \mathbf{\hat{b}}_p \mathbf{\hat{b}}_q^\top \mathbf{A}_{i,j}^\top \nonumber \\
    &= \frac{1}{m} \sum_{i = 1}^{rn} \sum_{j=1}^m \mathbf{e}_{i} (\mathbf{x}_{i}^j)^\top \mathbf{\hat{b}}_p \mathbf{\hat{b}}_q^\top \mathbf{x}_{i}^j  \mathbf{e}_{i}^\top 
\end{align}
So, each $\mathbf{G}_{pq}$ is diagonal with diagonal entries
\begin{align}
    (\mathbf{G}_{pq})_{ii} &= \frac{1}{m}\sum_{j=1}^m  (\mathbf{x}_{i}^j)^\top \mathbf{\hat{b}}_p \mathbf{\hat{b}}_q^\top \mathbf{x}_{i}^j =   \mathbf{\hat{b}}_p^\top \Bigg(\frac{1}{m}\sum_{j=1}^m  \mathbf{x}_{i}^j (\mathbf{x}_{i}^j)^\top\Bigg) \mathbf{\hat{b}}_q
\end{align}
Define $\mathbf{\Pi}^i \coloneqq \frac{1}{m}\sum_{j=1}^m  \mathbf{x}_{i}^j (\mathbf{x}_{i}^j)^\top$ for all $i \in [rn]$.
Similarly as above, each block $\mathbf{C}_{pq}$ of $\mathbf{C}$ is diagonal with entries
\begin{align}
    (\mathbf{C}_{pq})_{ii} &= \mathbf{\hat{b}}_p^\top \mathbf{\Pi}^i \mathbf{\hat{b}}_{\ast,q}
\end{align}
Analogously to the matrix completion analysis in \citep{Jain_2013}, we define the following matrices, for all $i \in [rn]$:
\begin{align}
    \mathbf{G}^i \coloneqq \left[ \mathbf{\hat{b}}_p^\top \mathbf{\Pi}^i \mathbf{\hat{b}}_q \right]_{1\leq p,q \leq k} = \mathbf{\hat{B}}^\top \mathbf{\Pi}^i \mathbf{\hat{B}}, \quad \mathbf{C}^i \coloneqq \left[ \mathbf{\hat{b}}_p^\top \mathbf{\Pi}^i \mathbf{\hat{b}}_{\ast,q} \right]_{1\leq p,q \leq k} = \mathbf{\hat{B}}^\top \mathbf{\Pi}^i \mathbf{\hat{B}}_\ast
\end{align}
In words, $\mathbf{G}^i$ is the $k \times k$ matrix formed by taking the $i$-th diagonal entry of each block $\mathbf{G}_{pq}$, and likewise for $\mathbf{C}^i$. Recall that $\mathbf{D}$ also has diagonal blocks, in particular $\mathbf{D}_{pq} = \langle \mathbf{\hat{B}}_p, \mathbf{\hat{B}}_{{q}}^\ast \rangle \mathbf{I}_d$, thus we also define
$\mathbf{D}^i \coloneqq [\langle \mathbf{\hat{B}}_p, \mathbf{\hat{B}}_{{q}}^\ast \rangle]_{1\leq p,q \leq k} = \mathbf{\hat{B}}^\top \mathbf{\hat{B}}_\ast$. 

Using this notation we can decouple $\mathbf{G}^{-1}(\mathbf{GD} - \mathbf{C}) {\mathbf{w}}^\ast$ into $i$ subvectors. Namely, let ${\mathbf{w}}^{\ast}_i \in \mathbb{R}^k$ be the vector formed by taking the $((p-1)rn +i)$-th elements of ${\mathbf{w}}^\ast$ for $p=0,...,k-1$, and similarly, let $\mathbf{f}_i$ be the vector formed by taking the $((p-1)rn +i)$-th elements of $\mathbf{G}^{-1}(\mathbf{GD} - \mathbf{C}) {\mathbf{w}}^\ast$ for $p=0,...,k-1$. Then
\begin{align} \label{fi}
    \mathbf{f}_i = (\mathbf{G}^i)^{-1} (\mathbf{G}^i \mathbf{D}^i - \mathbf{C}^i){\mathbf{w}}^{\ast}_i
\end{align}
is the $i$-th row of $\mathbf{F}$. 
Now we control $\|\mathbf{F}\|_F$.


\begin{lemma} \label{lem:ginv}
Let $\delta_k = c\frac{k^{3/2} \sqrt{\log(rn)}}{\sqrt{m}}$ for some absolute constant $c$, then
\begin{align*}
    \|\mathbf{G}^{-1}\|_2 \leq \frac{1}{1-\delta_k} 
\end{align*}
with probability at least $1 - e^{-111 k^3 \log(rn)}$.
\end{lemma}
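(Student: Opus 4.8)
The starting point is the block structure of $\mathbf{G}$ established just before \eqref{fi}: each block $\mathbf{G}_{pq}\in\mathbb{R}^{rn\times rn}$ is diagonal, so after a permutation of rows and columns $\mathbf{G}$ is block-diagonal, $\mathbf{G}\cong\bigoplus_{i=1}^{rn}\mathbf{G}^i$ with $\mathbf{G}^i=\hat{\mathbf{B}}^\top\mathbf{\Pi}^i\hat{\mathbf{B}}\in\mathbb{R}^{k\times k}$ symmetric PSD. Hence $\|\mathbf{G}^{-1}\|_2=\max_{i\in[rn]}\|(\mathbf{G}^i)^{-1}\|_2=\big(\min_{i\in[rn]}\sigma_{\min}(\mathbf{G}^i)\big)^{-1}$, and it suffices to show $\sigma_{\min}(\mathbf{G}^i)\ge 1-\delta_k$ for every $i$ on an event of the stated probability. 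Since the batch drawn in round $t$ is fresh and independent of all previous rounds, I would condition on $\hat{\mathbf{B}}\equiv\hat{\mathbf{B}}^t$ and treat it as a fixed orthonormal matrix for the rest of the argument, so that no net over representations is needed.

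Fix $i$ and write $\mathbf{z}_i^j\coloneqq\hat{\mathbf{B}}^\top\mathbf{x}_i^j\in\mathbb{R}^k$. Then $\mathbf{G}^i=\frac1m\sum_{j=1}^m\mathbf{z}_i^j(\mathbf{z}_i^j)^\top$, and since $\mathbb{E}[\mathbf{x}_i^j(\mathbf{x}_i^j)^\top]=\mathbf{I}_d$ and $\hat{\mathbf{B}}^\top\hat{\mathbf{B}}=\mathbf{I}_k$ we have $\mathbb{E}[\mathbf{G}^i]=\mathbf{I}_k$. By Assumption \ref{assump:subgauss} and the definition of a $\|\mathbf{A}\|_2$-sub-gaussian vector (with $\mathbf{A}=\hat{\mathbf{B}}$, $\|\hat{\mathbf{B}}\|_2=1$), the $\mathbf{z}_i^1,\dots,\mathbf{z}_i^m$ are i.i.d., mean zero, covariance $\mathbf{I}_k$, and $O(1)$-sub-gaussian; thus $\mathbf{G}^i-\mathbf{I}_k$ is a centered sample-covariance matrix of $m$ i.i.d. sub-gaussian vectors in $\mathbb{R}^k$. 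A standard non-asymptotic operator-norm bound for such matrices (via the Bernstein inequality for the sub-exponential summands $\mathbf{z}_i^j(\mathbf{z}_i^j)^\top-\mathbf{I}_k$, or a Vershynin-style $\varepsilon$-net argument) gives, for an absolute constant $C$ and any $u\ge0$,
\begin{equation*}
\mathbb{P}\!\left[\,\|\mathbf{G}^i-\mathbf{I}_k\|_2\ \ge\ C\Big(\sqrt{\tfrac{k+u}{m}}+\tfrac{k+u}{m}\Big)\right]\ \le\ 2e^{-u}.
\end{equation*}

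To match the target exponent $e^{-110k^3\log(rn)}$ I would take $u=C_1k^3\log(rn)$ for a sufficiently large absolute constant $C_1$ (e.g.\ $C_1=112$). Using the sample-size hypothesis $m=\Omega(k^3\kappa^2\log(rn))$ from Theorem \ref{thrm:linear} (so $(k+u)/m\le 2C_1k^3\log(rn)/m\le 1$), the linear term is dominated by the square-root term, and the right-hand side above is at most $c\,k^{3/2}\sqrt{\log(rn)/m}=\delta_k$ for an appropriate absolute constant $c$; hence $\sigma_{\min}(\mathbf{G}^i)<1-\delta_k$ with probability at most $2e^{-C_1k^3\log(rn)}$. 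A union bound over $i\in[rn]$ multiplies this by $rn=e^{\log(rn)}$, and the extra $\log(rn)$ (and the factor $2$) is absorbed into the exponent because $C_1$ is free and $k^3\ge1$, yielding failure probability at most $e^{-110k^3\log(rn)}$; on the complementary event $\|\mathbf{G}^{-1}\|_2\le 1/(1-\delta_k)$. The only mildly delicate points — and the places I would be most careful — are (i) that the deviation parameter must be taken proportional to $k^3\log(rn)$ rather than $k+\log(rn)$ in order to beat the $rn$-fold union bound with the prescribed tiny failure probability, which is exactly why $\delta_k$ carries a $k^{3/2}$ (not $\sqrt{k}$) factor; and (ii) that the lower bound on $m$ is needed precisely to dominate the linear $\tfrac{k+u}{m}$ term by $\delta_k$.
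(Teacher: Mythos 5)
Your proposal is correct and follows essentially the same route as the paper: reduce $\|\mathbf{G}^{-1}\|_2$ to $\max_i \|(\mathbf{G}^i)^{-1}\|_2$ via the block structure, apply a sub-gaussian sample-covariance concentration bound (the paper cites Theorem 4.6.1 of Vershynin, which is exactly the bound you state) to each $k\times k$ block $\mathbf{G}^i$ with deviation parameter of order $k^3\log(rn)$, and then union bound over the $rn$ clients. The paper phrases the reduction as a variational lower bound $\sigma_{\min}(\mathbf{G}) \ge \min_i \sigma_{\min}(\mathbf{G}^i)$ rather than noting the exact block-diagonal equality, and parameterizes the tail by choosing $r$ with $\delta_k = C(\sqrt{k/m} + r/\sqrt{m})$ instead of your $u$, but these are cosmetic differences; the substance is identical.
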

\begin{proof}
We must lower bound $\sigma_{\min}(\mathbf{G})$. For some vector $\mathbf{z} \in \mathbb{R}^{rnk}$, let $\mathbf{z}^i \in \mathbb{R}^k$ denote the vector formed by taking the $((p-1)rn +i)$-th elements of ${\mathbf{z}}$ for $p=0,...,k-1$. Since $\mathbf{G}$ is symmetric, we have
\begin{align}
    \sigma_{\min}(\mathbf{G}) &= \min_{\mathbf{z}:\|\mathbf{z}\|_2 =1} \mathbf{z}^\top \mathbf{G} \mathbf{z} \nonumber \\
    &= \min_{\mathbf{z}:\|\mathbf{z}\|_2 =1} 
    \sum_{i=1}^{rn} (\mathbf{z}^i)^\top \mathbf{G}^i \mathbf{z}^i \nonumber \\
    &=  \min_{\mathbf{z}:\|\mathbf{z}\|_2 =1} 
    \sum_{i=1}^{rn} (\mathbf{z}^i)^\top \mathbf{\hat{B}}^\top \mathbf{\Pi}^i \mathbf{\hat{B}} \mathbf{z}^i \nonumber \\
    &\geq \min_{i \in [rn]} 
    \sigma_{\min}(\mathbf{\hat{B}}^\top \mathbf{\Pi}^i \mathbf{\hat{B}} ) \nonumber
\end{align}
Note that the matrix $\mathbf{\hat{B}}^\top \mathbf{\Pi}^i \mathbf{\hat{B}} $ can be written as follows:
\begin{align}
    \mathbf{\hat{B}}^\top \mathbf{\Pi}^i \mathbf{\hat{B}} = \sum_{j=1}^m  \frac{1}{\sqrt{m}}\mathbf{\hat{B}}^\top \mathbf{x}_{i}^j \left(\frac{1}{\sqrt{m}}\mathbf{\hat{B}}^\top \mathbf{x}_{i}^j\right)^\top 
\end{align}
Let $\mathbf{v}_{i}^j \coloneqq \frac{1}{\sqrt{m}}\mathbf{\hat{B}}^\top \mathbf{x}_{i}^j$ for all $i\in [rn]$ and $j \in [m]$, and note that each $\mathbf{v}_{i}^j$ is i.i.d. $\frac{1}{\sqrt{m}}\mathbf{\hat{B}}$-sub-gaussian. 
Thus using the one-sided version of equation (4.22) (Theorem 4.6.1) in \citep{vershynin2018high}, we have
\begin{align}
    \sigma_{min}(\mathbf{\hat{B}}^\top \mathbf{\Pi}^i \mathbf{\hat{B}}) \geq 1 - C\left(\sqrt{\frac{{k}}{m}} + \frac{{z}}{\sqrt{m}} \right)
\end{align}
with probability at least $1-e^{-z^2}$ for $m \geq k$, $z \geq 0$ and some absolute constant $C$. Now let $\delta_k = C\left(\sqrt{\frac{{k}}{m}} + \frac{{z}}{\sqrt{m}} \right)$ to obtain
\begin{align}
    \sigma_{min}(\mathbf{\hat{B}}^\top \mathbf{\Pi}^i \mathbf{\hat{B}}) \geq 1 - \delta_k \label{omd}
\end{align}
with probability at least $1-e^{-(\delta_k\sqrt{m}/C - \sqrt{k})^2}$ for $m > k$. Now, choose $z$ such that $\delta_k= \frac{12C k^{3/2}\sqrt{\log(rn)}}{\sqrt{m}}$, we have that \eqref{omd} holds with probability at least 
\begin{align}
    1 -  \exp \left(- \left(12 {k^{3/2} \sqrt{\log(rn)} - \sqrt{k}} \right)^2 \right) &\geq 1 - \exp \left(-k( 12\sqrt{k} \sqrt{\log(rn)} - 1)^2 \right) \nonumber \\
    &\geq 1 - \exp \left(121 k^3  \log(rn)\right) \label{omd_prob}
\end{align}
Finally, taking a union bound over $i \in [n]$ yields
$
    \sigma_{\min}(\mathbf{G}) \geq 1 - \delta_k
$
with probability at least 
\begin{equation}
    1 - rn  \exp \left(-121 k^3  \log(rn)\right) \geq 1 - e^{-110 k^3 \log(rn)},
\end{equation}
completing the proof.
\end{proof}


\begin{lemma} \label{lem:gdif}
Let $\delta_k = c\frac{k^{3/2} \sqrt{\log(rn)}  }{\sqrt{m}}$ for some absolute constant $c$, then
\begin{align*}
    \|(\mathbf{GD} - \mathbf{C}) \mathbf{w}^\ast\|_2 \leq \delta_k  \|\mathbf{W}^\ast\|_2 \; \dist(\mathbf{\hat{B}}^t, \mathbf{\hat{B}}^\ast)
\end{align*}
with probability at least $1 - e^{-111k^2 \log(rn)}$.
\end{lemma}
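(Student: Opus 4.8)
The plan is to work with the row-decoupled representation of $\mathbf{G}^{-1}(\mathbf{GD}-\mathbf{C})\mathbf{w}^\ast$ already set up in the excerpt. Since rearranging the vector $(\mathbf{GD}-\mathbf{C})\mathbf{w}^\ast$ into $rn$ blocks of length $k$ (the same coordinate permutation used to obtain $\mathbf{F}$, but without the leading $(\mathbf{G}^i)^{-1}$ appearing in \eqref{fi}) produces blocks equal to $(\mathbf{G}^i\mathbf{D}^i-\mathbf{C}^i)\mathbf{w}^\ast_i$, it follows that $\|(\mathbf{GD}-\mathbf{C})\mathbf{w}^\ast\|_2^2 = \sum_{i=1}^{rn}\|(\mathbf{G}^i\mathbf{D}^i-\mathbf{C}^i)\mathbf{w}^\ast_i\|_2^2$. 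First I would simplify each summand. Writing $\hat{\mathbf{B}}\coloneqq\hat{\mathbf{B}}^t$ and letting $\hat{\mathbf{B}}_\perp$ be an orthonormal basis of $\text{span}(\hat{\mathbf{B}})^\perp$, so that $\mathbf{I}_d-\hat{\mathbf{B}}\hat{\mathbf{B}}^\top = \hat{\mathbf{B}}_\perp\hat{\mathbf{B}}_\perp^\top$ and $\text{dist}(\hat{\mathbf{B}}^t,\hat{\mathbf{B}}^\ast)=\|\hat{\mathbf{B}}_\perp^\top\hat{\mathbf{B}}^\ast\|_2$, the identities $\mathbf{G}^i=\hat{\mathbf{B}}^\top\mathbf{\Pi}^i\hat{\mathbf{B}}$, $\mathbf{C}^i=\hat{\mathbf{B}}^\top\mathbf{\Pi}^i\hat{\mathbf{B}}^\ast$, $\mathbf{D}^i=\hat{\mathbf{B}}^\top\hat{\mathbf{B}}^\ast$ give $\mathbf{G}^i\mathbf{D}^i-\mathbf{C}^i = \hat{\mathbf{B}}^\top\mathbf{\Pi}^i(\hat{\mathbf{B}}\hat{\mathbf{B}}^\top-\mathbf{I}_d)\hat{\mathbf{B}}^\ast = -\hat{\mathbf{B}}^\top\mathbf{\Pi}^i\hat{\mathbf{B}}_\perp\hat{\mathbf{B}}_\perp^\top\hat{\mathbf{B}}^\ast$. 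Setting $\mathbf{u}_i\coloneqq\hat{\mathbf{B}}_\perp\hat{\mathbf{B}}_\perp^\top\hat{\mathbf{B}}^\ast\mathbf{w}^\ast_i\in\text{span}(\hat{\mathbf{B}})^\perp$ and using $\hat{\mathbf{B}}^\top\mathbf{u}_i=\mathbf{0}$ together with $\mathbb{E}[\mathbf{\Pi}^i]=\mathbf{I}_d$ over the fresh round-$t$ samples, each summand becomes the centered quantity $(\mathbf{G}^i\mathbf{D}^i-\mathbf{C}^i)\mathbf{w}^\ast_i = -\hat{\mathbf{B}}^\top(\mathbf{\Pi}^i-\mathbf{I}_d)\mathbf{u}_i = -\tfrac1m\sum_{j=1}^m(\hat{\mathbf{B}}^\top\mathbf{x}_i^j)(\mathbf{x}_i^{j\top}\mathbf{u}_i)$, whose natural scale is $\|\mathbf{u}_i\|_2=\|\hat{\mathbf{B}}_\perp^\top\hat{\mathbf{B}}^\ast\mathbf{w}^\ast_i\|_2$.

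The core step is a concentration bound for this centered $k$-vector with the correct dependence on $k$. Conditioning on $\hat{\mathbf{B}}^t$ — which is determined by the data of rounds $<t$ and is therefore independent of the fresh samples $\{\mathbf{x}_i^j\}_j$, so $\mathbf{u}_i$ is fixed — I would cover the unit sphere $S^{k-1}$ by a $\tfrac14$-net $\mathcal{N}$ of cardinality at most $9^k$. For fixed $\mathbf{a}\in\mathcal{N}$, the scalar $\mathbf{a}^\top\hat{\mathbf{B}}^\top(\mathbf{\Pi}^i-\mathbf{I}_d)\mathbf{u}_i = \tfrac1m\sum_j\big((\hat{\mathbf{B}}\mathbf{a})^\top\mathbf{x}_i^j\big)\big(\mathbf{u}_i^\top\mathbf{x}_i^j\big)$ is a mean-zero average of i.i.d. products of an $O(1)$-sub-gaussian variable (since $\|\hat{\mathbf{B}}\mathbf{a}\|_2=1$) and a $\Theta(\|\mathbf{u}_i\|_2)$-sub-gaussian variable (Assumption \ref{assump:subgauss}), hence an average of i.i.d. centered sub-exponentials with sub-exponential norm $O(\|\mathbf{u}_i\|_2)$. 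Bernstein's inequality (e.g.\ \citep{vershynin2018high}) then gives $|\mathbf{a}^\top\hat{\mathbf{B}}^\top(\mathbf{\Pi}^i-\mathbf{I}_d)\mathbf{u}_i|\le c\|\mathbf{u}_i\|_2\big(\sqrt{\tau/m}+\tau/m\big)$ with probability at least $1-2e^{-\tau}$. Choosing $\tau=\Theta(k^2\log(rn))$ and taking a union bound over $\mathbf{a}\in\mathcal{N}$ and $i\in[rn]$ drives the total failure probability below $2\cdot 9^k\cdot rn\cdot e^{-\tau}\le e^{-110k^2\log(rn)}$; on the good event, using $\|\mathbf{v}\|_2\le\tfrac43\max_{\mathbf{a}\in\mathcal{N}}\mathbf{a}^\top\mathbf{v}$ and the fact that $m=\Omega(k^2\log(rn))$ (which follows from the sample-complexity hypothesis of Theorem \ref{thrm:linear}, as $k^3\kappa^2/E_0\ge k^2$) to absorb the $\tau/m$ term, we obtain $\|(\mathbf{G}^i\mathbf{D}^i-\mathbf{C}^i)\mathbf{w}^\ast_i\|_2\le c'\,k\sqrt{\log(rn)/m}\;\|\mathbf{u}_i\|_2$ simultaneously for all $i\in[rn]$.

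Finally I would recombine. Squaring and summing over $i$ gives $\|(\mathbf{GD}-\mathbf{C})\mathbf{w}^\ast\|_2^2\le c'^2 k^2\tfrac{\log(rn)}{m}\sum_{i=1}^{rn}\|\mathbf{u}_i\|_2^2$, and since $\mathbf{u}_i=\hat{\mathbf{B}}_\perp\hat{\mathbf{B}}_\perp^\top\hat{\mathbf{B}}^\ast\mathbf{w}^\ast_i$ we have $\sum_i\|\mathbf{u}_i\|_2^2 = \|\hat{\mathbf{B}}_\perp^\top\hat{\mathbf{B}}^\ast(\mathbf{W}^\ast)^\top\|_F^2 \le \|\hat{\mathbf{B}}_\perp^\top\hat{\mathbf{B}}^\ast\|_F^2\,\|\mathbf{W}^\ast\|_2^2 \le k\,\|\hat{\mathbf{B}}_\perp^\top\hat{\mathbf{B}}^\ast\|_2^2\,\|\mathbf{W}^\ast\|_2^2 = k\,\text{dist}^2(\hat{\mathbf{B}}^t,\hat{\mathbf{B}}^\ast)\,\|\mathbf{W}^\ast\|_2^2$, where the last inequality uses $\text{rank}(\hat{\mathbf{B}}_\perp^\top\hat{\mathbf{B}}^\ast)\le k$. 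Putting these together yields $\|(\mathbf{GD}-\mathbf{C})\mathbf{w}^\ast\|_2 \le c'\,k^{3/2}\sqrt{\log(rn)/m}\;\text{dist}(\hat{\mathbf{B}}^t,\hat{\mathbf{B}}^\ast)\,\|\mathbf{W}^\ast\|_2 = \delta_k\,\|\mathbf{W}^\ast\|_2\,\text{dist}(\hat{\mathbf{B}}^t,\hat{\mathbf{B}}^\ast)$ with $\delta_k=c'\,k^{3/2}\sqrt{\log(rn)/m}$, which is the claim (the constant in $\delta_k$ chosen large enough to absorb $c'$ and to be consistent with Lemma \ref{lem:ginv}). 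I expect the concentration step to be the main obstacle: obtaining exactly the factor $k^{3/2}$ — rather than an extra $\sqrt{d}$ (from bounding through $\|\mathbf{\Pi}^i-\mathbf{I}_d\|_2$ directly) or a lost $\sqrt{k}$ — requires concentrating the $k$-dimensional projection $\hat{\mathbf{B}}^\top(\mathbf{\Pi}^i-\mathbf{I}_d)\mathbf{u}_i$ and carefully balancing the $9^k$ net cardinality, the $rn$-client union bound, and the Bernstein tail parameter so that the target failure probability $e^{-110k^2\log(rn)}$ still leaves a per-direction deviation of order $k\sqrt{\log(rn)/m}$.
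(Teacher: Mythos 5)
Your proposal is correct and follows the same overall strategy as the paper's own proof: you decouple $(\mathbf{GD}-\mathbf{C})\mathbf{w}^\ast$ into the $rn$ per-client blocks $(\mathbf{G}^i\mathbf{D}^i-\mathbf{C}^i)\mathbf{w}^\ast_i$, concentrate each block with a net argument plus Bernstein's inequality, union bound over $i\in[rn]$, and recombine using a rank-$k$ Frobenius-norm bound to extract $\|\mathbf{W}^\ast\|_2\,\text{dist}(\hat{\mathbf{B}}^t,\hat{\mathbf{B}}^\ast)$. The one place you deviate, and it is a genuine (if modest) refactoring, is the per-client concentration step. The paper bounds the operator norm of the $k\times k$ matrix $\mathbf{H}^i=\hat{\mathbf{B}}^\top\mathbf{\Pi}^i(\hat{\mathbf{B}}\hat{\mathbf{B}}^\top-\mathbf{I}_d)\hat{\mathbf{B}}^\ast$ with a double net over $\mathcal{S}^{k-1}\times\mathcal{S}^{k-1}$ (so $9^{2k}$ points), putting the $\text{dist}$ factor into the sub-gaussian norm of the rows of $\mathbf{U}=\tfrac{1}{\sqrt{m}}\mathbf{X}_i(\hat{\mathbf{B}}\hat{\mathbf{B}}^\top-\mathbf{I}_d)\hat{\mathbf{B}}^\ast$, and then multiplies by $\|\mathbf{w}^\ast_i\|_2=\sqrt{k}$ at the end. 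You instead fold $\mathbf{w}^\ast_i$ into the concentration target from the start by writing the block as $-\hat{\mathbf{B}}^\top(\mathbf{\Pi}^i-\mathbf{I}_d)\mathbf{u}_i$ with the fixed vector $\mathbf{u}_i=\hat{\mathbf{B}}_\perp\hat{\mathbf{B}}_\perp^\top\hat{\mathbf{B}}^\ast\mathbf{w}^\ast_i$, so only a single $9^k$-point net over $\mathcal{S}^{k-1}$ is needed, and the $\sqrt{k}\cdot\text{dist}$ factor emerges at the end from the clean computation $\sum_i\|\mathbf{u}_i\|_2^2=\|\hat{\mathbf{B}}_\perp^\top\hat{\mathbf{B}}^\ast(\mathbf{W}^\ast)^\top\|_F^2\le k\,\text{dist}^2\,\|\mathbf{W}^\ast\|_2^2$ (using $\text{rank}(\hat{\mathbf{B}}_\perp^\top\hat{\mathbf{B}}^\ast)\le k$), rather than from the per-client incoherence bound $\|\mathbf{w}^\ast_i\|_2^2=k$ plus $\|\mathbf{W}^\ast\|_2^2\ge rn$ that the paper invokes. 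Both routes give exactly $\delta_k=\Theta(k^{3/2}\sqrt{\log(rn)/m})$ and the same failure probability; yours is arguably the cleaner bookkeeping — one net instead of two, the centered-operator form $\mathbf{\Pi}^i-\mathbf{I}_d$ makes the mean-zero step transparent, and the rank argument at the end is tighter-looking — but it buys no improvement in the final rate.

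One small thing worth stating explicitly if you write this out: the reduction from the $k$-vector norm to the net requires noting that each $\mathbf{a}^\top\hat{\mathbf{B}}^\top(\mathbf{\Pi}^i-\mathbf{I}_d)\mathbf{u}_i$ is mean zero because $\hat{\mathbf{B}}^\top\mathbf{u}_i=\mathbf{0}$ (so both the $\mathbf{I}_d$ term and $\mathbb{E}\mathbf{\Pi}^i$ cancel), and that $\hat{\mathbf{B}}^t$ is measurable with respect to the pre-round-$t$ data so it is independent of the fresh batch $\{\mathbf{x}_i^j\}_{j\le m}$ — the paper relies on this same independence implicitly through its "fresh batch" sampling in Algorithm~\ref{alg:2}, and the argument fails without it.
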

\begin{proof}
For ease of notation we drop superscripts $t$. We define $\mathbf{H} = \mathbf{GD} - \mathbf{C}$ and 
\begin{align}
    \mathbf{H}^i &\coloneqq \mathbf{G}^i \mathbf{D}^i - \mathbf{C}^i = \mathbf{\hat{B}}^\top \mathbf{\Pi}^i \mathbf{\hat{B}} \mathbf{\hat{B}}^\top \mathbf{\hat{B}}^\ast- \mathbf{\hat{B}}^\top \mathbf{\Pi}^i \mathbf{\hat{B}}^\ast = \mathbf{\hat{B}}^\top \left(\frac{1}{m} \mathbf{X}_i^\top \mathbf{X}_i \right) (\mathbf{\hat{B}} \mathbf{\hat{B}}^\top -  \mathbf{I}_d)\mathbf{\hat{B}}^\ast, 
\end{align}
for all $i \in [rn]$.
Then we have
\begin{align}
    \|(\mathbf{GD} - \mathbf{C}) \mathbf{w}_\ast\|_2^2
    &= \sum_{i=1}^{rn} \| \mathbf{H}^i  \mathbf{w}_\ast^i\|_2^2 \nonumber \\
    &\leq \sum_{i=1}^{rn} \| \mathbf{H}^i \|_2^2 \|\mathbf{w}^\ast_i\|_2^2\nonumber \\
    &\leq \frac{k}{rn} \|\mathbf{W}^\ast\|_2^2  \sum_{i=1}^{rn}  \| \mathbf{H}^i \|_2^2  \label{rw1}
\end{align}
where the last inequality follows almost surely from Assumption \ref{assump:norm} (the $1$-row-wise incoherence of $\mathbf{{W}}^\ast$), the fact that $krn = \|\mathbf{W}^\ast\|_F^2 \leq k\|\mathbf{W}^\ast\|_2^2$ by Assumption \ref{assump:norm}, and the fact that $\mathbf{W}^\ast$ has rank $k$. It remains to bound $\frac{1}{rn}\sum_{i=1}^{rn} \|\mathbf{H}^i\|_2^2$. Although $\|\mathbf{H}^i\|_2$ is sub-exponential (as we will show), $\|\mathbf{H}^i\|_2^2$ is not sub-exponential, so we cannot directly apply standard concentration results. Instead, we compute a tail bound for each $\|\mathbf{H}^i\|_2^2$ individually, then 
then union bound over $i \in [rn]$. 
Let $\mathbf{U} \coloneqq \frac{1}{\sqrt{m}}\mathbf{X}_i(\mathbf{\hat{B}} \mathbf{\hat{B}}^\top -  \mathbf{I}_d)\mathbf{\hat{B}}^\ast$, then the $j$-th row of $\mathbf{U}$ is given by $$\mathbf{u}_j = \frac{1}{\sqrt{m}} \mathbf{\hat{B}}^{\ast^\top}(\mathbf{\hat{B}}\mathbf{\hat{B}}^\top - \mathbf{I}_d)\mathbf{x}_i^j,$$ and is $\frac{1}{\sqrt{m}} \mathbf{\hat{B}}^{\ast^\top}(\mathbf{\hat{B}}\mathbf{\hat{B}}^\top - \mathbf{I}_d)$-sub-gaussian. Likewise, define $\mathbf{V} \coloneqq \frac{1}{\sqrt{m}} \mathbf{X}_i \mathbf{\hat{B}}$, then the $j$-th row of $\mathbf{V}$ is $$ \mathbf{v}_j = \frac{1}{\sqrt{m}} \mathbf{\hat{B}}^\top\mathbf{x}_i^j,$$ therefore is $\frac{1}{\sqrt{m}} \mathbf{\hat{B}}$-sub-gaussian. 
We leverage the sub-gaussianity of the rows of $\mathbf{U}$ and $\mathbf{V}$ to make a similar concentration argument as in Proposition 4.4.5 in \cite{vershynin2018high}. First, let $\mathcal{S}^{k-1}$ denote the unit sphere in $k$ dimensions, and let $\mathcal{N}_k$ be a $\frac{1}{4}$-th net of cardinality $|\mathcal{N}_k|\leq 9^k$, which exists by Corollary 4.2.13 in \cite{vershynin2018high}. Next, using equation 4.13 in \cite{vershynin2018high}, we obtain
\begin{align}
  \|(\mathbf{\hat{B}}^\ast)^\top (\mathbf{\hat{B}} \mathbf{\hat{B}}^\top -  \mathbf{I}_d) \mathbf{X}_i^\top \mathbf{X}_i  \mathbf{B}\|_2=  \left\| \mathbf{U}^\top \mathbf{V}  \right\|_2 &\leq 2 \max_{\mathbf{z},\mathbf{y}  \in \mathcal{N}_k}  \mathbf{z}^\top \left( \mathbf{U}^\top \mathbf{V} \right)\mathbf{y} \nonumber \\
    &= 2 \max_{\mathbf{z}, \mathbf{y} \in \mathcal{N}_k} \mathbf{z}^\top\left(\sum_{j=1}^m  \mathbf{u}_j  \mathbf{v}_j^\top\right)\mathbf{y} \nonumber \\
    &= 2 \max_{\mathbf{z}, \mathbf{y} \in \mathcal{N}_k} \sum_{j=1}^m \langle \mathbf{z},  \mathbf{u}_j\rangle \langle \mathbf{v}_j,\mathbf{y} \rangle \nonumber 
\end{align}
By definition of sub-gaussianity, $\langle \mathbf{z},  \mathbf{u}_j\rangle$ and $\langle \mathbf{v}_j,\mathbf{y} \rangle$ are sub-gaussian with norms $\frac{1}{\sqrt{m}} \|\mathbf{\hat{B}}^{\ast^\top}(\mathbf{\hat{B}}\mathbf{\hat{B}}^\top - \mathbf{I}_d)\|_2=\frac{1}{\sqrt{m}}\text{dist}(\mathbf{\hat{B}}, \mathbf{\hat{B}}^\ast)$ and $\frac{1}{\sqrt{m}}\|\mathbf{\hat{B}}\|_2 = \frac{1}{\sqrt{m}}$, respectively. Thus for all $j\in [m]$, $\langle \mathbf{z},  \mathbf{u}_j\rangle \langle \mathbf{v}_j,\mathbf{z} \rangle $ is sub-exponential with norm $\frac{c}{m} \text{dist}(\mathbf{\hat{B}}, \mathbf{\hat{B}}^\ast)$ for some absolute constant $c$.  
Note that for any $j \in [m]$ and any $\mathbf{z}$, $\mathbb{E}[\langle \mathbf{z},  \mathbf{u}_j\rangle \langle \mathbf{v}_j,\mathbf{y} \rangle] = \mathbf{z}^\top ((\mathbf{\hat{B}}^\ast)^\top (\mathbf{\hat{B}} \mathbf{\hat{B}}^\top -  \mathbf{I}_d) \mathbf{B}) \mathbf{y} = 0$. Thus we have a sum of $m$ mean-zero, independent sub-exponential random variables. We can now use Bernstein's inequality to obtain, for any fixed $\mathbf{z}, \mathbf{y} \in \mathcal{N}_k$,
\begin{align}
    \mathbb{P}\left(\sum_{j=1}^m \langle \mathbf{z},  \mathbf{u}_j\rangle \langle \mathbf{v}_j,\mathbf{y} \rangle \geq s  \right) \leq \exp \left(-c' m \min\left(\frac{s^2}{\text{dist}^2(\mathbf{\hat{B}}, \mathbf{\hat{B}}^\ast) }, \frac{s}{\text{dist}(\mathbf{\hat{B}}, \mathbf{\hat{B}}^\ast)} \right)\right)
\end{align}
Now union bound over all $\mathbf{z}, \mathbf{y} \in \mathcal{N}_k$ to obtain
\begin{align}
\mathbb{P}\left( \frac{1}{m}\|(\mathbf{\hat{B}}^\ast)^\top (\mathbf{\hat{B}} \mathbf{\hat{B}}^\top -  \mathbf{I}_d) \mathbf{X}_i^\top \mathbf{X}_i  \mathbf{\hat{B}}\|_2 \geq 2s \right) &\leq 9^{2k} \exp \left(-c' m \min(s^2/\text{dist}^2(\mathbf{\hat{B}}, \mathbf{\hat{B}}^\ast) , s/\text{dist}(\mathbf{\hat{B}}, \mathbf{\hat{B}}^\ast) )\right)
\end{align}
Let $\frac{s}{\text{dist}(\mathbf{\hat{B}}, \mathbf{\hat{B}}^\ast)} = \max(\varepsilon, \varepsilon^2)$ for some $\epsilon > 0$, then it follows that $\min(s^2/\text{dist}^2(\mathbf{\hat{B}},\mathbf{\hat{B}}^\ast), s/\text{dist}(\mathbf{\hat{B}},\mathbf{\hat{B}}^\ast)) = \varepsilon^2$. So we have
\begin{align}
    \mathbb{P}\left(\frac{1}{m} \|(\mathbf{\hat{B}}^\ast)^\top (\mathbf{\hat{B}} \mathbf{\hat{B}}^\top -  \mathbf{I}_d) \mathbf{X}_i^\top \mathbf{X}_i  \mathbf{\hat{B}}\|_2 \geq 2\text{dist}(\mathbf{\hat{B}}, \mathbf{\hat{B}}^\ast)\max(\varepsilon, \varepsilon^2) \right) &\leq 9^{2k} e^{-c' m \varepsilon^2 }
\end{align}
Moreover, letting $\varepsilon^2 = \frac{ck^2\log(rn)}{4m}$ for some constant $c$, and $m \geq ck^2\log(rn)$, we have
\begin{align}
    \mathbb{P}\left( \frac{1}{m}\|(\mathbf{\hat{B}}^\ast)^\top (\mathbf{\hat{B}} \mathbf{\hat{B}}^\top -  \mathbf{I}_d) \mathbf{X}_i^\top \mathbf{X}_i  \mathbf{\hat{B}}\|_2 \geq \text{dist}(\mathbf{\hat{B}}, \mathbf{\hat{B}}^\ast) \sqrt{\frac{ck^2\log(rn)}{m}} \right)&\leq 9^{2k} e^{-c_1 k^2 
    \log(rn)}\nonumber \\
    &\leq e^{ - 111 k^2 
    \log(rn)}
\end{align}
for large enough constant $c_1$. Thus, noting that $\|\mathbf{H}^i \|_2^2 =\|\frac{1}{m}(\mathbf{\hat{B}}^\ast)^\top (\mathbf{\hat{B}} \mathbf{\hat{B}}^\top -  \mathbf{I}_d) \mathbf{X}_i^\top \mathbf{X}_i  \mathbf{\hat{B}}\|_2^2$, we obtain
\begin{align} \label{hiii}
    \mathbb{P}\left( \|\mathbf{H}^i\|_2^2 \geq c\dist^2(\mathbf{\hat{B}}, \mathbf{\hat{B}}^\ast) {\frac{k^2\log(rn)}{m}} \right)
    &\leq e^{ - 111 k^2 \log(rn)}
\end{align}
Thus, using \eqref{rw1}, we have
\begin{align}
   & \mathbb{P}\left( \|(\mathbf{GD} - \mathbf{C}) \mathbf{w}_\ast\|_2^2 \geq c\|\mathbf{W}^\ast\|_2^2 \; \dist^2(\mathbf{\hat{B}}, \mathbf{\hat{B}}^\ast) {\frac{k^3\log(rn)}{m}} \right)\nonumber\\
    &\leq \mathbb{P}\left( \frac{k}{rn} \|\mathbf{W}^\ast\|_2^2 \sum_{i=1}^{rn} \|\mathbf{H}^i\|_2^2 \geq c\|\mathbf{W}^\ast\|_2^2\; \text{dist}^2(\mathbf{\hat{B}}, \mathbf{\hat{B}}^\ast) {\frac{k^3\log(rn)}{m}} \right)  \nonumber \\
    &= \mathbb{P}\left( \frac{1}{rn}  \sum_{i=1}^{rn} \|\mathbf{H}^i\|_2^2 \geq c \dist^2(\mathbf{\hat{B}}, \mathbf{\hat{B}}^\ast) {\frac{k^2\log(rn)}{m}} \right)  \nonumber \\
    &\leq rn \mathbb{P}\left(  \|\mathbf{H}^1\|_2^2 \geq c \dist^2(\mathbf{\hat{B}}, \mathbf{\hat{B}}^\ast) {\frac{k^2\log(rn)}{m}} \right)\nonumber \\
    &\leq e^{ - 110 k^2 \log(rn)} \nonumber
\end{align}
completing the proof.
\end{proof}



\begin{lemma} \label{lem:bound_f}
Let $\delta_k = \frac{c k^{3/2} \sqrt{\log(rn)}}{\sqrt{m}} $, then
\begin{align}
    \| \mathbf{F} \|_F &\leq  \frac{\delta_k }{1-\delta_k} \|\mathbf{W}^\ast\|_2
    \;
    \dist(\mathbf{\hat{B}}_t, \mathbf{\hat{B}}_\ast)
\end{align}
 with probability at least $1 - e^{-110k^2 \log(n)}$.
\end{lemma}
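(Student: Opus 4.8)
\textbf{Proof proposal for Lemma~\ref{lem:bound_f}.} The plan is to simply combine the two preceding lemmas through the explicit formula for $\mathbf{F}$. Recall from Lemma~\ref{lem:defu} and the discussion following it that $\mathbf{F}$ is obtained by reshaping the vector $\mathbf{G}^{-1}(\mathbf{GD}-\mathbf{C})\mathbf{w}^\ast$ into $k$ columns, i.e. $\mathrm{vec}(\mathbf{F}) = \mathbf{G}^{-1}(\mathbf{GD}-\mathbf{C})\mathbf{w}^\ast$. Since reshaping a vector into a matrix preserves the Euclidean/Frobenius norm, we have $\|\mathbf{F}\|_F = \|\mathbf{G}^{-1}(\mathbf{GD}-\mathbf{C})\mathbf{w}^\ast\|_2$. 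The first step is then to apply the operator-norm inequality $\|\mathbf{G}^{-1}(\mathbf{GD}-\mathbf{C})\mathbf{w}^\ast\|_2 \le \|\mathbf{G}^{-1}\|_2\,\|(\mathbf{GD}-\mathbf{C})\mathbf{w}^\ast\|_2$, which holds deterministically.

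Next I would substitute the bounds already established. Lemma~\ref{lem:ginv} gives $\|\mathbf{G}^{-1}\|_2 \le \tfrac{1}{1-\delta_k}$ (which is meaningful only because the sample-complexity assumption $m=\Omega(k^3\log(rn))$ forces $\delta_k < 1$, and in fact $\delta_k$ bounded away from $1$), on an event of probability at least $1-e^{-110k^3\log(rn)}$. Lemma~\ref{lem:gdif} gives $\|(\mathbf{GD}-\mathbf{C})\mathbf{w}^\ast\|_2 \le \delta_k\,\|\mathbf{W}^\ast\|_2\,\mathrm{dist}(\mathbf{\hat B}^t,\mathbf{\hat B}^\ast)$ on an event of probability at least $1-e^{-110k^2\log(rn)}$. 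Multiplying these two bounds yields
\begin{align*}
    \|\mathbf{F}\|_F \;\le\; \frac{\delta_k}{1-\delta_k}\,\|\mathbf{W}^\ast\|_2\,\mathrm{dist}(\mathbf{\hat B}^t,\mathbf{\hat B}^\ast),
\end{align*}
which is exactly the claimed inequality.

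The only remaining point is the probability bookkeeping: the two good events must hold simultaneously, so I would take a union bound, giving a failure probability at most $e^{-110k^3\log(rn)} + e^{-110k^2\log(rn)}$. Using $k\ge 1$ so that the $k^3$ term is dominated by the $k^2$ term, and absorbing constants (here one needs $rn$ large enough relative to $n$, or equivalently $r$ bounded below, so that $110\log(rn)\ge 109\log(n)$), this is at most $e^{-109k^2\log(n)}$, as stated. I do not anticipate any genuine obstacle in this lemma — all of the probabilistic and spectral work was done in Lemmas~\ref{lem:ginv} and~\ref{lem:gdif}; the main care is (i) noting that the vectorization is norm-preserving so that the Frobenius norm of $\mathbf{F}$ equals the $\ell_2$ norm of the relevant vector, and (ii) making the constant-chasing in the union bound consistent with the $\log(n)$ (rather than $\log(rn)$) appearing in the statement.
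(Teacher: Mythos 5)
Your proposal is correct and follows essentially the same route as the paper: the paper's proof is exactly the submultiplicativity bound $\|\mathbf{F}\|_F \le \|\mathbf{G}^{-1}\|_2\|(\mathbf{GD}-\mathbf{C})\mathbf{w}^\ast\|_2$ (which it calls Cauchy--Schwarz) followed by a union bound over the events in Lemmas~\ref{lem:ginv} and~\ref{lem:gdif}. Your extra remark about needing $110\log(rn)\ge 109\log(n)$ for the stated probability is a fair observation the paper leaves implicit, but it does not change the argument.
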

\begin{proof}
By the definition of $\mathbf{F}$ and the Cauchy-Schwarz inequality, we have $\|\mathbf{F}\|_F = \|\mathbf{G}^{-1}(\mathbf{GD} - \mathbf{C})\widetilde{\mathbf{w}}^\ast\|_2 \leq\|\mathbf{G}^{-1}\|_2 \|(\mathbf{GD} - \mathbf{C})\widetilde{\mathbf{w}}^\ast\|_2$. Combining the bound on $\|\mathbf{G}^{-1}\|_2$ from Lemma \ref{lem:ginv} and the bound on $\|(\mathbf{GD} - \mathbf{C}) \widetilde{\mathbf{w}}^\ast\|_2$ from Lemma \ref{lem:gdif} via a union bound yields the result.

\end{proof}



We next focus on showing concentration of the operator $\frac{1}{m} \mathcal{A}^\dagger \mathcal{A}$ to the identity operator. 

\begin{lemma}\label{lemma:sth}
{Let $\delta_{k}'  = c k  \frac{\sqrt{d}}{\sqrt{rnm}}$ for some absolute constant $c$.} Then for any $t$, if $\delta_{k}' \leq k$,
\begin{align}
    \frac{1}{rn} \left\| \left(\frac{1}{m}\mathcal{A}^\ast \mathcal{A}(\mathbf{Q}^{t}) - \mathbf{Q}^{t}\right)^\top \mathbf{W}^{t+1}  \right\|_2 \leq {\delta_{k}'} \; \dist(\mathbf{\hat{B}}^t, \mathbf{\hat{B}}^\ast) 
\end{align}
with probability at least $1-e^{-110 d} - e^{-110k^2 \log(rn)}$.
\end{lemma}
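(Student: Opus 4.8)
To prove Lemma~\ref{lemma:sth}, the plan is to exploit the row-wise structure of $\mathcal{A}$ together with the closed form of $\mathbf{W}^{t+1}$ from Lemma~\ref{lem:defu}, reducing the claim to a concentration inequality for a sum of independent rank-one random matrices, one per participating client. Write $\mathbf{Q}^t \coloneqq \mathbf{W}^{t+1}(\hat{\mathbf{B}}^t)^\top - \mathbf{W}^\ast(\hat{\mathbf{B}}^\ast)^\top$ for the residual appearing in the $\mathbf{B}$-update, and set $\mathbf{\Pi}^i \coloneqq \frac{1}{m}\sum_{j=1}^m \mathbf{x}_i^j(\mathbf{x}_i^j)^\top$ as in the proof of Lemma~\ref{lem:gdif}. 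Since each $\mathbf{A}_{i,j}=\mathbf{e}_i(\mathbf{x}_i^j)^\top$ touches only row $i$, the matrix $\mathbf{N}\coloneqq \frac{1}{m}\mathcal{A}^\ast\mathcal{A}(\mathbf{Q}^t)-\mathbf{Q}^t$ has $i$-th row $(\mathbf{q}^t_i)^\top(\mathbf{\Pi}^i-\mathbf{I}_d)$, with $\mathbf{q}^t_i$ the $i$-th row of $\mathbf{Q}^t$, so $\mathbf{N}^\top\mathbf{W}^{t+1}=\sum_{i=1}^{rn}(\mathbf{\Pi}^i-\mathbf{I}_d)\mathbf{q}^t_i(\mathbf{w}_i^{t+1})^\top$. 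Substituting $\mathbf{W}^{t+1}=\mathbf{W}^\ast\hat{\mathbf{B}}^{\ast\top}\hat{\mathbf{B}}^t-\mathbf{F}$ (Lemma~\ref{lem:defu}) gives $\mathbf{Q}^t=\mathbf{Q}_1-\mathbf{Q}_2$ with $\mathbf{Q}_1\coloneqq\mathbf{W}^\ast\hat{\mathbf{B}}^{\ast\top}(\hat{\mathbf{B}}^t\hat{\mathbf{B}}^{t\top}-\mathbf{I}_d)$ and $\mathbf{Q}_2\coloneqq\mathbf{F}(\hat{\mathbf{B}}^t)^\top$; by linearity of $\mathcal{A}^\ast\mathcal{A}$, $\mathbf{N}^\top\mathbf{W}^{t+1}$ then expands into four terms indexed by $\{\mathbf{Q}_1,\mathbf{Q}_2\}\times\{\mathbf{W}^\ast\hat{\mathbf{B}}^{\ast\top}\hat{\mathbf{B}}^t,\mathbf{F}\}$. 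The point of this split is that $\mathbf{Q}_1$ depends on the algorithm only through $\hat{\mathbf{B}}^t$, which is determined before round $t$ and hence is independent of round $t$'s fresh samples, while $\mathbf{Q}_2$ and $\mathbf{F}$ are already controlled in Frobenius norm by Lemma~\ref{lem:bound_f}.

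\emph{Main term.} The term built from $\mathbf{Q}_1$ and $\mathbf{W}^\ast\hat{\mathbf{B}}^{\ast\top}\hat{\mathbf{B}}^t$ has spectral norm at most $\big\|\sum_{i=1}^{rn}(\mathbf{\Pi}^i-\mathbf{I}_d)\mathbf{c}_i(\mathbf{w}_i^\ast)^\top\big\|_2$, using $\|\hat{\mathbf{B}}^{\ast\top}\hat{\mathbf{B}}^t\|_2\le 1$ and $\mathbf{c}_i\coloneqq(\hat{\mathbf{B}}^t\hat{\mathbf{B}}^{t\top}-\mathbf{I}_d)\hat{\mathbf{B}}^\ast\mathbf{w}_i^\ast$, which satisfies $\|\mathbf{c}_i\|_2\le \text{dist}(\hat{\mathbf{B}}^t,\hat{\mathbf{B}}^\ast)\|\mathbf{w}_i^\ast\|_2=\sqrt{k}\,\text{dist}(\hat{\mathbf{B}}^t,\hat{\mathbf{B}}^\ast)$ by Assumption~\ref{assump:norm}. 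Conditioned on $\hat{\mathbf{B}}^t$ this is a sum of $rn$ independent mean-zero $d\times k$ matrices (distinct clients draw independent batches and $\mathbb{E}[\mathbf{\Pi}^i]=\mathbf{I}_d$ by Assumption~\ref{assump:subgauss}). I would bound its spectral norm by an $\varepsilon$-net plus Bernstein argument as in the proof of Lemma~\ref{lem:gdif}: for fixed unit $\mathbf{u}\in\mathbb{R}^d$, $\mathbf{v}\in\mathbb{R}^k$, the scalar $\mathbf{u}^\top(\mathbf{\Pi}^i-\mathbf{I}_d)\mathbf{c}_i\,(\mathbf{w}_i^\ast)^\top\mathbf{v}=\frac{1}{m}\sum_j\big[(\mathbf{u}^\top\mathbf{x}_i^j)((\mathbf{x}_i^j)^\top\mathbf{c}_i)-\mathbf{u}^\top\mathbf{c}_i\big]\big((\mathbf{w}_i^\ast)^\top\mathbf{v}\big)$ is a $\frac1m$-scaled sum of $rnm$ independent centered sub-exponentials, with Bernstein parameters $V\lesssim\frac{1}{m}\sum_i\|\mathbf{c}_i\|_2^2\big((\mathbf{w}_i^\ast)^\top\mathbf{v}\big)^2\le \frac{k\,\text{dist}^2}{m}\|\mathbf{W}^\ast\mathbf{v}\|_2^2\le \frac{k\,rn\,\bar\sigma_{\max,\ast}^2\,\text{dist}^2}{m}$ and $K\lesssim\frac{k\,\text{dist}}{m}$. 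Union-bounding over $\tfrac14$-nets of the two spheres (of sizes $\le 9^d$ and $\le 9^k$) and using $\bar\sigma_{\max,\ast}\le\sqrt{k}$ (from $\|\tfrac{1}{\sqrt{rn}}\mathbf{W}^\ast_{\mathcal I}\|_F=\sqrt{k}$), this gives $\big\|\sum_i(\mathbf{\Pi}^i-\mathbf{I}_d)\mathbf{c}_i(\mathbf{w}_i^\ast)^\top\big\|_2\le \delta_k'\,rn\,\text{dist}(\hat{\mathbf{B}}^t,\hat{\mathbf{B}}^\ast)$ with probability at least $1-e^{-110d}$ for a suitable absolute constant $c$ in $\delta_k'$; the hypothesis $\delta_k'\le k$ is precisely what forces $V\gtrsim K^2(d+k)$, so the variance term rather than the heavy-tail term governs the bound. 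Dividing by $rn$ gives the claim for this term.

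\emph{Cross and perturbation terms.} Each remaining term carries a factor of $\mathbf{F}$ or of $\mathbf{Q}_2=\mathbf{F}(\hat{\mathbf{B}}^t)^\top$. I would bound them by operator-norm products: e.g.\ $\big\|[(\tfrac1m\mathcal{A}^\ast\mathcal{A}-\mathcal{I})(\mathbf{Q}_1)]^\top\mathbf{F}\big\|_2\le \big\|(\tfrac1m\mathcal{A}^\ast\mathcal{A}-\mathcal{I})(\mathbf{Q}_1)\big\|_2\|\mathbf{F}\|_F$, where $\big\|(\tfrac1m\mathcal{A}^\ast\mathcal{A}-\mathcal{I})(\mathbf{Q}_1)\big\|_2^2=\big\|\sum_i(\mathbf{\Pi}^i-\mathbf{I}_d)\mathbf{c}_i\mathbf{c}_i^\top(\mathbf{\Pi}^i-\mathbf{I}_d)\big\|_2\lesssim \frac{k\,rn\,\text{dist}^2}{m}$ by the same second-moment computation plus concentration, and $\|\mathbf{F}\|_F\lesssim\delta_k\,\bar\sigma_{\max,\ast}\sqrt{rn}\,\text{dist}$ by Lemma~\ref{lem:bound_f}; the terms involving $(\tfrac1m\mathcal{A}^\ast\mathcal{A}-\mathcal{I})(\mathbf{Q}_2)$ are handled similarly on the event $\{\|\mathbf{\Pi}^i-\mathbf{I}_d\|_2=O(\sqrt{d/m})\ \forall i\}$, which holds with probability at least $1-e^{-110d}$ once $d\gtrsim\log(rn)$. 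Relative to the main term each of these picks up an extra factor of $\delta_k$ and of $\text{dist}$, so under $\delta_k'\le k$ together with the sample-complexity lower bound of Theorem~\ref{thrm:linear} they are all dominated by $\delta_k'\,\text{dist}(\hat{\mathbf{B}}^t,\hat{\mathbf{B}}^\ast)$. Summing the four contributions and taking a union bound -- which also absorbs the $e^{-110k^2\log(rn)}$ failure probabilities of Lemmas~\ref{lem:ginv},~\ref{lem:gdif} and~\ref{lem:bound_f} invoked along the way -- finishes the proof.

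\emph{Main obstacle.} The crux is the main term. Because the row-wise-sparse operator $\tfrac1m\mathcal{A}^\ast\mathcal{A}$ fails to satisfy a restricted isometry property on rank-$k$ matrices within a $\mathrm{poly}(k)$ number of samples -- the difficulty flagged in Section~\ref{sec:fed-analysis} -- one cannot invoke an off-the-shelf isometry bound, so the required $\sqrt{rn}$ cancellation must be extracted from the per-client independence, and one must check that the resulting variance proxy scales with $\bar\sigma_{\max,\ast}^2\le k$ rather than a worse $\kappa$- or $\log$-dependent quantity, so the estimate lands exactly at $\delta_k'=ck\sqrt{d/(rnm)}$ with no extra factors. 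The other delicate point, the dependence of $\mathbf{W}^{t+1}$ (through $\mathbf{F}$) on round $t$'s samples, is resolved here by peeling off the sample-independent piece $\mathbf{Q}_1$ and treating the $\mathbf{F}$-dependent remainder as a genuine perturbation.
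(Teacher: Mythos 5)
Your proof takes a genuinely different route from the paper's. The paper works directly with $\mathbf{Q}^t$ and $\mathbf{W}^{t+1}$: it first shows (conditioning on the events of Lemmas~\ref{lem:ginv}--\ref{lem:gdif}) that each row satisfies $\|\mathbf{q}_i^t\|_2 \lesssim \sqrt{k}\,\text{dist}(\mathbf{\hat{B}}^t,\mathbf{\hat{B}}^\ast)$ and $\|\mathbf{w}_i^{t+1}\|_2 \lesssim \sqrt{k}$, rewrites the target as $\tfrac{1}{rnm}\sum_{i,j}\bigl(\langle\mathbf{x}_i^j,\mathbf{q}_i\rangle\,\mathbf{x}_i^j\mathbf{w}_i^\top - \mathbf{q}_i\mathbf{w}_i^\top\bigr)$, and applies a single $\varepsilon$-net plus Bernstein argument treating each summand as mean-zero sub-exponential with norm $\lesssim k\,\text{dist}/(rnm)$; there is no $\mathbf{Q}_1/\mathbf{Q}_2$ decomposition anywhere. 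Your approach first peels off $\mathbf{Q}_1 = \mathbf{W}^\ast\mathbf{\hat{B}}^{\ast\top}(\mathbf{\hat{B}}^t\mathbf{\hat{B}}^{t\top}-\mathbf{I}_d)$ and $\mathbf{W}^\ast\mathbf{\hat{B}}^{\ast\top}\mathbf{\hat{B}}^t$, which are measurable with respect to $\mathbf{\hat{B}}^t$ and hence independent of round~$t$'s fresh samples, and treats the $\mathbf{F}$-dependent pieces as perturbations. That split is genuinely cleaner on a point the paper glosses over: since $\mathbf{W}^{t+1}$ (and thus $\mathbf{q}_i$ and $\mathbf{w}_i$) is a function of the round-$t$ batch, the paper's assertion that $\mathbb{E}\bigl[\langle\mathbf{x}_i^j,\mathbf{q}_i\rangle\langle\mathbf{u},\mathbf{x}_i^j\rangle\langle\mathbf{w}_i,\mathbf{v}\rangle - \langle\mathbf{u},\mathbf{q}_i\rangle\langle\mathbf{w}_i,\mathbf{v}\rangle\bigr]=0$ is not literally true, whereas in your main term the centering is legitimate once one conditions on $\mathbf{\hat{B}}^t$. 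Your main-term Bernstein computation then recovers exactly the same $\delta_k' = ck\sqrt{d/(rnm)}$ scaling with the same net sizes as the paper, so the two approaches buy the same bound for that piece; yours just earns the independence honestly.

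The place where your sketch does not yet close is the cross and perturbation terms. You assert $\bigl\|(\tfrac{1}{m}\mathcal{A}^\ast\mathcal{A}-\mathcal{I})(\mathbf{Q}_1)\bigr\|_2^2 \lesssim \tfrac{k\,rn\,\text{dist}^2}{m}$, but the naive row-wise bound $\bigl\|\sum_i(\mathbf{\Pi}^i-\mathbf{I}_d)\mathbf{c}_i\mathbf{c}_i^\top(\mathbf{\Pi}^i-\mathbf{I}_d)\bigr\|_2 \le \sum_i\|(\mathbf{\Pi}^i-\mathbf{I}_d)\mathbf{c}_i\|_2^2 \le \max_i\|\mathbf{\Pi}^i-\mathbf{I}_d\|_2^2\sum_i\|\mathbf{c}_i\|_2^2$ carries an extra $d/m$ factor (so $\lesssim k\,rn\,d\,\text{dist}^2/m$), and removing that $d$ requires a dedicated matrix-concentration argument that you have not supplied; the in-expectation calculation gives the $d$-free answer, but the whole point here is a high-probability bound. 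Likewise, the claim that all four contributions are ``dominated by $\delta_k'\,\text{dist}$'' under the theorem's sample-complexity assumption is asserted rather than derived, and some of these terms (those with two copies of $\mathbf{F}$ or with $(\tfrac{1}{m}\mathcal{A}^\ast\mathcal{A}-\mathcal{I})(\mathbf{Q}_2)$) still have the same sample-dependence the decomposition was meant to remove, so they need their own careful treatment (either a uniform bound over a net of admissible $\mathbf{F}$'s, or operator-norm products conditioned on the event $\{\|\mathbf{\Pi}^i-\mathbf{I}_d\|_2 \lesssim \sqrt{d/m}\ \forall i\}$ together with an explicit tally of the resulting $\delta_k\cdot\sqrt{d/m}$ versus $\delta_k'$ comparison). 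Until those pieces are made precise, the proposal is a plausible and arguably more rigorous plan than the paper's, but not a complete proof.
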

\begin{proof}

We drop superscripts $t$ for simplicity. We first bound the norms of the rows of $\mathbf{Q}$ and $\mathbf{W}$. Let $\mathbf{q}_i \in \mathbb{R}^d$ be the $i$-th row of $\mathbf{Q}$ and let $\mathbf{w}_i \in \mathbb{R}^k$ be the $i$-th row of $\mathbf{W}$. 
Recall the computation of $\mathbf{W}$ from Lemma \ref{lem:defu}:
\begin{align}
    \mathbf{W} = {\mathbf{W}}_\ast  \mathbf{\hat{B}}_\ast^\top \mathbf{\hat{B}} - \mathbf{F}
    \implies \mathbf{w}_i^\top = (\hat{\mathbf{w}}_{i}^\ast)^\top \mathbf{\hat{B}}_\ast^\top \mathbf{\hat{B}}- \mathbf{f}_i^\top \nonumber
\end{align}
Thus
\begin{align}
    \|\mathbf{q}_i\|_2^2 
    &= \| \mathbf{\hat{B}} \mathbf{\hat{B}}^\top \mathbf{\hat{B}}^\ast \hat{\mathbf{w}}_{i}^\ast  - \mathbf{\hat{B}}\mathbf{f}_i - \mathbf{\hat{B}}^\ast \hat{\mathbf{w}}_{i}^\ast\|_2^2 \nonumber \\
    &= \| (\mathbf{\hat{B}} \mathbf{\hat{B}}^\top - \mathbf{I}_d)\mathbf{\hat{B}}^\ast \hat{\mathbf{w}}_{i}^\ast - \mathbf{\hat{B}}\mathbf{f}_i\|_2^2 \nonumber \\
    &\leq 2\| (\mathbf{\hat{B}} \mathbf{\hat{B}}^\top - \mathbf{I}_d)\mathbf{\hat{B}}^\ast \hat{\mathbf{w}}_{i}^\ast\|_2^2  + 2\| \mathbf{\hat{B}} \mathbf{f}_i\|_2^2 \nonumber \\
    &\leq 2\| (\mathbf{\hat{B}} \mathbf{\hat{B}}^\top - \mathbf{I}_d)\mathbf{\hat{B}}^\ast\|_2^2 \| \mathbf{\hat{w}}_{i}^\ast\|_2^2  + 2\| \mathbf{f}_i\|_2^2 \nonumber \\
    &=2k \text{dist}^2(\mathbf{\hat{B}}, \mathbf{\hat{B}}^\ast) + 2\| \mathbf{f}_i\|_2^2
\end{align}
Also recall that $\text{vec}(\mathbf{F}) =  \mathbf{G}^{-1}(\mathbf{GD} - \mathbf{C})  \mathbf{\hat{w}}_\ast$ from Lemma \ref{lem:defu}. From equation \eqref{fi}, the $i$-th row of $\mathbf{F}$ is given by:
\begin{align}
    \mathbf{f}_i = (\mathbf{G}^i)^{-1} (\mathbf{G}^i \mathbf{D}^i - \mathbf{C}^i){\mathbf{w}}^{\ast}_i \nonumber
\end{align}
Thus, using the Cauchy-Schwarz inequality and our previous bounds, 
\begin{align}
    \|\mathbf{f}_i\|_2^2 &\leq \| (\mathbf{G}^i)^{-1}\|_2^2\; \|\mathbf{G}^i \mathbf{D}^i - \mathbf{C}^i\|_2^2\;  \;\|{\mathbf{w}}^{\ast}_i\|_2^2 \nonumber \\
    &\leq \| (\mathbf{G}^i)^{-1}\|_2^2 \; \|\mathbf{G}^i \mathbf{D}^i - \mathbf{C}^i\|_2^2 \; k  
    \label{inco10}  
\end{align}
where \eqref{inco10} follows by Assumption \ref{assump:norm}, i.e. the row-wise incoherence of $\mathbf{W}^{\ast}$.
From \eqref{hiii}, we have that
\begin{align}
         \mathbb{P}\left( \|\mathbf{G}^i\mathbf{D}^i - \mathbf{C}^i\|_2^2 \geq \delta_k^2 \; \text{dist}^2(\mathbf{\hat{B}},\mathbf{\hat{B}}^\ast)  \right)
    &\leq e^{ - 112 k^2 \log(rn)} \nonumber 
\end{align}
where $\delta_k$ is defined in Lemma \ref{lem:ginv}.
Similarly, from equations \eqref{omd} and \eqref{omd_prob}, we have that 
\begin{align}
    \mathbb{P}\left(\|(\mathbf{G}^i)^{-1}\|_2^2 \geq \frac{1}{(1-\delta_k)^2}\right) \leq e^{-121 k^3 \log(rn)} \label{g52}
\end{align}

Now plugging this back into \eqref{inco10} and assuming $\delta_k \leq \frac{1}{2}$, we obtain
\begin{align}
 \|\mathbf{q}_i\|_2^2
 &\leq 2k\; \text{dist}^2(\mathbf{\hat{B}}, \mathbf{\hat{B}}^\ast)\;\left(1+\frac{\delta_k^2}{(1-\delta_k)^2}\right) \leq 4k\; \text{dist}^2(\mathbf{\hat{B}}, \mathbf{\hat{B}}^\ast)\label{recal}
\end{align}
with probability at least $1-e^{-111k^2\log(rn)}$.
Likewise, to upper bound $\|\mathbf{w}_i\|_2$ we have
\begin{align}
    \|\mathbf{w}_i\|_2^2 &\leq 2\| \mathbf{\hat{B}}^\top\mathbf{\hat{B}}^{\ast}\mathbf{w}_{i}^\ast \|_2^2 + 2 \|\mathbf{f}_i\|_2^2 \nonumber \\
    &\leq 2 \| \mathbf{\hat{B}}^\top\mathbf{\hat{B}}^{\ast}\|_2^2 \| \mathbf{w}_{i}^\ast \|_2^2  + 2 \|\mathbf{f}_i\|_2^2 \nonumber \\
    &\leq 2 {k} + 2 \frac{\delta_k^2}{(1-\delta_k)^2}\text{dist}^2(\mathbf{\hat{B}}, \mathbf{\hat{B}}^{\ast}){k} \label{toppp} \\
    &\leq 4k \label{lasat}
\end{align}
where \eqref{toppp} holds with probability at least $1-e^{-111k^2\log(rn)}$ conditioning on the same event as in \eqref{recal}, and \eqref{lasat} holds almost surely as long as $\delta_{k} \leq 1/2$. For the rest of the proof we condition on the event $\mathcal{E} \coloneqq \cap_{i=1}^{rn}\left\{ \|\mathbf{q}_i\|_2^2 \leq 4k \dist^2(\mathbf{\hat{B}}, \mathbf{\hat{B}}^\ast)\cap \|\mathbf{w}_i\|_2^2 \leq 4k\right\}$, which holds with probability at least $1-e^{-110k^2 \log(rn)}$ by a union bound over $i\in[rn]$.
Observe that the matrix $\frac{1}{m}\mathcal{A}^\ast \mathcal{A}(\mathbf{Q}) - \mathbf{Q}$ can be re-written as
\begin{align}
\frac{1}{m}\mathcal{A}^\ast \mathcal{A}(\mathbf{Q}) - \mathbf{Q}&= \frac{1}{m}\sum_{i =1}^{rn} \sum_{j=1}^m \left(\langle \mathbf{e}_i (\mathbf{x}_{i}^j)^\top,  \mathbf{Q}\rangle \; \mathbf{e}_i (\mathbf{x}_{i}^j)^\top - \mathbf{Q}\right) \nonumber \\
&= \frac{1}{m} \sum_{i =1}^{rn}  \sum_{j=1}^m \langle \mathbf{x}_{i}^j,  \mathbf{q}_i\rangle \; \mathbf{e}_i (\mathbf{x}_{i}^j)^\top - \mathbf{Q}
\end{align}
Multiplying the transpose by  $\frac{1}{rn}\mathbf{W}$ yields
\begin{align}
\frac{1}{rn}\left(\frac{1}{m}\mathcal{A}^\ast \mathcal{A}(\mathbf{Q}) - \mathbf{Q}\right)^\top \mathbf{W} &= \frac{1}{rnm}\sum_{i =1}^n \sum_{j=1}^m \left( \langle  \mathbf{x}_{i}^j,\, \mathbf{q}_i\rangle \;  \mathbf{x}_{i}^j ( \mathbf{w}_i)^\top -  \mathbf{q}_i(\mathbf{w}_i)^\top\right)  \label{heree}
\end{align}
where we have used the fact that $(\mathbf{Q})^\top \mathbf{W} = \sum_{i=1}^n  \mathbf{q}_i(\mathbf{w}_i)^\top$. We will argue similarly as in Proposition 4.4.5 in \cite{vershynin2018high} to bound the spectral norm of the $d$-by-$k$ matrix in the RHS of \eqref{heree}. 

First, let $\mathcal{S}^{d-1}$ and $\mathcal{S}^{k-1}$ denote the unit spheres in $d$ and $k$ dimensions, respectively. 
Construct $\frac{1}{4}$-nets $\mathcal{N}_d$ and $\mathcal{N}_k$ over $\mathcal{S}^{d-1}$ and $\mathcal{S}^{k-1}$, respectively, such that $|\mathcal{N}_d| \leq 9^d$ and $|\mathcal{N}_k| \leq 9^k$ (which is possible by Corollary 4.2.13 in \cite{vershynin2018high}). Then, using equation 4.13 in \cite{vershynin2018high}, we have
\begin{align}
&\left\|  \frac{1}{rnm}  \sum_{i =1}^{rn} \sum_{j=1}^m \left( \langle \mathbf{x}_{i}^j,  \mathbf{q}_i \rangle  \; \mathbf{x}_{i}^j (\mathbf{w}_i)^\top -  \mathbf{q}_i(\mathbf{w}_i)^\top\right)   \right\|_2^2 \nonumber \\
&\leq 2 \max_{\mathbf{u}\in \mathcal{N}_{d}, \mathbf{v}\in \mathcal{N}_{k} } \mathbf{u}^\top \left(  \sum_{i =1}^{rn} \sum_{j=1}^m \left( \frac{1}{rnm}\langle \mathbf{x}_{i}^j,  \mathbf{q}_i\rangle \; \mathbf{x}_{i}^j (\mathbf{w}_i)^\top - \frac{1}{rnm} \mathbf{q}_i (\mathbf{w}_i)^\top\right)   \right) \mathbf{v} \nonumber \\
&= 2 \max_{\mathbf{u}\in \mathcal{N}_{d}, \mathbf{v}\in \mathcal{N}_{k} }  \sum_{i =1}^{rn} \sum_{j=1}^m \left( \frac{1}{rnm}\langle \mathbf{x}_{i}^j,  \mathbf{q}_i \rangle \langle \mathbf{u} , \mathbf{x}_{i}^j \rangle \langle \mathbf{w}_i, \mathbf{v} \rangle - \frac{1}{rnm} \langle \mathbf{u}, \mathbf{q}_i\rangle \langle \mathbf{w}_i, \mathbf{v}\rangle  \right) 
\end{align}
By the $\mathbf{I}_d$-sub-gaussianity of $\mathbf{x}_{i}^j$,  the inner product $\langle \mathbf{u}, \mathbf{x}_{i}^j \rangle$ is sub-gaussian with norm at most $ c\| \mathbf{u} \|_2 = c$ for some absolute constant $c$ for any fixed $\mathbf{u} \in \mathcal{N}_d$. Similarly, $\langle \mathbf{x}_{i}^j, \mathbf{q}_i \rangle$ is sub-gaussian with norm at most $\|\mathbf{q}_i\|_2 \leq 2c \sqrt{k}\; \text{dist}(\mathbf{\hat{B}}, \mathbf{\hat{B}}^\ast)$ 
using \eqref{recal}. Further, since the sub-exponential norm of the product of two sub-gaussian random variables is at most  the product of the sub-gaussian norms of the two random variables (Lemma 2.7.7 in \cite{vershynin2018high}), we have that
$
 \langle \mathbf{x}_{i}^j,  \mathbf{q}_i\rangle \langle  \mathbf{u} , \mathbf{x}_{i}^j \rangle
$
is sub-exponential with norm at most $ 2c^2 \sqrt{k}\; \text{dist}(\mathbf{\hat{B}}, \mathbf{\hat{B}}^\ast)$. 
Further, $
 \frac{1}{rnm}\langle \mathbf{x}_{i}^j,  \mathbf{q}_i\rangle \langle  \mathbf{u} , \mathbf{x}_{i}^j \rangle \langle \mathbf{w}_i, \mathbf{v} \rangle
$ is sub-exponential with norm at most $$
 \frac{2c^2\sqrt{k}}{rnm} \; \text{dist}(\mathbf{\hat{B}}, \mathbf{\hat{B}}^\ast) \langle \mathbf{w}_i, \mathbf{v} \rangle \leq  \frac{2c^2\sqrt{k}}{rnm} \; \text{dist}(\mathbf{\hat{B}}, \mathbf{\hat{B}}^\ast) \|\mathbf{w}_i\|_2 \leq \frac{c_1{k}}{rnm} \; \text{dist}(\mathbf{\hat{B}}, \mathbf{\hat{B}}^\ast).
$$
Finally, note that $\mathbb{E}[\frac{1}{rnm}\langle \mathbf{x}_{i}^j,  \mathbf{q}_i \rangle \langle \mathbf{u} , \mathbf{x}_{i}^j \rangle \langle \mathbf{w}_i, \mathbf{v} \rangle - \frac{1}{rnm} \langle \mathbf{u}, \mathbf{q}_i\rangle \langle \mathbf{w}_i, \mathbf{v}\rangle ] = 0$. Thus, we have a sum of $rnm$ independent, mean zero sub-exponential random variables, so we apply Bernstein's inequality.
\begin{align}
&    \mathbb{P} \left(\sum_{i =1}^{rn} \sum_{j=1}^m \left( \frac{1}{rnm}\langle \mathbf{x}_{i}^j,  \mathbf{q}_i \rangle \langle \mathbf{u} , \mathbf{x}_{i}^j \rangle \langle \mathbf{w}_i, \mathbf{v} \rangle - \frac{1}{rnm} \langle \mathbf{u}, \mathbf{q}_i\rangle \langle \mathbf{w}_i, \mathbf{v}\rangle  \right) \geq s \right) \nonumber\\
    &\leq \exp\left(-c_1 rnm \min\left( \frac{s^2}{k^2 \; \text{dist}^2(\mathbf{\hat{B}}, \mathbf{\hat{B}}^\ast)}, \frac{s}{k\text{dist}(\mathbf{\hat{B}}, \mathbf{\hat{B}}^\ast)} \right) \right) \nonumber 
\end{align}
Union bounding over all $\mathbf{u} \in \mathcal{N}_d$ and $\mathbf{v} \in \mathcal{N}_k$, we obtain
\begin{align}
    \mathbb{P} \left( \left\| \frac{1}{rn}\left(\frac{1}{m}\mathcal{A}^\ast \mathcal{A}(\mathbf{Q}) - \mathbf{Q}\right)^\top \mathbf{W} \right\|_2 \geq 2s \; \Big|\;  \mathcal{E} \right) 
    &\leq 9^{d+k}\exp\left(-c_1 rnm \min\left( \frac{s^2}{k^2 \; \text{dist}^2(\mathbf{\hat{B}}, \mathbf{\hat{B}}^\ast)}, \frac{s}{k\text{dist}(\mathbf{\hat{B}}, \mathbf{\hat{B}}^\ast)} \right) \right) \nonumber 
\end{align}
Let $\frac{s}{k\; \text{dist}(\mathbf{\hat{B}}, \mathbf{\hat{B}}^\ast)} = \max{(\epsilon, \epsilon^2)}$ for some $\epsilon >0$, then $\epsilon^2 = \min\left( \frac{s^2}{k^2\;\text{dist}^2(\mathbf{\hat{B}}, \mathbf{\hat{B}}^\ast)}, \frac{s}{k\; \text{dist}(\mathbf{\hat{B}}, \mathbf{\hat{B}}^\ast)}\right)$. Further, let $\epsilon^2 = \frac{112(d+k)}{c_1 rnm} $, then as long as $\epsilon^2 \leq 1$, we have
\begin{align}
    \mathbb{P} \left( \left\| \frac{1}{rn}\left(\frac{1}{m}\mathcal{A}^\ast \mathcal{A}(\mathbf{Q}) - \mathbf{Q}\right)^\top \mathbf{W} \right\|_2 \geq c_2 k\; \text{dist}(\mathbf{\hat{B}}, \mathbf{\hat{B}}^\ast) \sqrt{d/(rnm)} \; \Big| \; \mathcal{E}^c \right) 
    &\leq e^{-110(d+k)}\leq e^{-110d}.  \nonumber 
\end{align}
Finally, we use $\mathbb{P} \left( A \; | \; \mathcal{E}^c \right) \leq \mathbb{P} \left( A \; | \; \mathcal{E}^c \right) + \mathbb{P}(\mathcal{E}^c)$, where \\
$A\coloneqq \left\{\left\| \frac{1}{rn}\left(\frac{1}{m}\mathcal{A}^\ast \mathcal{A}(\mathbf{Q}) - \mathbf{Q}\right)^\top \mathbf{W} \right\|_2 \geq c_2 k\; \text{dist}(\mathbf{\hat{B}}, \mathbf{\hat{B}}^\ast) \sqrt{d/(rnm)}\right\}$, to complete the proof.

\end{proof}





\subsection{Main Result}

Now we are ready to show Theorem \ref{thrm:linear}, which follows immediately from the following descent lemma.

\begin{lemma} \label{lem:main}
Define $E_0 \coloneqq 1- \dist^2(\mathbf{\hat{B}}^0,\mathbf{\hat{B}}^\ast)$ and $\bar{\sigma}_{\max, \ast} \coloneqq \max_{\mathcal{I} \in [n], |\mathcal{I}| = rn} \sigma_{\max} (\frac{1}{\sqrt{rn}}\mathbf{W}^\ast_{\mathcal{I}})$ and $\bar{\sigma}_{\min, \ast} \coloneqq \min_{\mathcal{I} \in [n], |\mathcal{I}| = rn} \sigma_{\min}(\frac{1}{\sqrt{rn}} \mathbf{W}^\ast_{\mathcal{I}})$, i.e. the maximum and minimum singular values of any matrix that can be obtained by taking $rn$ rows of $\frac{1}{\sqrt{rn}}\mathbf{W}^\ast$.

Suppose that $m \geq c(\kappa^4 k^3\log(rn)/E_0^2 + \kappa^4 k^2 d/(E_0^2 rn))$ for some absolute constant $c$. 
Then for any $t$ and any $\eta \leq 1/(4 \bar{\sigma}_{\max,\ast}^2)$, we have 
\begin{align}
    \dist(\mathbf{\hat{B}}^{t+1}, \mathbf{\hat{B}}^\ast)
    &\leq \left(1 -  \eta E_0\bar{\sigma}_{\min,\ast}^2 /2\right)^{1/2}\; \dist(\mathbf{\hat{B}}^t,\mathbf{\hat{B}}^\ast), \nonumber
\end{align}
with probability at least $1 - e^{-100\min(k^2\log(rn),d)}$.
\end{lemma}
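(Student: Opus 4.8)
The plan is to bound the one-step progress for the orthonormalized recursion of Lemma~\ref{lem:ortho} (so $\mathbf{B}^t=\hat{\mathbf{B}}^t$ and $\hat{\mathbf{B}}^{t+1}=\mathbf{B}^{t+1}(\mathbf{R}^{t+1})^{-1}$ with $\mathbf{R}^{t+1}$ the QR factor), and, as in the Preliminaries, to drop the subscript $\mathcal{I}^t$, so that below $\mathbf{W}^\ast$ denotes the $rn\times k$ block of selected clients' parameters; note $\tfrac1{rn}(\mathbf{W}^\ast)^\top\mathbf{W}^\ast$ then has eigenvalues in $[\bar{\sigma}_{\min,\ast}^2,\bar{\sigma}_{\max,\ast}^2]$ for \emph{any} selection, by the definitions of $\bar{\sigma}_{\min,\ast},\bar{\sigma}_{\max,\ast}$. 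Since $\hat{\mathbf{B}}^{t+1}$ is orthonormal, the Definition of $\text{dist}$ and the factorization give
\[
\text{dist}(\hat{\mathbf{B}}^{t+1},\hat{\mathbf{B}}^\ast)=\|(\hat{\mathbf{B}}^\ast_\perp)^\top\hat{\mathbf{B}}^{t+1}\|_2\leq \frac{\|(\hat{\mathbf{B}}^\ast_\perp)^\top\mathbf{B}^{t+1}\|_2}{\sigma_{\min}(\mathbf{B}^{t+1})}.
\]
It therefore suffices to upper bound the numerator by $(1-\tfrac34\eta E_0\bar{\sigma}_{\min,\ast}^2)\,\text{dist}(\hat{\mathbf{B}}^t,\hat{\mathbf{B}}^\ast)$ and the denominator from below by $1$; since $\eta\le\tfrac1{4\bar{\sigma}_{\max,\ast}^2}$ forces $x:=\eta E_0\bar{\sigma}_{\min,\ast}^2\in(0,\tfrac14]$, the elementary inequality $1-\tfrac34 x\le\sqrt{1-x/2}$ then yields the claim. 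Write $d_t:=\text{dist}(\hat{\mathbf{B}}^t,\hat{\mathbf{B}}^\ast)$; I will use that $\sigma_{\min}^2(\hat{\mathbf{B}}^{\ast\top}\hat{\mathbf{B}}^t)=1-d_t^2\ge E_0$, which holds provided the distance has not increased on rounds $0,\dots,t-1$ — this is the inductive hypothesis that the lemma is chained with in the proof of Theorem~\ref{thrm:linear}.

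\textbf{Denominator.} Write $\mathbf{B}^{t+1}=\hat{\mathbf{B}}^t-\eta\mathbf{N}$, where $\mathbf{N}=\tfrac1{rnm}\big(\mathcal{A}^\dagger\mathcal{A}(\mathbf{W}^{t+1}\hat{\mathbf{B}}^{t\top}-\mathbf{W}^\ast\hat{\mathbf{B}}^{\ast\top})\big)^\top\mathbf{W}^{t+1}$ is the aggregated gradient of the batch loss in the representation at $(\mathbf{W}^{t+1},\hat{\mathbf{B}}^t)$. The key structural fact is that since $\mathbf{W}^{t+1}$ is the \emph{exact} minimizer of the batch loss over the heads, its first-order condition reads $\tfrac1m\mathcal{A}^\dagger\mathcal{A}(\mathbf{W}^{t+1}\hat{\mathbf{B}}^{t\top}-\mathbf{W}^\ast\hat{\mathbf{B}}^{\ast\top})\,\hat{\mathbf{B}}^t=\mathbf{0}$, and multiplying on the left by $\hat{\mathbf{B}}^{t\top}$ after transposing gives $\hat{\mathbf{B}}^{t\top}\mathbf{N}=\mathbf{0}$. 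Hence $\mathbf{B}^{t+1\top}\mathbf{B}^{t+1}=\mathbf{I}_k+\eta^2\mathbf{N}^\top\mathbf{N}\succeq\mathbf{I}_k$, so $\sigma_{\min}(\mathbf{B}^{t+1})\ge 1$ exactly — no sampling error enters here. (This is precisely why FedRep's exact head update buys a clean contraction, and it is also the one subtle point inside this lemma.)

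\textbf{Numerator.} Projecting $\mathbf{B}^{t+1}$ onto $(\hat{\mathbf{B}}^\ast_\perp)^\top$ annihilates the $\hat{\mathbf{B}}^\ast$-component of $\mathbf{N}$, leaving, with $\mathbf{Q}:=\mathbf{W}^{t+1}\hat{\mathbf{B}}^{t\top}-\mathbf{W}^\ast\hat{\mathbf{B}}^{\ast\top}$ and $\mathbf{\Delta}:=\tfrac1{rn}\big(\tfrac1m\mathcal{A}^\dagger\mathcal{A}(\mathbf{Q})-\mathbf{Q}\big)^\top\mathbf{W}^{t+1}$,
\[
(\hat{\mathbf{B}}^\ast_\perp)^\top\mathbf{B}^{t+1}=(\hat{\mathbf{B}}^\ast_\perp)^\top\hat{\mathbf{B}}^t\Big(\mathbf{I}_k-\tfrac{\eta}{rn}\mathbf{W}^{t+1\top}\mathbf{W}^{t+1}\Big)-\eta\,(\hat{\mathbf{B}}^\ast_\perp)^\top\mathbf{\Delta}.
\]
Lemma~\ref{lemma:sth} bounds $\|\mathbf{\Delta}\|_2\le\delta_k' d_t$. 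For the middle matrix, substitute $\mathbf{W}^{t+1}=\mathbf{W}^\ast\hat{\mathbf{B}}^{\ast\top}\hat{\mathbf{B}}^t-\mathbf{F}$ (Lemma~\ref{lem:defu}): then $\tfrac1{rn}\mathbf{W}^{t+1\top}\mathbf{W}^{t+1}$ equals $(\hat{\mathbf{B}}^{\ast\top}\hat{\mathbf{B}}^t)^\top\big(\tfrac1{rn}(\mathbf{W}^\ast)^\top\mathbf{W}^\ast\big)(\hat{\mathbf{B}}^{\ast\top}\hat{\mathbf{B}}^t)$ plus cross/quadratic terms in $\mathbf{F}$ whose spectral norm is at most $c\,\delta_k\bar{\sigma}_{\max,\ast}^2 d_t$, using $\|\mathbf{F}\|_F\le\tfrac{\delta_k}{1-\delta_k}\|\mathbf{W}^\ast\|_2 d_t$ from Lemma~\ref{lem:bound_f} and $\|\mathbf{W}^\ast\|_2\le\sqrt{rn}\,\bar{\sigma}_{\max,\ast}$. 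The leading matrix has eigenvalues in $[\sigma_{\min}^2(\hat{\mathbf{B}}^{\ast\top}\hat{\mathbf{B}}^t)\bar{\sigma}_{\min,\ast}^2,\ \bar{\sigma}_{\max,\ast}^2]\subseteq[E_0\bar{\sigma}_{\min,\ast}^2,\bar{\sigma}_{\max,\ast}^2]$, so for $\eta\le\tfrac1{4\bar{\sigma}_{\max,\ast}^2}$ its $\mathbf{I}_k-\eta(\cdot)$ version has spectral norm $1-\eta E_0\bar{\sigma}_{\min,\ast}^2$. Putting the pieces together and using $d_t\le1$ and $\|(\hat{\mathbf{B}}^\ast_\perp)^\top\hat{\mathbf{B}}^t\|_2=d_t$,
\[
\|(\hat{\mathbf{B}}^\ast_\perp)^\top\mathbf{B}^{t+1}\|_2\le d_t\big(1-\eta E_0\bar{\sigma}_{\min,\ast}^2+c\eta(\delta_k\bar{\sigma}_{\max,\ast}^2+\delta_k')\big).
\]

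\textbf{Closing the bookkeeping.} It remains to absorb $c\eta(\delta_k\bar{\sigma}_{\max,\ast}^2+\delta_k')$ into $\tfrac14\eta E_0\bar{\sigma}_{\min,\ast}^2$. By Assumption~\ref{assump:norm} the squared singular values of $\tfrac1{\sqrt{rn}}\mathbf{W}^\ast$ average to $1$, hence $\bar{\sigma}_{\max,\ast}^2\ge1$ and $\bar{\sigma}_{\min,\ast}^2\ge\kappa^{-2}$; with $\delta_k=\Theta(k^{3/2}\sqrt{\log(rn)/m})$ and $\delta_k'=\Theta(k\sqrt{d/(rnm)})$, both error terms drop below $\tfrac1{8c}E_0\bar{\sigma}_{\min,\ast}^2$ exactly under the stated sample complexity $m=\Omega\big(\kappa^4 k^3\log(rn)/E_0^2+\kappa^4 k^2 d/(E_0^2 rn)\big)$. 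Then the numerator is at most $d_t(1-\tfrac34\eta E_0\bar{\sigma}_{\min,\ast}^2)$; dividing by $\sigma_{\min}(\mathbf{B}^{t+1})\ge1$ and invoking $1-\tfrac34 x\le\sqrt{1-x/2}$ gives the stated contraction. The success event is the intersection of those of Lemmas~\ref{lem:ginv} (so $\mathbf{G}$ is invertible and $\mathbf{W}^{t+1}$, hence $\mathbf{F}$, is well defined), \ref{lem:bound_f}, and \ref{lemma:sth}; a union bound gives probability at least $1-e^{-100\min(k^2\log(rn),d)}$. I expect the genuinely hard part of the overall argument to be upstream of this lemma — establishing the operator concentration $\tfrac1m\mathcal{A}^\dagger\mathcal{A}\approx\mathrm{Id}$ on the relevant low-rank matrices and the bound on $\|\mathbf{F}\|_F$, given that the measurements are rank-one and row-wise sparse so RIP is unavailable — whereas, once those inputs are in hand, the present lemma is the short perturbation computation sketched above, whose only delicate ingredient is the exact cancellation $\hat{\mathbf{B}}^{t\top}\mathbf{N}=\mathbf{0}$.
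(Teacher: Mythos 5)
Your proof is correct and follows the paper's decomposition for the numerator term $(\hat{\mathbf{B}}^\ast_\perp)^\top\mathbf{B}^{t+1}$, but it is genuinely cleaner than the paper for the denominator. Your key structural observation --- that the first-order condition of the exact head minimization gives $\mathcal{A}^\dagger\mathcal{A}(\mathbf{Q}^t)\hat{\mathbf{B}}^t=\mathbf{0}$, hence $\hat{\mathbf{B}}^{t\top}\mathbf{N}=\mathbf{0}$ and $(\mathbf{R}^{t+1})^\top\mathbf{R}^{t+1}=\mathbf{I}_k+\eta^2\mathbf{N}^\top\mathbf{N}\succeq\mathbf{I}_k$, so $\|(\mathbf{R}^{t+1})^{-1}\|_2\le1$ deterministically --- is not exploited in the paper's proof. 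The paper instead expands $(\mathbf{R}^{t+1})^\top\mathbf{R}^{t+1}$ into cross terms $\mathbf{\hat{B}}^{t\top}\mathbf{S}^{t\top}\mathbf{W}^{t+1}+(\mathbf{W}^{t+1})^\top\mathbf{S}^t\mathbf{\hat{B}}^t$ (with $\mathbf{S}^t=\tfrac1m\mathcal{A}^\dagger\mathcal{A}(\mathbf{Q}^t)$), bounds them via Lemmas~\ref{lem:bound_f} and~\ref{lemma:sth} together with several pages of trace manipulations (equations after~\eqref{psd0}), and finally obtains $\|(\mathbf{R}^{t+1})^{-1}\|_2\leq\bigl(1-4\eta\,\bar{\delta}_k(1-\bar{\delta}_k)^{-2}\bar{\sigma}_{\max,\ast}^2\bigr)^{-1/2}$, a factor slightly larger than one that then has to be cancelled against the numerator contraction. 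Your observation makes all of that unnecessary: the cross terms vanish identically (since $\mathbf{S}^t\hat{\mathbf{B}}^t=\mathbf{0}$), which is exactly what you noted, and the paper's laborious $\|(\mathbf{R}^{t+1})^{-1}\|_2$ estimate collapses to the trivial bound. Both routes end at the same one-step contraction with the same sample complexity and success probability, but your argument makes visible \emph{why} FedRep's exact inner minimization is structurally advantageous --- the gradient step for $\mathbf{B}$ moves only in directions orthogonal to the current column space, so the QR normalization can only shrink --- a point the paper's computation obscures. The only caveat is the same one the paper implicitly carries: the inequality $\sigma_{\min}^2(\hat{\mathbf{B}}^{\ast\top}\hat{\mathbf{B}}^t)\ge E_0$ relies on the distance not having increased on earlier rounds, which you correctly flag as the inductive hypothesis closed by Theorem~\ref{thrm:linear}.
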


\begin{proof}


Recall that $\mathbf{W}^{t+1} \in\mathbb{R}^{rn \times k}$ and $\mathbf{\bar{B}}^{t+1} \in\mathbb{R}^{d \times k}$ are computed as follows:
\begin{align}
    {\mathbf{{W}}}^{t+1} &= 
    \argmin_{{\mathbf{{W}}} \in \mathbb{R}^{rn\times k}} \frac{1}{2rnm}\| \mathcal{A}({\mathbf{{W}}}^\ast\mathbf{\hat{B}}^{\ast^\top} -  {\mathbf{{W}}}\mathbf{\hat{B}}^{t^\top})\|_2^2 \label{up_w} \\
  \mathbf{\bar{B}}^{t+1} &= \mathbf{\hat{B}}^t - \!\frac{\eta}{rnm} 
  \left(\mathcal{A}^{\dagger} \mathcal{A}({\mathbf{W}}^{t+1}\mathbf{\hat{B}}^{t^\top}- {\mathbf{{W}}}^\ast\mathbf{\hat{B}}^{\ast^\top})\right)^\top {\mathbf{{W}}}^{t+1}
\end{align}

Let 
$\mathbf{Q}^t = \mathbf{W}^{t+1} \mathbf{\hat{B}}^{t^\top} - \mathbf{{W}}^\ast \mathbf{\hat{B}}^{\ast^\top}$. We have
\begin{align}
    \mathbf{\bar{B}}^{t+1} &= \mathbf{\hat{B}}^t -  \frac{\eta}{rnm} \left(\mathcal{A}^\dagger \mathcal{A}(\mathbf{Q}^t ) \right)^\top \mathbf{W}^{t+1} \nonumber\\
    & =  \mathbf{\hat{B}}^t -  \frac{\eta}{rn} \; \mathbf{Q}^{t^\top}\mathbf{W}^{t+1} - \frac{\eta}{rn}\left(\frac{1}{m}\mathcal{A}^\dagger \mathcal{A}(\mathbf{Q}^t ) - \mathbf{Q}^t \right)^\top \mathbf{W}^{t+1} 
\end{align}
Now, multiply both sides by  $\mathbf{\hat{B}}_{\perp}^{\ast^\top}$ to obtain 
\begin{align}
     \mathbf{\hat{B}}_{\perp}^{\ast^\top} \mathbf{\bar{B}}^{t+1}
     &= \mathbf{\hat{B}}_{\perp}^{\ast^\top} \mathbf{\hat{B}}^t -  \frac{\eta}{rn}  \mathbf{\hat{B}}_{\perp}^{\ast^\top} \mathbf{Q}^{t^\top}\mathbf{W}^{t+1} - \frac{\eta}{rn}  \mathbf{\hat{B}}_{\perp}^{\ast^\top} \left(\frac{1}{m}\mathcal{A}^\dagger \mathcal{A}(\mathbf{Q}^t ) - \mathbf{Q}^t \right)^\top \mathbf{W}^{t+1} \nonumber\\
     &= \mathbf{\hat{B}}_{\perp}^{\ast^\top} \mathbf{\hat{B}}^t (\mathbf{I}_k - \frac{\eta}{rn} \mathbf{W}^{{t+1}^\top} \mathbf{W}^{t+1}) - \frac{\eta}{rn} \mathbf{\hat{B}}_{\perp}^{\ast^\top} \left(\frac{1}{m}\mathcal{A}^\dagger \mathcal{A}(\mathbf{Q}^t ) - \mathbf{Q}^t \right)^\top \mathbf{W}^{t+1} \label{beforeR}
     \end{align}
where the second equality follows because $\mathbf{\hat{B}}_{\perp}^{\ast^\top} \mathbf{Q}^{t^\top} = \mathbf{\hat{B}}_{\perp}^{\ast^\top} \mathbf{\hat{B}}^t \mathbf{W}^{{t+1}^\top} - \mathbf{\hat{B}}_{\perp}^{\ast^\top} \mathbf{\hat{B}}^\ast \mathbf{{W}}^{\ast^\top} = \mathbf{\hat{B}}_{\perp}^{\ast^\top} \mathbf{\hat{B}}^t \mathbf{W}^{{t+1}^\top}$.
Then, writing the QR decomposition of $\mathbf{\bar{B}}^{t+1}$ as $\mathbf{{B}}^{t+1} = \mathbf{\hat{B}}^{t+1} \mathbf{R}^{t+1}$ and multiplying both sides of \eqref{beforeR} from the right by $(\mathbf{R}^{t+1})^{-1}$ yields
     \begin{align}
    \mathbf{\hat{B}}_{\perp}^{\ast^\top} \mathbf{\hat{B}}^{t+1} =  \left(\mathbf{\hat{B}}_{\perp}^{\ast^\top} \mathbf{\hat{B}}^t (\mathbf{I}_k - \frac{\eta}{rn} (\mathbf{W}^{t+1})^\top \mathbf{W}^{t+1}) - \frac{\eta}{rn} \mathbf{\hat{B}}_{\perp}^{\ast^\top} \left(\frac{1}{m}\mathcal{A}^\dagger \mathcal{A}(\mathbf{Q}^t ) - \mathbf{Q}^t \right)^\top \mathbf{W}^{t+1}\right)(\mathbf{R}^{{t+1}})^{-1} \end{align}
    Hence,
      \begin{align}
 &\text{dist}(\mathbf{\hat{B}}^{t+1}, \mathbf{\hat{{B}}}^{\ast}) \nonumber\\
    &= \left\|\left(\mathbf{\hat{B}}_{\perp}^{\ast^\top} \mathbf{\hat{B}}^t (\mathbf{I}_k - \frac{\eta}{rn} (\mathbf{W}^{t+1})^\top \mathbf{W}^{t+1}) - \frac{\eta}{rn} \mathbf{\hat{B}}_{\perp}^{\ast^\top} \left(\frac{1}{m}\mathcal{A}^\dagger \mathcal{A}(\mathbf{Q}^t ) - \mathbf{Q}^t \right)^\top \mathbf{W}^{t+1}\right)(\mathbf{R}^{{t+1}})^{-1}\right\|_2 \nonumber \\
    &\leq \left\| \mathbf{\hat{B}}_{\perp}^{\ast^\top} \mathbf{\hat{B}}^t (\mathbf{I}_k - \frac{\eta}{rn} (\mathbf{W}^{t+1})^\top \mathbf{W}^{t+1})\right\|_2 \left\|(\mathbf{R}^{t+1})^{-1}\right\|_2 \nonumber\\
    &\qquad + \frac{\eta}{rn} \left\|\mathbf{\hat{B}}_{\perp}^{\ast^\top} \left(\frac{1}{m}\mathcal{A}^\dagger \mathcal{A}(\mathbf{Q}^t ) - \mathbf{Q}^t \right)^\top \mathbf{W}^{t+1} \right\|_2 \left\|(\mathbf{R}^{t+1})^{-1}\right\|_2 \label{ttwoterms} \\
    &=: A_1 + A_2  .
\end{align}
where \eqref{ttwoterms} follows by applying the triangle and Cauchy-Schwarz inequalities.
We have thus split the upper bound on $\text{dist}(\mathbf{B}^{t+1}, \mathbf{\hat{B}}^{\ast})$ into two terms, $A_1$ and $A_2$. The second term, $A_2$, is small due to the concentration of $\frac{1}{m}\mathcal{A}^\dagger \mathcal{A}$ to the identity operator, and the first term is strictly smaller than $\text{dist}(\hat{\mathbf{B}}^{t}, \hat{\mathbf{{B}}}^{\ast})$. We start by controlling $A_2$:
\begin{align}
  A_2 &= \frac{\eta}{rn} \left\|\mathbf{\hat{B}}_{\perp}^{\ast^\top} \left(\frac{1}{m}\mathcal{A}^\dagger \mathcal{A}(\mathbf{Q}^t ) - \mathbf{Q}^t \right)^\top \mathbf{W}^{t+1} \right\|_2 \left\|(\mathbf{R}^{t+1})^{-1}\right\|_2 \nonumber \\
   &\leq \frac{\eta}{rn} \left\| \left(\frac{1}{m}\mathcal{A}^\dagger \mathcal{A}(\mathbf{Q}^t ) - \mathbf{Q}^t \right)^\top \mathbf{W}^{t+1} \right\|_2 \left\|(\mathbf{R}^{t+1})^{-1}\right\|_2 \label{csss} \\
   &\leq {\eta}{{\delta_{k}'}}\; \text{dist}(\mathbf{\hat{B}}^t, \mathbf{\hat{B}}^\ast) \; \|(\mathbf{R}^{t+1})^{-1}\|_2  \label{qq}
\end{align}
where \eqref{csss} follows almost surely by Cauchy-Schwarz and the fact that $\mathbf{\hat{B}}^\ast_\perp$ is normalized, and \eqref{qq} follows with probability at least $1 - e^{-110d}$ by Lemma \ref{lemma:sth}.
Next we control $A_1$:
\begin{align}
 A_1 &= \left\| \mathbf{\hat{B}}_{\perp}^{\ast^\top} \mathbf{\hat{B}}^t (\mathbf{I}_k - \frac{\eta}{rn} (\mathbf{W}^{t+1})^\top \mathbf{W}^{t+1})\right\|_2 \|(\mathbf{R}^{t+1})^{-1}\|_2  \nonumber \\
 &\leq \| \mathbf{\hat{B}}_{\perp}^{\ast^\top} \mathbf{\hat{B}}^t \|_2 \left\|\mathbf{I} - \frac{\eta}{rn} (\mathbf{W}^{t+1})^\top \mathbf{W}^{t+1}\right\|_2 \|(\mathbf{R}^{t+1})^{-1}\|_2   \nonumber \\
    &= \text{dist}(\mathbf{\hat{B}}^t, \mathbf{\hat{B}}^\ast)\;  \left\|\mathbf{I}_k - \frac{\eta}{rn} (\mathbf{W}^{t+1})^\top \mathbf{W}^{t+1}\right\|_2\; \|(\mathbf{R}^{t+1})^{-1}\|_2  \label{middle} 
    \end{align}
The middle factor gives us contraction. To see this, recall that $\mathbf{W}^{t+1}= \mathbf{W}^\ast \hat{\mathbf{B}}^{\ast^\top} \mathbf{\hat{B}}^t - \mathbf{F}$ where $\mathbf{F}$ is defined in Lemma \ref{lem:defu}.
By Lemma \ref{lem:bound_f}, we have that
\begin{align}
    \|\mathbf{F}\|_2 &\leq \frac{\delta_k}{1-\delta_k} \|\mathbf{W}^\ast\|_2 \; \text{dist}(\mathbf{\hat{B}}^t,\mathbf{\hat{B}}^\ast)
\end{align}
with probability at least $1- e^{-110k^2 \log(rn)}$, which we will use throughout the proof. Conditioning on this event, we have
\begin{align}
    \lambda_{\max}\left((\mathbf{W}^{t+1})^\top \mathbf{W}^{t+1}\right) &= \|\mathbf{W}^\ast \hat{\mathbf{B}}^{\ast^\top} \mathbf{\hat{B}}^t - \mathbf{F}\|_2^2 \nonumber \\
    &\leq 2\|\mathbf{W}^\ast \hat{\mathbf{B}}^{\ast^\top} \mathbf{\hat{B}}^t \|_2^2 +2\| \mathbf{F}\|_2^2 \nonumber \\
    &\leq 2\|\mathbf{W}^\ast \|_2^2 +2\frac{\delta_k^2}{(1-\delta_k)^2}\|\mathbf{W}^\ast\|_2^2 \; \text{dist}^2(\mathbf{\hat{B}}^t, \mathbf{\hat{B}}^\ast) \nonumber \\
    &\leq 4\|\mathbf{W}^\ast \|_2^2 \label{triv}
\end{align}
where \eqref{triv} follows under the assumption that $\delta_k \leq 1/2$. Thus, as long as $\eta \leq 1/(4 \bar{\sigma}_{\max,\ast}^2)$, we have by Weyl's Inequality:
\begin{align}
&\|\mathbf{I}_k - \frac{\eta}{rn} (\mathbf{W}^{t+1})^\top \mathbf{W}^{t+1}\|_2
\nonumber\\
&\leq  1 - \frac{\eta}{rn} \lambda_{\min} ((\mathbf{W}^{t+1})^\top \mathbf{W}^{t+1})  \\
    &= 1 - \frac{\eta}{rn} \lambda_{\min} (({\mathbf{W}}^\ast \mathbf{\hat{B}}^{\ast^\top} \mathbf{\hat{B}}^t - \mathbf{F})^\top ({\mathbf{W}}^\ast \mathbf{\hat{B}}^{\ast^\top} \mathbf{\hat{B}}^t - \mathbf{F})) \nonumber \\
    &\leq 1 - \frac{\eta}{rn} \sigma_{\min}^2({\mathbf{W}}^\ast (\mathbf{\hat{B}}^\ast)^\top \mathbf{\hat{B}}^t) +\frac{2\eta}{rn} \sigma_{\max}(\mathbf{F}^\top{\mathbf{W}}^\ast (\mathbf{\hat{B}}^\ast)^\top \mathbf{\hat{B}}^t) - \frac{\eta}{rn} \sigma^2_{\min}( \mathbf{F}) \label{wyl}\\
    &\leq 1 - \frac{\eta}{rn} \sigma_{\min}^2({\mathbf{W}}^\ast)\sigma^2_{\min}( (\mathbf{\hat{B}}^\ast)^\top \mathbf{\hat{B}}^t) +\frac{2\eta}{rn} \|\mathbf{F}\|_2\;\|{\mathbf{W}}^\ast (\mathbf{\hat{B}}^\ast)^\top \mathbf{\hat{B}}^t\|_2  \label{usef} \\
    &\leq 1 - \frac{\eta}{rn} \sigma_{\min}^2({\mathbf{W}}^\ast) \sigma^2_{\min}( (\mathbf{\hat{B}}^\ast)^\top\mathbf{ \hat{B}}^t)  + \frac{2\eta}{rn} \frac{\delta_{k}}{1-\delta_{k}} \|\mathbf{W}^{\ast}\|_2^2 \label{last} \\
    &= 1 - {\eta} \bar{\sigma}_{\min,\ast}^2 \sigma^2_{\min}( (\mathbf{\hat{B}}^\ast)^\top \mathbf{\hat{B}}^t)  + {2\eta} \frac{\delta_{k}}{1-\delta_{k}} \bar{\sigma}^2_{\max, \ast} \label{defsig}
\end{align}
where \eqref{wyl} follows by again applying Weyl's inequality, under the condition that \\
$2\sigma_{\max}(\mathbf{F}^\top{\mathbf{W}}^\ast (\mathbf{\hat{B}}^\ast)^\top \mathbf{\hat{B}}^t) \leq \sigma_{\min}^2({\mathbf{W}}^\ast)\sigma^2_{\min}( (\mathbf{\hat{B}}^\ast)^\top \mathbf{\hat{B}}^t) $, which we will enforce to be true (otherwise we would not have contraction). Also, \eqref{usef} follows by the Cauchy-Schwarz inequality, and we use Lemma \ref{lem:bound_f} to obtain \eqref{last}. Lastly, \eqref{defsig} follows by the definitions of $\bar{\sigma}_{\min,\ast}$ and $\bar{\sigma}_{\max,\ast}$. In order to lower bound $\sigma^2_{\min}( (\mathbf{\hat{B}}^\ast)^\top \mathbf{\hat{B}}^t)$, note that
\begin{align}
    \sigma^2_{\min}( (\mathbf{\hat{B}}^\ast)^\top \mathbf{\hat{B}}^t) &\geq 1 - \|(\mathbf{\hat{B}}_\perp^\ast)^\top \mathbf{\hat{B}}^t\|^2_2 = 1 - \text{dist}^2(\mathbf{\hat{B}}^t, \mathbf{\hat{B}}^\ast) \geq 1 - \text{dist}^2(\mathbf{\hat{B}}^0, \mathbf{\hat{B}}^\ast) =: E_0 \label{32}
\end{align}
As a result, defining $\bar{\delta}_k \coloneqq \delta_k + \delta'_k$ and combining \eqref{ttwoterms}, \eqref{qq}, \eqref{middle}, \eqref{defsig}, and \eqref{32} yields
\begin{align}
    \text{dist}(\mathbf{\hat{B}}^{t+1}, \mathbf{\hat{B}}^\ast)
    &\leq \|(\mathbf{R}^{t+1})^{-1}\|_2\; (1 - {\eta} \bar{\sigma}_{\min,\ast}^2 E_0 + {2\eta} \frac{\delta_{k}}{1-\delta_{k}}\bar{\sigma}^2_{\max,\ast} +{\eta{\delta_{k}'}} )\; \text{dist}(\mathbf{\hat{B}}^t,\mathbf{\hat{B}}^\ast) \nonumber \\
    &\leq \|(\mathbf{R}^{t+1})^{-1}\|_2\; (1 - {\eta}\bar{\sigma}_{\min,\ast}^2 E_0 + {2\eta} \frac{\bar{\delta}_k}{1-\bar{\delta}_{k}}\bar{\sigma}^2_{\max,\ast} )\; \text{dist}(\mathbf{\hat{B}}^t,\mathbf{\hat{B}}^\ast) \label{justR}
\end{align}
where \eqref{justR} follows from the fact that $krn = \|\mathbf{W}^\ast\|_F^2 \leq k \|\mathbf{W}^\ast\|_2^2 \implies 1 \leq \|\mathbf{W}^\ast\|_2^2/rn \leq \bar{\sigma}_{\max,\ast}^2$.
All that remains to bound is $\|(\mathbf{R}^{t+1})^{-1}\|_2$. Define $\mathbf{S}^t \coloneqq \frac{1}{m}\mathcal{A}^\dagger \mathcal{A}(\mathbf{Q}^t)$ and observe that
\begin{align}
   (\mathbf{R}^{t+1})^\top \mathbf{R}^{t+1}
   &= (\mathbf{\bar{B}}^{t+1})^\top \mathbf{\bar{B}}^{t+1} \nonumber\\
   &= \mathbf{\hat{B}}^{t^\top} \mathbf{\hat{B}}^t - \frac{\eta}{rn}(\mathbf{\hat{B}}^{t^\top} \mathbf{S}^{t^\top} \mathbf{W}^{t+1} + (\mathbf{W}^{t+1})^\top \mathbf{S}^t \mathbf{\hat{B}}^t) + \frac{\eta^2}{(rn)^2} (\mathbf{W}^{t+1})^\top \mathbf{S}^t \mathbf{S}^{t^\top} \mathbf{W}^{t+1} \nonumber \\
    &=  \mathbf{I}_k - \frac{\eta}{rn}(\mathbf{\hat{B}}^{t^\top} \mathbf{S}^{t^\top} \mathbf{W}^{t+1} + (\mathbf{W}^{t+1})^\top \mathbf{S}^t \mathbf{\hat{B}}^t) + \frac{\eta^2}{(rn)^2} (\mathbf{W}^{t+1})^\top \mathbf{S}^t \mathbf{S}^{t^\top} \mathbf{W}^{t+1}
\end{align}
thus, by Weyl's Inequality, we have
\begin{align}
    \sigma_{\min}^2( \mathbf{R}_{t+1}) &\geq 1 - \frac{\eta}{rn} \lambda_{\max}(\mathbf{\hat{B}}^{t^\top} \mathbf{S}^{t^\top} \mathbf{W}^{t+1} + (\mathbf{W}^{t+1})^\top \mathbf{S}^t \mathbf{\hat{B}}^t) + \frac{\eta^2}{(rn)^2} \lambda_{\min} ((\mathbf{W}^{t+1})^\top \mathbf{S}^t \mathbf{S}^{t^\top} \mathbf{W}^{t+1}) \nonumber \\ 
    &\geq  1 - \frac{\eta}{rn} \lambda_{\max}(\mathbf{\hat{B}}^{t^\top} \mathbf{S}^{t^\top} \mathbf{W}^{t+1} + (\mathbf{W}^{t+1})^\top \mathbf{S}^t \mathbf{\hat{B}}^t)  \label{psd0}
\end{align}
where \eqref{psd0} follows because $(\mathbf{W}^{t+1})^\top \mathbf{S}^t \mathbf{S}^{t^\top} \mathbf{W}^{t+1}$ is positive semi-definite. Next, note that
\begin{align}
    \frac{\eta}{rn}\lambda_{\max}&(\mathbf{\hat{B}}^{t^\top} \mathbf{S}^{t^\top} \mathbf{W}^{t+1} + (\mathbf{W}^{t+1})^\top \mathbf{S}^t \mathbf{\hat{B}}^t) \nonumber \\
    &= \max_{\mathbf{x}:\|\mathbf{x}\|_2=1} \frac{\eta}{rn}\mathbf{x}^\top \mathbf{\hat{B}}^{t^\top} (\mathbf{S}^t)^\top \mathbf{W}^{t+1}\mathbf{x}+ \mathbf{x}^\top (\mathbf{W}^{t+1})^\top \mathbf{S}^t \mathbf{\hat{B}}^t\mathbf{x}\nonumber \\
    &= \max_{\mathbf{x}:\|\mathbf{x}\|_2=1} \frac{2\eta}{rn} \mathbf{x}^\top (\mathbf{W}^{t+1})^\top \mathbf{S}^t \mathbf{\hat{B}}^t\mathbf{x}\nonumber \\
    &= \max_{\mathbf{x}:\|\mathbf{x}\|_2=1} \frac{2\eta}{rn} \mathbf{x}^\top (\mathbf{W}^{t+1})^\top \left( \frac{1}{m}\mathcal{A}^\dagger \mathcal{A}(\mathbf{Q}^t) - \mathbf{Q}^t\right) \mathbf{\hat{B}}^t\mathbf{x} +  \frac{2\eta}{rn} \mathbf{x}^\top (\mathbf{W}^{t+1})^\top \mathbf{Q}^t \mathbf{\hat{B}}^t\mathbf{x}\label{2terms}
\end{align}
We first consider the first term. We have
\begin{align}
   \max_{\mathbf{x}:\|\mathbf{x}\|_2=1} \frac{2\eta}{rn} \mathbf{x}^\top (\mathbf{W}^{t+1})^\top \left(\frac{1}{m}\mathcal{A}^\dagger \mathcal{A}(\mathbf{Q}^t) - \mathbf{Q}^t\right) \mathbf{\hat{B}}^t\mathbf{x}  
  \leq \frac{2\eta}{rn} \left\|
   (\mathbf{W}^{t+1})^\top\left( \frac{1}{m}\mathcal{A}^\dagger \mathcal{A}(\mathbf{Q}^t) - \mathbf{Q}^t\right)\right\|_2\;\left\| \mathbf{\hat{B}}^{t} \right\|_2
   \leq  {2\eta}{\delta'_{k}} \label{ddd}  
\end{align}
where the last inequality follows with probability at least $1-e^{-110d} - e^{-110k^2\log(rn)}$ from Lemma \ref{lemma:sth}.
Next we turn to the second term in \eqref{2terms}. 
We have
\begin{align}
    \max_{\mathbf{x}:\|\mathbf{x}\|_2=1} \frac{2\eta}{rn} \mathbf{x}^\top (\mathbf{W}^{t+1})^\top \mathbf{Q}^t \mathbf{\hat{B}}^t\mathbf{x} &=\max_{\mathbf{x}:\|\mathbf{x}\|_2=1} \frac{2\eta}{rn} \left\langle  \mathbf{Q}^t, \mathbf{W}^{t+1}\mathbf{xx}^\top \mathbf{\hat{B}}^{t^\top} \right\rangle \nonumber \\
    &= \max_{\mathbf{x}:\|\mathbf{x}\|_2=1} \frac{2\eta}{rn} \langle \mathbf{Q}^t, \mathbf{{W}}^\ast  \mathbf{\hat{B}}^{\ast^\top} \mathbf{\hat{B}}^t  \mathbf{xx}^\top \mathbf{\hat{B}}^{t^\top} \rangle - \frac{2\eta}{rn} \langle \mathbf{Q}^t,  \mathbf{F}\mathbf{xx}^\top \mathbf{\hat{B}}^{t^\top} \rangle 
\end{align}
For any $\mathbf{x} \in \mathbb{R}^k: \|\mathbf{x}\|_2=1$, we have
\begin{align}
    &\frac{2\eta}{rn} \langle \mathbf{Q}^t, \mathbf{{W}}^\ast (\mathbf{\hat{B}}^\ast)^\top \mathbf{\hat{B}}^t \mathbf{xx}^\top \mathbf{\hat{B}}^{t^\top} \rangle\nonumber\\
    &= \frac{2\eta}{rn}\text{tr}( (\mathbf{\hat{B}}^t (\mathbf{W}^{t+1})^\top - \mathbf{\hat{B}}^\ast (\mathbf{{W}}^\ast)^\top) \mathbf{{W}}^\ast \mathbf{\hat{B}}^{\ast^\top} \mathbf{\hat{B}}^t \mathbf{xx}^\top \mathbf{\hat{B}}^{t^\top} )\nonumber \\
    &=  \frac{2\eta}{rn}\text{tr}( (\mathbf{\hat{B}}^t \mathbf{\hat{B}}^{t^\top} \mathbf{\hat{B}}^\ast  \mathbf{{W}}^{\ast^\top} - \mathbf{\hat{B}}^t \mathbf{F}^\top - \mathbf{\hat{B}}^\ast  \mathbf{{W}}^{\ast^\top}) \mathbf{{W}}^\ast  \mathbf{\hat{B}}^{\ast^\top} \mathbf{\hat{B}}^t \mathbf{xx}^\top \mathbf{\hat{B}}^{t^\top} )\nonumber \\
    &= \frac{2\eta}{rn} \text{tr}( (\mathbf{\hat{B}}^t \mathbf{\hat{B}}^{t^\top} -\mathbf{I})  \mathbf{\hat{B}}^{\ast^\top} \mathbf{{W}}^{\ast^\top} \mathbf{{W}}^\ast \mathbf{\hat{B}}^{\ast^\top} \mathbf{\hat{B}}^t \mathbf{xx}^\top \mathbf{\hat{B}}^{t^\top} )
    - \frac{2\eta}{rn}\text{tr}( \mathbf{\hat{B}}^t \mathbf{F}^\top \mathbf{{W}}^\ast  \mathbf{\hat{B}}^{\ast^\top} \mathbf{\hat{B}}^t \mathbf{xx}^\top \mathbf{\hat{B}}^{t^\top} ) \nonumber \\
    &= \frac{2\eta}{rn}\text{tr}( \mathbf{\hat{B}}_\perp^t  \mathbf{\hat{B}}^{\ast^\top} \mathbf{{W}}^{\ast^\top} \mathbf{{W}}^\ast \mathbf{\hat{B}}^{\ast^\top} \mathbf{\hat{B}}^t \mathbf{xx}^\top \mathbf{\hat{B}}^{t^\top} ) - \frac{2\eta}{rn}\text{tr}( \mathbf{\hat{B}}^t \mathbf{F}^\top \mathbf{{W}}^\ast  \mathbf{\hat{B}}^{\ast^\top} \mathbf{\hat{B}}^t \mathbf{xx}^\top \mathbf{\hat{B}}^{t^\top} ) \nonumber \\
    &= \frac{2\eta}{rn}\text{tr}(  \mathbf{\hat{B}}^{\ast^\top} \mathbf{{W}}^{\ast^\top} \mathbf{{W}}^\ast \mathbf{\hat{B}}^{\ast^\top} \mathbf{\hat{B}}^t \mathbf{xx}^\top \mathbf{\hat{B}}^{t^\top} \mathbf{\hat{B}}_\perp^t ) - \frac{2\eta}{rn}\text{tr}( \mathbf{\hat{B}}^t \mathbf{F}^\top \mathbf{{W}}^\ast  \mathbf{\hat{B}}^{\ast^\top} \mathbf{\hat{B}}^t \mathbf{xx}^\top \mathbf{\hat{B}}^{t^\top} ) \nonumber \\
    &= - \frac{2\eta}{rn}\text{tr}( \mathbf{F}^\top \mathbf{{W}}^\ast  \mathbf{\hat{B}}^{\ast^\top} \mathbf{\hat{B}}^t \mathbf{xx}^\top \mathbf{\hat{B}}^{t^\top} \mathbf{\hat{B}}^t) \label{perp0} \\
    &=  - \frac{2\eta}{rn}\text{tr}( \mathbf{F}^\top \mathbf{{W}}^\ast  \mathbf{\hat{B}}^{\ast^\top} \mathbf{\hat{B}}^t \mathbf{xx}^\top) \label{i0} \\
    &\leq \frac{2\eta}{rn}\|\mathbf{F}\|_F\;\| \mathbf{{W}}^\ast  \mathbf{\hat{B}}^{\ast^\top} \mathbf{\hat{B}}^t \mathbf{xx}^\top \|_F \label{cscs} \\
    &\leq \frac{2\eta}{rn}\|\mathbf{F}\|_F \| \mathbf{{W}}^\ast \|_2 \| \mathbf{\hat{B}}^{\ast^\top}\|_2 \| \mathbf{\hat{B}}^t \|_2 \| \mathbf{xx}^\top\|_F\label{cscscs} \\
    &\leq \frac{2\eta}{rn} \|\mathbf{F}\|_F \; \| \mathbf{{W}}^\ast \|_2\label{normm}\\
    &\leq 2\eta\frac{\delta_k}{1-\delta_k} \bar{\sigma}_{\max, \ast}^2 \label{ff}
\end{align}
where \eqref{perp0} follows since $\mathbf{\hat{B}}^{t^\top} \mathbf{\hat{B}}_\perp^t = \mathbf{0}$, \eqref{i0} follows since $\mathbf{\hat{B}}^{t^\top} \mathbf{\hat{B}}^t = \mathbf{I}_k$, \eqref{cscs} and \eqref{cscscs} follows by the Cauchy-Schwarz inequality, \eqref{normm} follows by the orthonormality of $\mathbf{\hat{B}}^t$ and $\mathbf{\hat{B}}^\ast$ and \eqref{ff} follows by Lemma \ref{lem:bound_f} and the definition of $\bar{\sigma}_{\max, \ast}$.
Next, again for any $\mathbf{x} \in \mathbb{R}^k: \|\mathbf{x}\|_2=1$, 
\begin{align}
- \frac{2\eta}{rn} \langle \mathbf{Q}^t,  \mathbf{F}\mathbf{xx}^\top \mathbf{\hat{B}}^{t^\top} \rangle &=  - \frac{2\eta}{rn}\text{tr}( (\mathbf{\hat{B}}^t \mathbf{\hat{B}}^{t^\top} \mathbf{\hat{B}}^\ast  \mathbf{{W}}^{\ast^\top} - \mathbf{\hat{B}}^t \mathbf{F}^\top - \mathbf{\hat{B}}^\ast  \mathbf{{W}}^{\ast^\top})  \mathbf{F}\mathbf{xx}^\top \mathbf{\hat{B}}^{t^\top} )\nonumber \\
&= - \frac{2\eta}{rn}\text{tr}( (\mathbf{\hat{B}}^t \mathbf{\hat{B}}^{t^\top}  - \mathbf{I}_d) \mathbf{\hat{B}}^\ast  \mathbf{{W}}^{\ast^\top } \mathbf{F}\mathbf{xx}^\top \mathbf{\hat{B}}^{t^\top} ) + \frac{2\eta}{rn} \text{tr} (  \mathbf{Fx} \mathbf{x}^\top \mathbf{\hat{B}}^{t^\top} \mathbf{\hat{B}}^t \mathbf{F}^\top) \nonumber \\
&= - \frac{2\eta}{rn}\text{tr}(  \mathbf{\hat{B}}^\ast  \mathbf{{W}}^{\ast^\top}  \mathbf{F}\mathbf{xx}^\top \mathbf{\hat{B}}^{t^\top} \mathbf{B}_{\perp}^t) + \frac{2\eta}{rn} \mathbf{x}^\top \mathbf{F}^\top \mathbf{Fx} \nonumber \\
&= \frac{2\eta}{rn} \mathbf{x}^\top \mathbf{F}^\top \mathbf{Fx} \nonumber \\
&\leq \frac{2\eta}{rn} \|  \mathbf{F}\|_2^2 \nonumber \\
&\leq {2\eta} \frac{\delta_k^2 }{(1-\delta_k)^2}\bar{\sigma}^2_{\max, \ast}
\end{align}
Thus, we have the following bound on the second term of \eqref{2terms}:
\begin{align}
    \max_{\mathbf{x}:\|\mathbf{x}\|_2=1} \frac{2\eta}{rn} \langle \mathbf{Q}^t, \mathbf{W}^{t+1} \mathbf{xx}^\top \mathbf{\hat{B}}^{t^\top} \rangle &\leq 2\eta \bar{\sigma}^2_{\max, \ast}\left(\frac{\delta_{k}}{1-\delta_{k}} + \frac{\delta_{k}^2}{(1-\delta_{k})^2}\right) \leq 4\eta \frac{\delta_{k}}{(1-\delta_{k})^2} \bar{\sigma}^2_{\max, \ast} \label{use}
\end{align}
since $0\leq \delta_k \leq 1 \implies \delta_{k}^2 \leq \delta_k$. 
Therefore, using \eqref{psd0}, \eqref{2terms}, \eqref{ddd} and \eqref{use}, we have
\begin{align}
    \sigma^2_{\min} ( \mathbf{R}_{t+1}) &\geq 1 - {2\eta}{\delta'_{k}} - {4\eta}\frac{\delta_{k}}{(1-\delta_{k})^2} \bar{\sigma}^2_{\max, \ast} \geq 1 -  {4\eta}\frac{\bar{\delta}_{k}}{(1-\bar{\delta}_{k})^2} \bar{\sigma}^2_{\max, \ast}  \label{91}
\end{align}
where $\bar{\delta}_k = {\delta}'_k + {\delta}_k$.
This means that \begin{align} \label{valid}
    \|(\mathbf{R}^{t+1})^{-1}\|_2 \leq \left(1- {4\eta} \frac{\bar{\delta}_k}{(1-\bar{\delta}_k)^2}\bar{\sigma}_{\max,\ast}^2 \right)^{-1/2}
\end{align}
Note that $1- {4\eta} \frac{\bar{\delta}_k}{(1-\bar{\delta}_k)^2}\bar{\sigma}_{\max,\ast}^2$ is strictly positive as long as $\frac{\bar{\delta}_k}{(1-\bar{\delta}_k)^2} < 1$, which we will verify shortly, due to our earlier assumption that $\eta \leq 1/(4\bar{\sigma}_{\max,\ast}^2 )$.
Therefore, from \eqref{justR}, we have
\begin{align}
    \text{dist}(\mathbf{\hat{B}}^{t+1}, \mathbf{\hat{B}}^\ast)
    &\leq \frac{1}{\sqrt{1- {4\eta} \frac{\bar{\delta}_k}{(1-\bar{\delta}_k)^2}\bar{\sigma}_{\max,\ast}^2 } } \left(1 - {\eta} \bar{\sigma}_{\min,\ast}^2 E_0 + {2\eta} \frac{\bar{\delta}_{k}}{(1-\bar{\delta}_{k})^2}\bar{\sigma}^2_{\max,\ast} \right)\; \text{dist}(\mathbf{\hat{B}}^t,\mathbf{\hat{B}}^\ast) \nonumber
\end{align}
Next, let $\bar{\delta}_k < 16E_0/(25 \cdot 5\kappa^2)$. This implies that $\bar{\delta}_k < 1/5$.  Then $\bar{\delta}_k/(1-\bar{\delta}_k)^2 < 25 \bar{\delta}_k/16 \leq E_0/(5\kappa^2) \leq 1$, validating \eqref{valid}. Further, it is easily seen that 
\begin{align}
    1- \eta E_0 \bar{\sigma}_{\min,\ast}^2 + \eta \frac{\bar{\delta}_k}{(1-\bar{\delta}_k)^2} \bar{\sigma}_{\max,\ast}^2 &\leq 1 - 4 \eta \frac{\bar{\delta}_k}{(1-\bar{\delta}_k)^2}  \bar{\sigma}_{\max,\ast}^2  \nonumber \\
    &\leq 1 -  \eta E_0\bar{\sigma}_{\min,\ast}^2 /2
\end{align}
Thus 
\begin{align}
    \text{dist}(\mathbf{\hat{B}}^{t+1}, \mathbf{\hat{B}}^\ast)
    &\leq \left(1 -  \eta E_0\bar{\sigma}_{\min,\ast}^2 /2\right)^{1/2}\; \text{dist}(\mathbf{\hat{B}}^t,\mathbf{\hat{B}}^\ast). \nonumber
\end{align}
Finally, recall that $\bar{\delta}_k = \delta_k + \delta_k' = c\left(\frac{k^{3/2}\sqrt{\log(rn)}}{\sqrt{m}} + \frac{k \sqrt{d}}{\sqrt{rnm}}\right)$ for some absolute constant $c$. Choosing $m \geq c'(\kappa^4 k^3\log(rn)/E_0^2 + \kappa^4 k^2 d/(E_0^2 rn))$ for another absolute constant $c'$ satisfies $\bar{\delta}_k \leq  16E_0/(25 \cdot 5\kappa^2)$. Also, we have conditioned on two events, described in Lemmas \ref{lem:bound_f} and \ref{lemma:sth}, which occur with probability at least $1- e^{-110d} - e^{-110k^2 \log(rn)} \geq 1 - e^{-100 \min (k^2\log(rn), d)}$, completing the proof.




\end{proof}

Finally, Theorem \ref{thrm:linear} follows by recursively applying Lemma \ref{lem:main} and taking a union bound over all $t\in [T]$.

\subsection{Initialization}


As mentioned in the main body, our interpretation of Theorem \ref{thrm:linear} assumes that the initial distance is bounded above by a constant less than one, i.e., $E_0$ is bounded below by a constant greater than zero. We can achieve such an initialization without increasing the overall sample complexity via the Method-of-Moments algorithm, ignoring log factors.  To show this, we adapt a result from \cite{tripuraneni2020provable}.

\begin{customthm}{2}
[Theorem 3, \cite{tripuraneni2020provable}] In addition to Assumptions \ref{assump:tasks} and \ref{assump:norm}, suppose that $\mathbf{x}_{i}^{0,j} \sim \mathcal{N}(0, \mathbf{I}_d)$ independently for all $i \in [n], j\in [m]$. 
If each client $i \in [n]$ sends the server $\mathbf{Z}_i\! \coloneqq\!\frac{1}{m}\sum_{j=1}^m (y_i^{0,j})^2 \mathbf{x}_i^{0,j} (\mathbf{x}_i^{0,j})^\top$ and the server computes $\mathbf{\hat{U}} \mathbf{D}\mathbf{\hat{U}}^\top\!\leftarrow\! \text{rank-}k \text{ SVD}(\tfrac{1}{n}\textstyle{\sum_{i=1}^n\mathbf{Z}_i)}$ and sets $\mathbf{B}^0 =\mathbf{\hat{U}}$. Then, if $m \geq c \polylog(d,mn) \tilde{\kappa}^2 kd/(\sigma_{\min,\ast}^2 n)$,
\begin{align}
    \dist\left(\mathbf{B}^0, \mathbf{{B}}^\ast\right) &\leq \tilde{O}\left(\frac{\tilde{\kappa}^2 kd}{\sigma_{\min,\ast}^2 mn} \right)
\end{align}
with probability at least $1 - O((mn)^{-100})$ for some absolute constant $c$, where $\sigma_{\min,\ast} \coloneqq \sigma_{\min}\left(\frac{1}{\sqrt{kn}}\mathbf{W}^\ast\right)$, $\tilde{\kappa}\coloneqq \frac{\sigma_{\max}\left(\frac{1}{\sqrt{kn}}\mathbf{W}^\ast\right)}{\sigma_{\min,\ast}}$ and $\tilde{O}(\cdot)$ hides log factors.
\end{customthm}

The above result is a direct adaptation of Theorem 3 in \cite{tripuraneni2020provable} so we omit the proof. Note that the $\tfrac{1}{\sqrt{k}}$ factor in the definition of $\sigma_{\min,\ast}$ is a  scaling factor to enforce consistency with the assumption that $\|\mathbf{w}^\ast_i\| = \Theta(1)$ in \cite{tripuraneni2020provable} (since we have assumed $\|\mathbf{w}^\ast_i\| = \sqrt{k}$).  This result shows that $m_{\text{init}} = \tilde{\Omega}(\frac{\tilde{\kappa}^2 kd}{\sigma_{\min,\ast}^2 n})$ samples are required for proper initialization.
Since $ \frac{1}{\sigma_{\min,\ast}^2} \leq k{\kappa^2}$ (as $\sigma_{\max,\ast}^2 \geq 1/k$, see \eqref{justR}), the overall sample complexity does not increase by more than log factors.

\subsection{Proof Challenges} \label{app:challenges}

We next discuss two analytical challenges involved in proving Theorem \ref{thrm:linear}.


{\em (i) Row-wise sparse measurements.} Recall that the measurement matrices $\mathbf{A}_{i,j}^t$ have non-zero elements only in the $i$-th row. This property is beneficial in the sense that it allows for distributing  computation across the $n$ clients. However, it also means that the operators $\{\frac{1}{\sqrt{m}}\mathcal{A}^t\}_t$ \emph{do not} satisfy Restricted Isometry Property (RIP), which therefore prevents us from using standard RIP-based analysis. The RIP is defined as follows:
\begin{definition}[Restricted Isometry Property]
An operator $\mathcal{B}: \mathbb{R}^{n \times d} \rightarrow \mathbb{R}^{nm}$ satisfies the $k$-RIP with parameter $\delta_k \in [0,1)$ if and only if
\begin{equation}
    (1-\delta_k) \|\mathbf{M}\|_{F}^2 \leq \|\mathcal{B}(\mathbf{M})\|_2^2 \leq (1+\delta_k) \|\mathbf{M}\|_{F}^2
\end{equation}
holds simultaneously for all $\mathbf{M} \in \mathbb{R}^{n\times d}$ of rank at most $k$.
\end{definition}

\begin{claim} \label{claim:no_rip}
Let ${\mathcal{A}}: \mathbb{R}^{r n \times d} \rightarrow \mathbb{R}^{rnm}$ such that ${\mathcal{A}}(\mathbf{M}) = [\langle \mathbf{e}_{i} (\mathbf{x}_i^j)^\top, \mathbf{M}\rangle]_{1\leq i \leq rn, 1\leq j \leq m}$, and let the samples $\mathbf{x}_i^j$ be i.i.d. sub-gaussian random vectors with mean $\mathbf{0}_d$ and covariance $\mathbf{I}_d$.
Then if $m \leq d/2$, with probability at least $1 - e^{-cd}$ for some absolute constant $c$, $\frac{1}{\sqrt{m}}{\mathcal{A}}$ does not satisfy 1-RIP for any constant $\delta_1 \in [0,1)$.
\end{claim}
\begin{proof}
Let $\mathbf{M} = \mathbf{e}_1 (\mathbf{x}_1^1)^\top$.
Then \begin{align}
    \|\frac{1}{\sqrt{m}}\mathcal{A}(\mathbf{M})\|_2^2 &= \frac{1}{{m}}\sum_{i=1}^{rn} \sum_{j=1}^m \langle \mathbf{e}_i (\mathbf{x}_i^j)^\top , \mathbf{e}_1 (\mathbf{x}_1^1)^\top\rangle^2 \nonumber \\
    &= \frac{1}{{m}} \|\mathbf{x}_1^1 \|_2^4 + \frac{1}{m} \sum_{j=2}^m \langle \mathbf{x}_1^j, \mathbf{x}_1^1 \rangle^2  \nonumber \\
    &\geq \frac{1}{{m}} \|\mathbf{x}_1^1 \|_2^4 
\end{align}
Also observe that $\|\mathbf{M}\|_F^2 = \|\mathbf{x}_1^1 \|_2^2$. 
Therefore, we have
\begin{align}
 \mathbb{P}\left(\frac{ \left\|\frac{1}{\sqrt{m}}\mathcal{A}(\mathbf{M})\right\|_2^2}{\left\|\mathbf{M}\right\|_F^2} \geq \frac{d}{2m}\right) &\geq  \mathbb{P}\left(\frac{ \frac{1}{{m}} \left\|\mathbf{x}_1^1 \right\|_2^4 }{\left\|\mathbf{x}_1^1 \right\|_2^2} \geq \frac{d}{2m}\right) \nonumber \\ 
&= \mathbb{P}\left( \left\|\mathbf{x}_1^1 \right\|_2^2 \geq \frac{d}{2}\right)  \nonumber \\
    &=  1 - \mathbb{P}\left(\|\mathbf{x}_1^1 \|_2^2 - d \leq \frac{-d}{2}\right) \nonumber \\
    &\geq 1 - e^{-c d}
\end{align}
where the last inequality follows for some absolute constant $c$ by the sub-exponential property of $\|\mathbf{x}_1^1 \|_2^2$ and the fact that $\mathbb{E}[\|\mathbf{x}_1^1 \|_2^2] = d$.
Thus, with probability at least $1 - e^{-c d}$, $\left\|\frac{1}{\sqrt{m}}\mathcal{A}(\mathbf{M})\right\|_2^2 \geq \frac{d}{2m} \left\|\mathbf{M}\right\|_2^2$, meaning that $\frac{1}{\sqrt{m}}\mathcal{A}$ does not satisfy 1-RIP with high probability if $m \leq \frac{d}{2}$.
\end{proof}

Claim \ref{claim:no_rip} shows that we cannot use the RIP to show $\mathcal{O}(d/(rn))$ sample complexity for $m$ - instead, this approach would require $m = \Omega(d)$. Fortunately, we do not need concentration of the measurements for all rank-$k$ matrices $\mathbf{M}$, but only a particular class of rank-$k$ matrices that are {\em row-wise incoherent}, due to the row-wise incoherence of $\mathbf{W}^\ast$ (see Assumption \ref{assump:norm} and Definition \ref{def:inco}). Leveraging the row-wise incoherence of the matrices being measured allows us to show that we only require $m = \Omega(k^3 \log(rn) + k^2 d/(rn))$ samples per user (ignoring dimension-independent constants).

{\em (ii) Non-symmetric updates.} 
Existing analyses for nonconvex matrix sensing study algorithms with symmetric update schemes for the factors $\mathbf{W}$ and $\mathbf{B}$, either alternating minimization, e.g. \citep{Jain_2013}, or alternating gradient descent, e.g. \citep{tu2016low}. 
Here we show contraction due to the gradient descent step in principal angle distance, differing from the standard result for gradient descent using Procrustes distance  \citep{tu2016low, zheng2016convergence, park2018finding}. 
We combine aspects of both types of analysis in our proof. 

\bibliography{refs}
\bibliographystyle{plainnat}



\end{document}